\documentclass[11pt]{article}
\usepackage[top=2.54cm,left=2.54cm,right=2.54cm,bottom=2.54cm]{geometry}
\usepackage{natbib}

\usepackage{microtype}
\usepackage{graphicx}
\usepackage{subcaption}
\usepackage{booktabs}
\usepackage{bbm}
\usepackage{changepage}
\usepackage{hyperref}

\usepackage{amsmath}
\usepackage{amssymb}
\usepackage{mathtools}
\usepackage{amsthm}
\usepackage[capitalize,noabbrev]{cleveref}

\theoremstyle{plain}
\newtheorem{theorem}{Theorem}[section]
\newtheorem{proposition}[theorem]{Proposition}
\newtheorem{lemma}[theorem]{Lemma}
\newtheorem{corollary}[theorem]{Corollary}
\theoremstyle{definition}
\newtheorem{definition}[theorem]{Definition}

\theoremstyle{remark}

\usepackage[textsize=tiny]{todonotes}
\usepackage{enumitem}
\usepackage{aux}
\newcommand{\la}{\langle}
\newcommand{\ra}{\rangle}
\newcommand{\qvalue}{Q}
\newcommand{\vvalue}{V}

\newcommand{\reward}{r}

\newcommand{\sah}{\mathcal{S} \times \mathcal{A} \times [H]}
\usepackage{colortbl}
\definecolor{LightCyan}{rgb}{0.8, 1, 0.9}

\def \algname {\text{LSVI-UCB++}}
\def\dmin{\Delta_{\textnormal{min}}}
\allowdisplaybreaks

\begin{document}
\title{Gap-Dependent Bounds for Nearly Minimax Optimal Reinforcement Learning with Linear Function Approximation}

\author{Haochen Zhang, Zhong Zheng, and Lingzhou Xue\footnote{Lingzhou Xue (Email: lzxue@psu.edu) is the corresponding author. }
\\
Department of Statistics, The Pennsylvania State University
}
\date{}

\maketitle

\begin{abstract}
We study gap-dependent performance guarantees for nearly minimax-optimal algorithms in reinforcement learning with linear function approximation. While prior works have established gap-dependent regret bounds in this setting, existing analyses do not apply to algorithms that achieve the nearly minimax-optimal worst-case regret bound $\tilde{O}(d\sqrt{H^3K})$, where $d$ is the feature dimension, $H$ is the horizon length, and $K$ is the number of episodes. We bridge this gap by providing the first gap-dependent regret bound for the nearly minimax-optimal algorithm LSVI-UCB++ \citep{he2023nearly}. Our analysis yields improved dependencies on both $d$ and $H$ compared to previous gap-dependent results. Moreover, leveraging the low policy-switching property of LSVI-UCB++, we introduce a concurrent variant that enables efficient parallel exploration across multiple agents and establish the first gap-dependent sample complexity upper bound for online multi-agent RL with linear function approximation, achieving linear speedup with respect to the number of agents.

\end{abstract}

\section{Introduction}
Reinforcement learning (RL) \citep{1998Reinforcement} provides a formal framework for sequential decision-making, where an agent learns to maximize cumulative rewards through iterative interaction with a dynamic environment. In modern RL, the design of efficient algorithms for problems with large state and action spaces has become a central challenge. A widely adopted approach is function approximation, which enables efficient learning by representing value functions using a restricted function class.

Recently, a large body of literature has focused on establishing regret upper bounds for RL with linear function approximation, where the value function is represented as a linear function of known features. These works can be broadly categorized into two main classes.

The first class considers the model-free setting \citep{he2021logarithmic} or the linear Markov decision process (MDP) \citep{jin2020provably}, where the transition dynamics and reward functions are assumed to be linear with regard to the known features. In particular, \citet{jin2020provably} proposed the first provably efficient algorithm, LSVI-UCB. It is based on the principle of optimism in the face of uncertainty, and achieves the regret bound $\tilde{O}(\sqrt{d^3 H^4 K})$. Here, $\tilde{O}(\cdot)$ hides logarithmic factors, $d$ denotes the feature dimension, $H$ is the horizon length, and $K$ is the number of episodes. Subsequently, \citet{he2023nearly} proposed the LSVI-UCB++ algorithm, which leverages adaptive weighted ridge regression and pessimism techniques, and improves the regret bound to the nearly minimax-optimal rate $\tilde{O}(d\sqrt{H^3 K})$ \citep{zhou2021nearly}. \citet{agarwal2023vo} achieved similar near-optimal results in the time-homogeneous setting.

The second class focuses on the model-based setting \citep{he2021logarithmic} or the linear mixture MDPs \citep{ayoub2020model, zhou2021nearly, zhou2022computationally}, where the transition probability is modeled as a linear combination of multiple base models. \citet{ayoub2020model} proposed the UCRL-VTR algorithm with a regret bound $O(d\sqrt{H^4K})$. Later, \citet{zhou2021nearly} developed a near-optimal algorithm for time-inhomogeneous linear mixture MDPs. Furthermore, \citet{zhou2022computationally} proposed a near-optimal, horizon-free algorithm for time-homogeneous linear mixture MDPs.

In practice, RL algorithms often outperform their worst-case guarantees when a positive suboptimality gap exists (i.e., the optimal action at each state is better than suboptimal actions by a non-negligible margin). The gap-dependent analysis is well studied in the tabular setting \citep{simchowitz2019non, dann2021beyond, yang2021q, xu2020reanalysis, zheng2024gap, zhang2025q, zhang2025regret, chen2025sharp}, and it has also gained traction in RL with linear function approximation. \citet{he2021logarithmic} pioneered this direction, showing that LSVI-UCB \citep{jin2020provably} achieves an expected regret bound $\tilde{O}(d^3 H^5/ \dmin)$ under linear MDPs, and UCRL-VTR \citep{ayoub2020model} attains an expected regret bound $\tilde{O}(d^2 H^5/ \dmin)$ under linear mixture MDPs, where the minimum gap $\dmin$ is defined as the infimum of the positive suboptimality gap $\Delta_h(s,a)$ over all state-action-step triples $(s,a,h)$.  \citet{papini2021reinforcement, zhang2024achieving} also established the same logarithmic guarantees for the expected regret of different algorithms. Collectively, these results demonstrate that RL with linear function approximation can achieve logarithmic regret.

Despite these advances, existing gap-dependent analyses do not yet cover algorithms that achieve the nearly minimax-optimal worst-case regret bound $\tilde{O}(d\sqrt{H^3K})$, e.g., LSVI-UCB++ \citep{he2023nearly} or UCRL-VTR+ \citep{zhou2021nearly}. As a result, current gap-dependent regret bounds exhibit a loose dependence on the feature dimension $d$ and horizon length $H$, i.e., $\tilde{O}(d^3 H^5/\dmin)$ under linear MDPs or $\tilde{O}(d^2 H^5/\dmin)$ under linear mixture MDPs, failing to reflect their superior efficiency. This suggests that existing analyses may not fully capture the potential of minimax-optimal algorithms in the presence of suboptimality gaps. Consequently, this gap leads to the following open question:
\begin{center}
    \textit{Can we establish improved gap-dependent regret upper bounds for nearly minimax-optimal algorithms in RL with linear function approximation?}
\end{center}
Improving the dependence on the feature dimension $d$ and the horizon length $H$ in regret bounds is not only of theoretical interest but also carries significant practical implications. In domains such as robotics and healthcare, agents often operate in complex, high-dimensional environments, where function approximation is essential for tractable learning and control. For instance, in robotic manipulation and grasping tasks, the state space often consists of high-dimensional continuous observations, including robot joint configurations, object positions, and other environmental variables \citep{kober2013reinforcement,toner2023opportunities}. Similarly, in healthcare applications such as optimizing cell growth conditions or sequential treatment planning, agents must handle rich biological state representations and long sequences of decisions, making effective function approximation critical for practical deployment \citep{al2024reinforcement}. These applications illustrate that reducing the theoretical dependence on $d$ and $H$ in regret guarantees can directly enhance both the sample efficiency and the practical feasibility of RL algorithms in high-dimensional, long-horizon tasks.

In this paper, we provide an affirmative answer to the open question above by establishing an improved gap-dependent regret upper bound for the nearly minimax-optimal LSVI-UCB++ \citep{he2023nearly} in the linear MDP setting.

Beyond its regret guarantees, LSVI-UCB++ also exhibits infrequent policy updates, which is particularly advantageous for real-world applications where a single agent's data collection capacity is inherently limited and multi-agent collaboration is required. While multi-agent reinforcement learning (MARL) can improve sample efficiency through parallel exploration, a primary challenge is the high communication cost associated with policy synchronization. The low policy-switching property of LSVI-UCB++ suggests that it can serve as a foundation for a concurrent variant where agents explore in parallel with infrequent synchronizations. This insight motivates the development of our concurrent RL variant, which seeks to accelerate learning without compromising communication efficiency.

\textbf{Our Contributions.} Our contributions are summarized as:

(i) \textbf{Improved Gap-Dependent Regret Bound:} We establish the first gap-dependent regret bound for the nearly minimax-optimal algorithm LSVI-UCB++ with linear function approximation. Our results significantly improve the dependence on the feature dimension $d$ and the horizon length $H$ relative to existing literature (see \Cref{tab:comparison} for a detailed comparison). Moreover, we show that our refined regret bound implies an improved Probably Approximately Correct (PAC) sample complexity \citep{kakade2003sample}, reducing the dependence on the accuracy parameter $\epsilon$ of the number of episodes required to identify an $\epsilon$-optimal policy, from the worst-case rate of $\tilde{O}(1/\epsilon^2)$ to $\tilde{O}(1/\epsilon)$.
\begin{table}[ht]
\centering
\begin{tabular}{|l|l|}
\hline
\multicolumn{1}{|c|}{\textbf{Algorithm}} & 
\multicolumn{1}{c|}{\textbf{$\mathbb{E}[\text{Regret}(T)]$}} \\
\hline
LSVI-UCB &  $\hat{O}(d^3 H^5 / \dmin)$ \citep{he2021logarithmic} \\
\hline
UCRL-VTR  & $\hat{O}(d^2 H^5 / \dmin)$ \citep{he2021logarithmic}\\
\hline
LSVI-UCB  & $\hat{O}(d^3 H^5 / \dmin)$ \citep{papini2021reinforcement}\\
\hline
Cert-LSVI-UCB  & $\hat{O}(d^3 H^5 / \dmin)$ \citep{zhang2024achieving}\\
\hline
LSVI-UCB++ & $\hat{O}(d^2 H^4 / \dmin)$ (Ours) \\
\hline
\end{tabular}
\caption{Comparison of gap-dependent regret bounds for different RL algorithms with linear function approximation.  We denote $T$ as the total number of steps, $d$ as the feature dimension, $H$ as the horizon length, and $\Delta_{\min}$ as the minimum gap. The notation $\hat{O}(\cdot)$ hides both $\log T$ dependence and lower-order terms. Our result achieves the tightest dependence on $d$ and $H$.}
\label{tab:comparison}
\end{table}

(ii) \textbf{Concurrent RL Algorithm:} Leveraging the low policy-switching property of LSVI-UCB++, we introduce a concurrent variant, Concurrent LSVI-UCB++, that enables efficient parallel exploration. We establish the first gap-dependent sample complexity upper bound for online MARL with linear function approximation, achieving linear speedup with respect to the number of agents. 

(iii) \textbf{Technical Novelty:} 
Refining the worst-case guarantees of LSVI-UCB++ into gap-dependent bounds requires new techniques for bounding the partial sums of both bonuses and estimated variances. 
To control the partial sums of bonuses, Lemma \ref{sumbonusmain} introduces a surrogate matrix that admits a one-step recursive structure, allowing us to obtain tight bounds. 
For the partial sum of estimated variances, Lemma \ref{LEMMA: TOTAL-ESTIMATE-VARIANCEmain} establishes a recursive relationship for the partial sums of value function estimation errors across different steps (Lemma \ref{lemma:transition} and \ref{lemma:transition1}), which yields an upper bound on the estimated variance. Together, these novel technical developments provide the necessary machinery to achieve refined gap-dependent guarantees. See \Cref{regretsketch} for details.

\section{Related Work}
\textbf{Near-Optimal RL.} In tabular RL, algorithms are typically categorized into model-based and model-free approaches. Model-based methods explicitly estimate the transition and reward models from data and plan based on the learned models, while model-free methods directly maintain value function estimates and act greedily. A large body of work has focused on model-based algorithms \citep{agarwal2020model, agrawal2017optimistic, auer2008near, azar2017minimax, dann2019policy, kakade2018variance, zanette2019tighter, zhang2024settling, zhang2021reinforcement, zhou2023sharp}. Notably, \citet{zhang2024settling} proposed an algorithm achieving a regret bound of $\tilde{O}(\min\{\sqrt{SAH^2T},\, T\})$, which matches the information-theoretic lower bound. Model-free approaches have also been extensively studied~\citep{jin2018q, li2023breaking, menard2021ucb, yang2021q, zhang2020almost,zhang2025regret}, and several works~\citep{zhang2020almost, menard2021ucb,li2023breaking, zhang2025regret} achieved the near-optimal regret bound $\tilde{O}(\sqrt{SAH^2T})$. Several works also focus on federated RL, including \citet{zheng2023federated, labbi2024federated,zheng2024federated,zhang2025regret}, with the last three attaining the near-optimal regret.

The literature on RL with linear function approximation can be divided based on the structural assumptions imposed on the MDP. One line of work considers linear MDPs \citep{yang2019sample, jin2020provably,wei2021learning, wagenmaker2022first, he2023nearly,zhanghorizon}. \citet{yang2019sample} provide the first sample-efficient algorithm under a generative model. Subsequently, \citet{jin2020provably} propose the first provably efficient algorithm LSVI-UCB that achieves a regret bound $\tilde{O}(\sqrt{d^3 H^3 T})$ without access to a generative model. More recently, \citet{he2023nearly} develop the LSVI-UCB++ algorithm, improving regret to the nearly minimax-optimal bound $\tilde{O}(d\sqrt{H^3 K})$~\citep{zhou2021nearly}. Another line of work studies linear mixture MDPs \citep{jia2020model, ayoub2020model, modi2020sample, zhou2021nearly, zhou2021provably, zhou2022computationally}. \citet{jia2020model} and \citet{ayoub2020model} proposed the UCRL-VTR algorithm for episodic MDPs, achieving a regret bound of $\tilde{O}(d\sqrt{H^4K})$. Subsequently, \citet{zhou2021nearly} developed a near-optimal algorithm for time-inhomogeneous linear mixture MDPs by proposing a Bernstein-type concentration inequality for self-normalized martingales. Furthermore, \citet{zhou2022computationally} proposed a near-optimal, horizon-free algorithm for time-homogeneous linear mixture MDPs. \citet{zhang2023optimal} provide the near-optimal horizon-free sample complexity in the reward-free time-homogeneous setting.

\textbf{Gap-Dependent RL.} In tabular RL, early works establish asymptotic logarithmic regret bounds~\citep{auer2007logarithmic, tewari2008optimistic}. Later, non-asymptotic bounds have been derived in  multiple works \citep{jaksch2010near, ok2018exploration, simchowitz2019non, dann2021beyond, yang2021q, xu2021fine, zheng2024gap, chen2025sharp, zhang2025q}. For model-based algorithms, \citet{simchowitz2019non, dann2021beyond, chen2025sharp} obtain fine-grained gap-dependent regret bounds. For model-free algorithms, \citet{yang2021q} provided the first gap-dependent regret bound for UCB-Hoeffding~\citep{jin2018q}, which was subsequently refined by the AMB algorithm in \citet{xu2021fine}. Later, \citet{zheng2024gap} further improve the result by reanalyzing the UCB-Advantage algorithm \citep{zhang2020almost} and the Q-EarlySettled-Advantage algorithm \citep{li2023breaking}.  \citet{zhang2025gapdependent,zhang2025regret} extend gap-dependent analysis to federated $Q$-learning settings. More recently,  \citet{zhang2025q} provide the first fine-grained gap-dependent regret upper bound for UCB-Hoeffding.

For RL with linear function approximation, gap-dependent regret bounds have also been studied. For online RL, \citet{he2021logarithmic} provided the first gap-dependent regret bound, showing that LSVI-UCB \citep{jin2020provably} achieves $\tilde{O}(d^3 H^5 / \dmin)$ in linear MDPs, and UCRL-VTR~\citep{ayoub2020model} achieves $\tilde{O}(d^2 H^5 / \dmin)$ in linear mixture MDPs. Subsequent works~\citep{papini2021reinforcement, zhang2024achieving} improved these results, showing that LSVI-UCB and Cert-LSVI-UCB can achieve constant regret with high probability, independent of the total number of steps $T$. 

\textbf{MARL with Linear Function Approximation.}  In MARL, \citet{dubey2021provably} propose the Coop‑LSVI algorithm, which extends LSVI‑UCB \citep{jin2020provably} from the single-agent setting to cooperative multi-agent parallel RL, achieving provably efficient learning with a limited number of communication rounds among agents. Building on LSVI‑UCB as well, \citet{min2023cooperative} introduce an asynchronous variant that preserves the same regret bound while improving communication efficiency relative to \citet{dubey2021provably}. More recently, \citet{hsu2024randomized} develop two randomized-exploration algorithms that attain the same regret and communication guarantees as \citet{min2023cooperative}.

\section{Preliminaries}
\textbf{Notation.} In this paper, we adopt the convention that $0/0 = 0$. For any $C \in \mathbb{N}_+$, we write $[C] := \{1, 2, \ldots, C\}$. We denote by $\mathbb{I}[x]$ the indicator function, which takes the value 1 if the event $x$ is true and 0 otherwise. For a vector $\xb \in \mathbb{R}^d$ and a matrix $\bSigma \in \mathbb{R}^{d \times d}$, we use $\|\xb\|_2$ to denote the Euclidean norm, and $\|\xb\|_{\bSigma} := \sqrt{\xb^\top \bSigma \xb}$. For any $a \leq b \in \RR$, $x \in \RR$, let $[x]_{[a,b]}$ denote the truncate function $a\cdot \mathbb{I}[x \leq a] + x \cdot \mathbb{I}[a < x < b] + b \cdot \mathbb{I}[b \leq x]$.

We then introduce the mathematical framework of episodic Markov decision processes. 

\textbf{Episodic Markov Decision Processes.}
An episodic MDP is denoted as $\mathcal{M}:=(\mathcal{S}, \mathcal{A}, H, \mathbb{P}, r)$, where $\mathcal{S}$ is a measurable space with possibly infinite number of states, $\mathcal{A}$ is the finite set of actions, $H$ is the number of steps in each episode, $\mathbb{P}:=\{\mathbb{P}_h\}_{h=1}^H$ is the transition kernel so that $\mathbb{P}_h(\cdot \mid s, a)$ characterizes the distribution over the next state given the state-action pair $(s,a)$ at step $h$, and $r:=\{r_h\}_{h=1}^H$ is the collection of reward functions. We assume that $r_h(s,a)\in [0,1]$ is a {deterministic} function of $(s,a)$. 
	
In each episode, an initial state $s_1$ is selected arbitrarily by an adversary. Then, at each step $h \in[H]$, an agent observes a state $s_h \in \mathcal{S}$, picks an action $a_h \in \mathcal{A}$, receives the reward $r_h = r_h(s_h,a_h)$ and then transitions to the next state $s_{h+1}$. The episode ends when an absorbing state $s_{H+1}$ is reached.

For convenience, we denote $\mathbb { P }_{s,a,h}f = \mathbb{E}_{s'\sim \mathbb{P}_h(\cdot|s,a)}f(s')$, $\mathbbm { 1 }_s f = f(s)$ and  $\mathbb{V}_{s,a,h}(f) = \mathbb{P}_{s,a,h}f^2-(\mathbb{P}_{s,a,h}f)^2$ for any function $f: \mathcal{S} \rightarrow \mathbb{R}$ and state-action-step triple $(s,a,h)$.

\textbf{Policy and Value Functions.}
	A policy $\pi$ is a collection of $H$ functions $\left\{\pi_h: \mathcal{S} \rightarrow \Delta^\mathcal{A}\right\}_{h \in[H]}$, where $\Delta^\mathcal{A}$ is the set of probability distributions over $\mathcal{A}$. A policy is deterministic if for any $s\in\mathcal{S}$,  $\pi_h(s)$ concentrates all the probability mass on an action $a\in\mathcal{A}$. In this case, we denote $\pi_h(s) = a$. 

Let $V_h^\pi: \mathcal{S} \rightarrow \mathbb{R}$ denote the state value function at step $h$ under policy $\pi$, so that $V_h^\pi(s)$ represents the expected return when starting from state $s_h = s$ and following $\pi$. Formally, 
$$V_h^\pi(s):=\sum_{t=h}^H \mathbb{E}_{(s_{t},a_{t})\sim(\mathbb{P}, \pi)}\left[r_{t}(s_{t},a_{t}) \left. \right\vert s_h = s\right].$$ 
We also denote by $Q_h^\pi: \mathcal{S} \times \mathcal{A} \rightarrow \mathbb{R}$ the state-action value function at step $h$ under policy $\pi$, so that $Q_h^\pi(s,a)$ represents the expected return when starting from state-action pair $(s_h,a_h) = (s,a)$ and following the policy $\pi$:
 \begin{align*}
     Q_h^\pi(s,a):= r_h(s,a)+\sum_{t=h+1}^H\mathbb{E}_{(s_{{t}},a_{t})\sim(\mathbb{P}, \pi)}\left[ r_{t}(s_{t},a_{t}) \left. \right\vert (s_h,a_h)=(s,a)\right].
 \end{align*}
Since the action space and the horizon are all finite, there exists an optimal policy $\pi^{\star}$ that achieves the optimal value $V_h^{\star}(s)=\sup _\pi V_h^\pi(s)=V_h^{\pi^\star}(s)$ for all $(s,h) \in \mathcal{S} \times [H]$  \citep{azar2017minimax}. The Bellman equation and the Bellman optimality equation can be expressed as
    \begin{equation}\label{eq_Bellman}
	\begin{aligned}
		\left\{
		\begin{array}{l}
			V_h^{\pi}(s) = \mathbb{E}_{a' \sim \pi_h(s)}[Q_h^{\pi}(s, a')] \\
			Q_h^{\pi}(s, a) := r_h(s, a) + \mathbb{E}_{s' \sim \mathbb{P}_h(\cdot|s,a)} V_{h+1}^{\pi}(s') \\
			V_{H+1}^{\pi}(s) = 0, \forall (s, a, h) \in \mathcal{S} \times \mathcal{A} \times [H],
		\end{array}
		\right. 
		\left\{
		\begin{array}{l}
			V_h^{\star}(s) = \max_{a' \in \mathcal{A}} Q_h^{\star}(s, a') \\
			Q_h^{\star}(s, a) := r_h(s, a) + \mathbb{E}_{s' \sim \mathbb{P}_h(\cdot|s,a)} V_{h+1}^{\star}(s') \\
			V_{H+1}^{\star}(s) = 0, \forall (s, a, h) \in \mathcal{S} \times \mathcal{A} \times [H].
		\end{array}
		\right.
	\end{aligned}
\end{equation}
For any algorithm over $K$ episodes, let $\pi^{k}$ be the policy used in the $k$-th episode, and $s_1^{k}$ be the corresponding initial state. The regret  over $T=HK$ steps is $$\mbox{Regret}(T) := \sum_{k=1}^{K}\left(V_1^\star - V_1^{\pi^{k}}\right)(s_1^{k}).$$

\textbf{Suboptimality Gap.} For any given MDP, we can provide the following formal definition of the suboptimality gap.
\begin{definition}\label{def_sub}
    For any $(s,a,h) \in \sah$, the suboptimality gap is defined as $\Delta_h(s,a) := V_h^\star(s) - Q_h^\star(s,a).$
\end{definition}
\Cref{eq_Bellman} implies that for any $ (s,a,h)$, $\Delta_h(s,a) \geq 0$. Then, it is natural to define the minimum gap, which is the minimum non-zero suboptimality gap.
\begin{definition}\label{def_minsub}
    We define the \textbf{minimum gap} as $$\dmin := \inf\left\{\Delta_h(s,a) \mid \Delta_h(s,a)>0,\ \forall(s,a,h) \right\}.$$
\end{definition}
If the set $\{\Delta_h(s,a) \mid \Delta_h(s,a)>0,(s,a,h)\in \sah\}$ is empty, then all policies are optimal, leading to a degenerate MDP. Therefore, we assume that the set is nonempty and $\dmin > 0$ in the rest of this paper. Definitions \ref{def_sub} and \ref{def_minsub} and the non-degeneration assumption are standard in the literature of gap-dependent analysis \citep{simchowitz2019non, dann2021beyond, yang2021q, xu2020reanalysis, he2021logarithmic,zhang2024achieving,zheng2024gap,zhang2025q,zhang2025regret}.

\textbf{Linear Markov Decision Processes.} In this work, we focus on the linear Markov Decision Process \citep{jin2020provably,he2021logarithmic,he2023nearly}, which is formally defined as follows:
\begin{definition}\label{assumption:linear-MDP}
An episodic MDP $\mathcal{M}:=(\mathcal{S}, \mathcal{A}, H, \mathbb{P}, r)$ is a linear MDP if for any $h\in[H]$, there exists an unknown measure $\btheta_h(\cdot): \cS\rightarrow \RR^d$ and a known feature mapping $\bphi: \cS \times \cA \rightarrow \RR^d$, such that for each state-action pair $(s,a) \in \cS \times \cA$ and state $s' \in \cS$, we have
$$\PP_h(s'|s,a) = \left\la \bphi(s,a), \btheta_h(s')\right\ra.$$
\end{definition}
For simplicity, we assume that the norms of $\btheta_h(\cdot)$ and $\bphi(\cdot,\cdot)$ are upper bounded as follows: $\|\bphi(s,a)\|_2 \leq 1$ and $\left\|\btheta_h(s)\right\|_2\leq \sqrt{d}$ for any $(s,a,h)\in \cS\times \cA \times [H]$.
For linear MDPs, we have the following property:
\begin{proposition}[Proposition 3.3 of \citealt{he2021logarithmic}]\label{prop:linearq}
For any policy $\pi$, there exist weights $\{\wb_h^\pi\}_{h=1}^H$ such that for any state-action-step triple $(s,a,h)\in \cS \times \cA \times[H]$, we have $\PP_{s,a,h} V_{h+1}^\pi = \la \bphi(s,a), \wb_h^\pi\ra$. 
\end{proposition}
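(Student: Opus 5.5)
The statement is the linearity of $\PP_{s,a,h}V_{h+1}^\pi$ in the feature $\bphi(s,a)$. The proof is a direct computation from the definition of a linear MDP (\Cref{assumption:linear-MDP}), so no induction over $h$ is needed. I would fix an arbitrary policy $\pi$ and a step $h\in[H]$, and regard $V_{h+1}^\pi:\cS\to\RR$ as a fixed function. It is bounded, since rewards lie in $[0,1]$, so $0\le V_{h+1}^\pi\le H$; for $h=H$ it is identically zero by the Bellman equation \eqref{eq_Bellman}.

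\textbf{Step 1: define the weights.} Using $\PP_h(s'\mid s,a)=\la\bphi(s,a),\btheta_h(s')\ra$, I would write
\begin{align*}
\PP_{s,a,h}V_{h+1}^\pi=\int_{\cS} V_{h+1}^\pi(s')\,\la\bphi(s,a),\btheta_h(s')\ra\,\mathrm{d}s' .
\end{align*}
The integral is with respect to the base measure; $\btheta_h$ is a vector of signed measures. Pulling the fixed vector $\bphi(s,a)$ out of the integral, coordinate by coordinate, gives $\la\bphi(s,a),\wb_h^\pi\ra$ with
\begin{align*}
\wb_h^\pi:=\int_{\cS}V_{h+1}^\pi(s')\,\btheta_h(s')\,\mathrm{d}s'\in\RR^d .
\end{align*}
This vector depends only on $\pi$ and $h$, not on $(s,a)$, which is exactly what the statement requires.

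\textbf{Step 2: justify the exchange.} The only point needing care is that $\wb_h^\pi$ is well defined and finite, so that linearity of the integral applies. I would handle it in the standard way for linear MDPs (as in \citealt{jin2020provably}): interpret $\btheta_h$ as $d$ signed measures of bounded total variation, so that integrating the bounded measurable function $V_{h+1}^\pi$ against each coordinate is finite. One can add the norm bound $\|\wb_h^\pi\|_2\le H\sqrt d$ under the usual convention $\|\int|\btheta_h|\|_2\le\sqrt d$, although the statement does not require it.

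For a stochastic $\pi$ nothing changes, since $V_{h+1}^\pi$ is still a fixed bounded function. The measure-theoretic exchange in Step 2 is the only step that is more than algebra, and it is mild.
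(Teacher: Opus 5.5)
Your proof is correct and matches the standard argument behind the cited result. The paper gives no proof of its own and imports the statement from \citet{he2021logarithmic}, following \citet{jin2020provably}: write $\PP_{s,a,h}V_{h+1}^\pi$ as an integral against $\la\bphi(s,a),\btheta_h(s')\ra$ and pull $\bphi(s,a)$ outside to get $\wb_h^\pi=\int_{\cS}V_{h+1}^\pi(s')\btheta_h(s')\,\mathrm{d}s'$, where your requirement that each coordinate integral be finite is the regularity the standard treatment assumes implicitly.
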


\section{Theoretical Guarantee}
\subsection{Algorithm Review}
We begin by reviewing the nearly minimax-optimal LSVI-UCB++ algorithm proposed by \citet{he2023nearly}, as presented in the following \Cref{algorithm1}.
\begin{algorithm}[H]
    \caption{$\algname$}
    \begin{algorithmic}[1]\label{algorithm1}
   \REQUIRE Regularization parameter $\lambda>0$, confidence radii
   $\beta, \bar\beta,\tilde{\beta}>0$. Episode number $K\in\mathbb{N}_+$.
   \STATE Initialize $k_{\text{mid}},k_\text{last} \leftarrow 0$, and for each step $h\in[H]$, set $\bSigma_{0,h},\bSigma_{1,h}= \lambda \Ib_{d}$. 
   \STATE For each step $h\in[H]$ and state-action $(s,a)\in \cS \times \cA$, set $ Q_{0,h}(s,a)= H, \check{Q}_{0,h}(s,a) = 0$.
\FOR{episode $k=1,\ldots,K$}
\STATE Receive the initial state $s_1^k$. 
    \FOR{step $h=H,\ldots,1$}       
        \STATE 
    $\hat{\wb}_{k,h}=\bSigma_{k,h}^{-1}\sum_{i=1}^{k-1}\bar\sigma_{i,h}^{-2}\bphi(s_h^i,a_h^i)\vvalue_{k,h+1}(s_{h+1}^{i})$. 
        \STATE $\check{\wb}_{k,h}=\bSigma_{k,h}^{-1}\sum_{i=1}^{k-1}\bar\sigma_{i,h}^{-2}\bphi(s_h^i,a_h^i)\check{\vvalue}_{k,h+1}(s_{h+1}^{i})$.
        \STATE {\textbf{if} there exists $h'\in[H]$ such that $\det(\bSigma_{k,h'})\ge 2\det(\bSigma_{k_{\text{last}},h'})$, \textbf{then} for any $(s,a)\in\cS\times \cA$,}
        \STATE $\quad \qvalue_{k,h}(s,a)=\min\left\{\reward_h(s,a)+\hat{\wb}_{k,h}^{\top}\bphi(s,a)+\beta\sqrt{\bphi(s,a)^{\top}\bSigma_{k,h}^{-1}\bphi(s,a)} ,{\qvalue}_{k-1,h}(s,a),H\right\}$, 
         \STATE $\quad \check{\qvalue}_{k,h}(s,a)=\max\left\{\reward_h(s,a)+\check{\wb}_{k,h}^{\top}\bphi(s,a)-\bar{\beta}\sqrt{\bphi(s,a)^{\top}\bSigma_{k,h}^{-1}\bphi(s,a)} ,\check{\qvalue}_{k-1,h}(s,a),0\right\}$.
        \STATE $\quad$Set $k_{\text{mid}} \leftarrow k$.
        \STATE \textbf{else}
        \STATE $\quad \qvalue_{k,h}(s,a)=\qvalue_{k-1,h}(s,a),\ \check{\qvalue}_{k,h}(s,a)=\check{\qvalue}_{k-1,h}(s,a), \forall (s,a)\in\cS\times \cA$.
        \STATE \textbf{end if}
        \STATE $\vvalue_{k,h}(s)=\max_{a}\qvalue_{k,h}(s,a),\ \check{\vvalue}_{k,h}(s)=\max_{a}\check{\qvalue}_{k,h}(s,a), \forall s\in\cS$.
    \ENDFOR
    \STATE Set the last updating episode $k_{\text{last}} \leftarrow k_{\text{mid}}$.
    \FOR{step $h=1,\ldots,H$}
    \STATE Take action $a_h^k \hspace{-0.09em}= \hspace{-0.09em} \pi_h^k(s_h^k) \hspace{-0.09em}= \hspace{-0.09em}\argmax_{a} \qvalue_{k,h}(s_h^k,a).$
    \STATE Set the estimated variance $\sigma_{k,h}^2$ as in \eqref{eq:variance}. 
     \STATE $\bar\sigma_{k,h}^2= \max\big\{\sigma_{k,h}^2, H,2d^3H^2\|\bphi(s_h^k,a_h^k)\|_{\bSigma_{k,h}^{-1}}\big\}$.
    \STATE $\bSigma_{k+1,h}=\bSigma_{k,h}+\bar\sigma_{k,h}^{-2}\bphi(s_h^k,a_h^k)\bphi(s_h^k,a_h^k)^{\top}$. 
    \STATE Receive the next state $s_{h+1}^k$.
    \ENDFOR
\ENDFOR
    \end{algorithmic}
\end{algorithm}
LSVI-UCB++ reduces the learning of the optimal action-value function into a series of linear regression problems. Based on the relationship  $\PP_{s,a,h} V_{h+1}^\pi = \la \bphi(s,a), \wb_h^\pi\ra$ in Proposition \ref{prop:linearq}, \Cref{algorithm1} constructs the estimator $\hat{\wb}_{k,h}$ by solving the following weighted ridge regression
\begin{align}
    \hat{\wb}_{k,h}&= \argmin_{\wb\in \RR^d}\lambda\|\wb\|_2^2 +\sum_{i=1}^{k-1}\bar{\sigma}_{i,h}^{-2}\left(\wb^{\top}\bphi(s_h^i,a_h^i)-V_{k,h+1}(s_{h+1}^i)\right)^2.\notag
\end{align} 
Here, the adjusted estimated variance $\bar{\sigma}_{k,h}^2$ is set as
 \begin{align}
 \label{defbarsigma}
     \bar\sigma_{k,h}^2= \max\big\{\sigma_{k,h}^2, H,2d^3H^2\|\bphi(s_h^k,a_h^k)\|_{\bSigma_{k,h}^{-1}}\big\},
 \end{align}
where the estimated variance $\sigma_{k,h}^2$ is defined in the \Cref{eq:variance} later.
 
With the help of exploration bonuses, Line 9 of \Cref{algorithm1} constructs an optimistic value estimate $Q_{k,h}(s,a)$ as
\begin{align*}
     \qvalue_{k,h}(s,a) \approx \reward_h(s,a)+\hat{\wb}_{k,h}^{\top}\bphi(s,a)+\beta\|\bphi(s,a)\|_{\bSigma_{k,h}^{-1}}.
 \end{align*}
Here, $Q_{k,h}(s,a)$ is an upper bound of $Q_h^\star(s,a)$ with high probability when the hyperparameter $\beta$ is chosen as $\tilde{\Theta}(\sqrt{d})$. The remaining components in Line 9 ensure boundedness and monotonicity of the $Q$-estimate.

The algorithm also uses a pessimistic estimate $\check{\qvalue}_{k,h}(s,a)$, which serves as a lower bound for $Q_{h}^\star(s,a)$ with high probability. By \Cref{eq_Bellman} and Line 17 of \Cref{algorithm1}, this estimate allows us to control the value estimation error $V_{k,h}(s) - V_h^\star(s)$ via the error $V_{k,h}(s) - \check{V}_{k,h}(s)$. The pessimistic estimate $\check{\qvalue}_{k,h}(s,a)$ is constructed in a manner analogous to the optimistic one: after obtaining the vector $\check{\wb}_{k,h}$ by solving the following weighted ridge regression
\begin{align}
    \check{\wb}_{k,h}&= \argmin_{\wb\in \RR^d}\lambda\|\wb\|_2^2 +\sum_{i=1}^{k-1}\bar{\sigma}_{i,h}^{-2}\left(\wb^{\top}\bphi(s_h^i,a_h^i)-\check{V}_{k,h+1}(s_{h+1}^i)\right)^2,\notag
\end{align} 
we compute $\check{Q}_{k,h}$ as:
  \begin{align*}
     \check{\qvalue}_{k,h}(s,a) \approx \reward_h(s,a)+\check{\wb}_{k,h}^{\top}\bphi(s,a)-\bar{\beta}\|\bphi(s,a)\|_{\bSigma_{k,h}^{-1}},
 \end{align*}
 where the hyperparameter $\bar{\beta}$ can be chosen as $\tilde{\Theta}(\sqrt{d^3H^2})$.

Finally, $\algname$ constructs $\sigma_{k,h}^2$ as follows:
\begin{align}
\sigma_{k,h}^2=\bar{\VV}_{s_h^k,a_h^k,h}\vvalue_{k,h+1}+E_{k,h}+D_{k,h}+H.\label{eq:variance}
\end{align}
In \Cref{eq:variance}, $\bar{\VV}_{s_h^k,a_h^k,h}\vvalue_{k,h+1}$ represents the estimated variance of value functions and is defined as
\begin{align*}
    \left[\tilde{\wb}_{k,h}^{\top}\bphi(s_h^k,a_h^k)\right]_{[0,H^2]} - \left[\left(\hat{\wb}_{k,h}^{\top}\bphi(s_h^k,a_h^k)\right)^2\right]_{[0,H^2]}.
\end{align*}
Here,
\begin{align*}
\tilde{\wb}_{k,h}&:=\argmin_{\wb\in \RR^d}\lambda\|\wb\|_2^2+\sum_{i=1}^{k-1}\bar{\sigma}_{i,h}^{-2}\left(\wb^{\top}\bphi(s_h^i,a_h^i)-V^2_{k,h+1}(s_{h+1}^i)\right)^2
\end{align*}
is the solution to the weighted ridge regression problem for the squared value function. In addition,
\begin{align*}
    E_{k,h}=\min \left\{\tilde{\beta}\sqrt{\bphi(s_h^k,a_h^k)^{\top}\bSigma_{k,h}^{-1}\bphi(s_h^k,a_h^k)},H^2\right\}+\min \left\{2H\bar{\beta}\sqrt{\bphi(s_h^k,a_h^k)^{\top}\bSigma_{k,h}^{-1}\bphi(s_h^k,a_h^k)},H^2\right\},
\end{align*}
and
\begin{align}
    D_{k,h}=\min\left\{4d^3H^2\left(\hat{\wb}_{k,h}^{\top}\bphi(s_h^k,a_h^k)-\check{\wb}_{k,h}^{\top}\bphi(s_h^k,a_h^k) +2\bar{\beta}\sqrt{\bphi(s_h^k,a_h^k)^{\top}\bSigma_{k,h}^{-1}\bphi(s_h^k,a_h^k)}\right),d^3H^3\right\}.\notag
\end{align}
Here, $E_{k,h}$ bounds the error between the estimated variance $\bar{\VV}_{s_h^k,a_h^k,h}\vvalue_{k,h+1}$ and the true variance $\VV_{s_h^k,a_h^k,h}\vvalue_{k,h+1}$ of $V_{k,h+1}$, and $D_{k,h}$ bounds the error between the variance $\VV_{s_h^k,a_h^k,h}\vvalue_{k,h+1}$ and the variance $\VV_{s_h^k,a_h^k,h}\vvalue_{h+1}^\star$. 

\subsection{Gap-Dependent Regret Upper Bound}
After introducing the LSVI-UCB++ algorithm, we now present our main theoretical result: the first gap-dependent regret upper bound for a nearly minimax-optimal algorithm in RL with linear function approximation.
\begin{theorem}
\label{regret}
   For any linear MDP $\cM$, if we set the parameters
    $\lambda =1/H^2$ and confidence radii $\beta, \bar\beta, \tilde\beta$ as
    \begin{align}
    &\beta=\Theta\Big(H\sqrt{d\lambda}+\sqrt{d \log^2\left(1+dT/(\delta\lambda)\right)}\Big),\notag \\
    &\bar{\beta}=\Theta\Big(H\sqrt{d\lambda} +\sqrt{d^3H^2\log^2\left(dT/(\delta\lambda)\right)}\Big),\notag \\
    &\tilde{\beta}= \Theta \Big(H^2\sqrt{d\lambda} +\sqrt{d^3H^4\log^2\left(dT/(\delta\lambda)\right)}\Big),\notag
    \end{align}
    with the failure probability $\delta = 1/18T$, then $\mathbb{E} \left[ \textnormal{Regret}(T) \right]$ for Algorithm 1 in the first $T$ steps is upper bounded by
    \begin{align*}
        O\left(\frac{d^2H^4 }{\dmin}\iota_1^3 +d^6H^7\iota_1^2\right),
    \end{align*}
where  $\iota_1 = \log\left(1+dHK/\dmin\right).$
\end{theorem}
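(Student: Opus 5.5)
The proof follows the gap-dependent template of \citet{he2021logarithmic}, with the worst-case analysis of \citet{he2023nearly} feeding in the key sums.

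\textbf{Step 1: reduce regret to a count of gap-exceeding steps.} Work on the high-probability good event $\mathcal{E}$ of \citet{he2023nearly}. On $\mathcal{E}$ we have $\check{Q}_{k,h}\le Q_h^\star\le Q_{k,h}$, and the concentration bounds with the stated $\beta,\bar\beta,\tilde\beta$ hold. With $\delta=1/18T$, the complement contributes at most $T\cdot\delta\cdot H=O(H)$ to the expected regret.

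By the performance-difference lemma,
\[
\textnormal{Regret}(T)=\sum_k\mathbb{E}\Bigl[\sum_h \Delta_h(s_h^k,a_h^k)\Bigr],
\]
and each nonzero gap is at least $\dmin$. Optimism and the greedy choice give $\Delta_h(s_h^k,a_h^k)\le Q_{k,h}(s_h^k,a_h^k)-Q_h^{\pi^k}(s_h^k,a_h^k)$. So it suffices to bound, for each $h$, the number $N_h$ of episodes with $Q_{k,h}-Q_h^{\pi^k}\ge\dmin$ at $(s_h^k,a_h^k)$.

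Following \citet{he2021logarithmic}, I peel the gap at dyadic levels $2^{i}\dmin$, for $i$ up to $\log(H/\dmin)$; this is one source of $\iota_1$. Summing the gaps level by level then gives $\sum_i 2^{i}\dmin\,N_h(2^{i-1}\dmin)$.

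\textbf{Step 2: bound the optimistic error on a subset.} Fix a level $\epsilon$ and let $\mathcal{K}$ be the set of episodes with $Q_{k,h}-Q^{\pi^k}_h\ge\epsilon$. Unrolling the Bellman recursion as in \citet{he2023nearly} gives
\[
|\mathcal{K}|\,\epsilon\;\le\;\sum_{k\in\mathcal K}\sum_{h'\ge h}\Bigl(2\beta\|\bphi\|_{\bSigma_{k,h'}^{-1}}+\text{martingale terms}\Bigr)
\]
(up to the factor-2 determinant-doubling lag from lazy updates).

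The bonus partial sum is handled with Cauchy--Schwarz against the weights $\bar\sigma_{k,h'}$:
\[
\sum_{k\in\mathcal K}\|\bphi\|_{\bSigma^{-1}}\le\sqrt{\sum_{k\in\mathcal K}\bar\sigma_{k,h'}^2}\;\sqrt{\sum_{k}\min\{1,\bar\sigma^{-2}_{k,h'}\|\bphi\|^2_{\bSigma^{-1}}\}}.
\]
The second factor is $\tilde O(\sqrt d)$ by the elliptical potential lemma. To keep the partial-sum structure over $\mathcal K$, I would use a surrogate matrix built only from episodes in $\mathcal K$; it has a one-step recursive structure and dominates $\bSigma_{k,h'}^{-1}$ in the needed direction (the content of Lemma~\ref{sumbonusmain}).

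\textbf{Step 3: bound the estimated variances on $\mathcal{K}$ (main obstacle).} This partial-sum bound is Lemma~\ref{LEMMA: TOTAL-ESTIMATE-VARIANCEmain}. The target is
\[
\sum_{k\in\mathcal K}\sum_{h'}\bar\sigma^2_{k,h'}\lesssim H^2|\mathcal K|+\text{lower order},
\]
using the law of total variance over the episode, the total-variance bound for $V^\star$, and the $E$, $D$ and $2d^3H^2\|\bphi\|$ terms.

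The $D_{k,h}$ term is controlled by $V_{k,h}-\check V_{k,h}$ summed over $\mathcal K$. I plan to handle it through a recursion between steps $h$ and $h+1$ for partial sums of value-estimation errors (Lemmas~\ref{lemma:transition} and~\ref{lemma:transition1}), solved to give $\mathrm{poly}(d,H)\sqrt{|\mathcal K|}$ plus constants. These contribute only to the additive $d^6H^7\iota_1^2$ term.

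\textbf{Step 4: solve for $|\mathcal K|$ and sum the levels.} Combining Steps 2 and 3 gives
\[
|\mathcal K|\epsilon\lesssim \beta\sqrt{d}\,\sqrt{H\cdot H^2|\mathcal K|}\,\iota+\mathrm{poly}(d,H)\iota^2 .
\]
Since $\beta\sqrt d=\tilde O(d)$, solving yields $|\mathcal K|\lesssim d^2H^3\iota^2/\epsilon^2+d^6H^7\iota^2/\epsilon$.

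Plugging into the peeling sum over levels $i$ and over $h\in[H]$ gives $\sum_h\sum_i 2^{i}\dmin\cdot d^2H^3\iota^2/(2^{i}\dmin)^2=O(d^2H^4\iota_1^3/\dmin)$. The second piece sums to $O(d^6H^7\iota_1^2)$ after accounting for the number of levels. Together with the $O(H)$ failure term, this gives the claimed bound.
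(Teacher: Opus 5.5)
Your proposal is essentially the paper's proof. The shared steps are:
\begin{itemize}
\item writing the regret as an expected sum of gaps and peeling dyadically into counts $K'(h,n)$;
\item sandwiching $\sum_i (Q_{k_i,h}-Q_h^{\pi^{k_i}})(s_h^{k_i},a_h^{k_i})$ between $2^n\Delta_{\min}K'$ and a telescoped bonus-plus-martingale sum;
\item using the surrogate matrix built from the selected episodes for the bonus partial sums;
\item bounding the partial-sum variance via the recursions for $V_{k,h}-V_h^{\pi^k}$ and $V_{k,h}-\check V_{k,h}$;
\item solving the resulting quadratic in $K'$.
\end{itemize}

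There is one bookkeeping slip in Step 4. The lower-order term of the variance partial sum is $d^{10}H^{11}\iota^3$. It enters the bonus sum as $\beta\sqrt{dH\iota\cdot d^{10}H^{11}\iota^3}=\tilde O(d^6H^6)$, so the count at level $\epsilon$ is $|\mathcal K|\lesssim d^2H^3/\epsilon^2+d^6H^6/\epsilon$ up to logs. With your stated $d^6H^7/\epsilon$, summing $2^{i}\Delta_{\min}\cdot d^6H^7/(2^{i-1}\Delta_{\min})$ over $h\in[H]$ and the levels gives $d^6H^8$, not the claimed $d^6H^7$. This is why the $\mathrm{poly}(d,H)$ term has to be carried explicitly.

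Two smaller points for a write-up:
\begin{itemize}
\item Apply the total-variance bound to $V^{\pi^k}$, the policy that generates the trajectory, rather than to $V^\star$, as the paper does. The gap $\mathbb{V}V_{k,h'+1}-\mathbb{V}V^{\pi^k}_{h'+1}\le 2H\,\mathbb{P}(V_{k,h'+1}-V^{\pi^k}_{h'+1})$ is then fed into the recursion.
\item The indicator $\mathbb{I}[k\in\mathcal K]$ is determined only once $s_h^k$ is observed. The martingale and total-variance concentration over selected episodes is therefore valid only for steps $h'\ge h$. Stating it for those steps costs nothing, since Step 2 only needs $h'\ge h$; the paper is loose on this point too.
\end{itemize}
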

The full proof is provided in \Cref{regretproof}, with a proof sketch given in \Cref{regretsketch}. Compared with prior gap-dependent expected regret upper bounds for RL with linear function approximation
\citep{he2021logarithmic,papini2021reinforcement,zhang2024achieving},
our result strictly improves the dependence on both the feature dimension $d$ and the horizon length $H$
in the $\dmin^{-1}$ term for the nearly minimax-optimal algorithm LSVI-UCB++. In particular, while existing bounds scale as $\hat{O}(d^3 H^5/\dmin)$ or $\hat{O}(d^2 H^5/\dmin)$ (see \Cref{tab:comparison} for details), our bound reduces this dependence to $\hat{O}(d^2 H^4/\dmin)$. We also remark that \citet{he2021logarithmic} provide a gap-dependent regret lower bound of $\Omega(dH / \Delta_{\min})$, but whether it is minimax-optimal and achievable remains an open question.

As an immediate corollary of Theorem~\ref{regret}, we obtain a gap-dependent Probably Approximately Correct (PAC) sample complexity bound \citep{kakade2003sample}, which characterizes the number of episodes required to learn an $\epsilon$-optimal policy $\pi$ satisfying $V_1^\star(s_1) - V_1^\pi(s_1) < \epsilon$ for a fixed initial state $s_1$. Without loss of generality, we focus on the case where $s_1$ is fixed; the general case reduces to this setting by adding an auxiliary time step at the beginning of each episode.
\begin{corollary}
\label{sample}
For any linear MDP, failure probability $\delta \in (0,1)$ and accuracy parameter $\epsilon >0$, running \Cref{algorithm1} with parameters specified in \Cref{regret}, with probability at least $1-\delta$, after   
$$K=\tilde O\left(\frac{d^2 H^4}{\Delta_{\min} \delta\epsilon} + \frac{d^6 H^7}{\delta\epsilon}\right)$$
number of episodes, the output policy $\hat{\pi} = \sum_{k=1}^K\pi^k/K$ is an $\epsilon$-optimal policy. Here, $\pi^k$ denotes the policy used in episode $k$ of LSVI-UCB++.
\end{corollary}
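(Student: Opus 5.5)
The plan is to derive Corollary~\ref{sample} from Theorem~\ref{regret} by Markov's inequality and the linearity of the value function in the policy mixture. Fix the initial state $s_1$ in every episode. The output $\hat{\pi}=\sum_{k=1}^K \pi^k/K$ is the uniform mixture: at the start of the episode it draws $k$ uniformly from $[K]$ and then follows $\pi^k$. Therefore $V_1^{\hat\pi}(s_1)=\frac1K\sum_{k=1}^K V_1^{\pi^k}(s_1)$, and
\begin{align*}
V_1^\star(s_1)-V_1^{\hat\pi}(s_1)=\frac{1}{K}\sum_{k=1}^K\bigl(V_1^\star-V_1^{\pi^k}\bigr)(s_1)=\frac{\mathrm{Regret}(HK)}{K}.
\end{align*}
So it suffices to show that, with probability at least $1-\delta$, $\mathrm{Regret}(HK)<\epsilon K$ for the stated $K$.

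The regret is a nonnegative random variable, since each term $V_1^\star-V_1^{\pi^k}\ge 0$. Markov's inequality gives
\begin{align*}
\Pr\bigl[\mathrm{Regret}(HK)\ge \epsilon K\bigr]\le \frac{\mathbb{E}[\mathrm{Regret}(HK)]}{\epsilon K}.
\end{align*}
Theorem~\ref{regret} bounds the numerator by $C\bigl(\frac{d^2H^4}{\dmin}\iota_1^3+d^6H^7\iota_1^2\bigr)$ with $\iota_1=\log(1+dHK/\dmin)$. It then suffices to choose $K$ so that
\begin{align*}
K\ge \frac{C}{\delta\epsilon}\Bigl(\frac{d^2H^4}{\dmin}\iota_1^3+d^6H^7\iota_1^2\Bigr).
\end{align*}

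The only real obstacle is that $K$ appears on both sides through $\iota_1$, so the inequality is implicit. I will handle it with the standard fact that $K\ge A\log^3(1+BK)$ holds whenever $K\ge c\,A\log^3(1+AB)$ for a universal constant $c$. This is the usual self-bounding argument: substitute the candidate $K$ and check that $\log(1+BK)=O(\log(1+AB))$, since $\log$ of a polylogarithmic factor is absorbed. Apply it with $B=dH/\dmin$ and $A$ equal to the bracket above divided by $\delta\epsilon$. This gives a sufficient $K$ of the form $\tilde O\bigl(\frac{d^2H^4}{\dmin\delta\epsilon}+\frac{d^6H^7}{\delta\epsilon}\bigr)$, with all logarithmic factors hidden in $\tilde O$.

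The algorithm's parameters are independent of the corollary's $\delta$: the internal failure probability is fixed at $1/18T$ inside Theorem~\ref{regret} and is already accounted for in the expectation bound. The corollary's $\delta$ enters only through Markov's inequality, which explains the $1/\delta$ dependence. Combining the steps, with probability at least $1-\delta$ we have $V_1^\star(s_1)-V_1^{\hat\pi}(s_1)=\mathrm{Regret}(HK)/K<\epsilon$, so $\hat\pi$ is $\epsilon$-optimal.
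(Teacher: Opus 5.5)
Your proof is correct, but it takes a different route from the paper's.

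The paper does not use the expectation bound of Theorem~\ref{regret}. Instead it proceeds in three steps:
\begin{enumerate}
\item It resets the confidence level to $\delta/38$, which ties the confidence radii to the corollary's $\delta$.
\item It takes the high-probability bound \eqref{uppersum} on $\sum_{k,h}\Delta_h(s_h^k,a_h^k)$ from the proof of Theorem~\ref{regret}, and converts it into a high-probability bound on the realized regret using the extra concentration result Lemma~\ref{lemma: sumgap} (Lemma 6.1 of \citet{he2021logarithmic}).
\item Only then does it apply Markov's inequality to the stochastic policy $\hat\pi$, i.e.\ over its random choice of a component $\pi^k$. This last step is the sole source of its $1/\delta$.
\end{enumerate}

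You use nothing but the expectation bound and one Markov step over the algorithm's own randomness. You read $\hat\pi$ as the episode-level mixture, so that $V_1^{\hat\pi}(s_1)=\frac1K\sum_k V_1^{\pi^k}(s_1)$ exactly.

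Your route makes the result a genuine black-box corollary of Theorem~\ref{regret} and needs no additional concentration argument. It is also consistent with running the algorithm with exactly the parameters of Theorem~\ref{regret} (internal level $1/18T$), as the statement prescribes.

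The paper's route passes through a stronger intermediate statement: high-probability control of the realized regret. Under your mixture reading, that would already give $\epsilon$-optimality with only logarithmic dependence on $1/\delta$. An argument that sees only an expectation, like yours, cannot avoid the $1/\delta$.

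If you prefer the paper's reading of $\hat\pi$ as a single uniformly drawn $\pi^k$, your argument adapts at no cost. Apply Markov directly to the nonnegative quantity $V_1^\star(s_1)-V_1^{\pi^k}(s_1)$ with $k$ uniform on $[K]$; its expectation over both sources of randomness is $\mathbb{E}[\mathrm{Regret}(HK)]/K$.
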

Compared with worst-case PAC bounds in both tabular RL \citep{jin2018q,bai2019provably,dann2019policy,menard2021fast} and RL with linear function approximation \citep{he2021uniform,wu2023uniform}, our gap-dependent PAC sample complexity improves the dependence on the accuracy level $\epsilon$ from $\tilde{O}(1/\epsilon^2)$ to $\tilde{O}(1/\epsilon)$, which implies that a $\epsilon$-optimal policy can be learned with fewer samples.

\subsection{Extension to Concurrent RL}
In concurrent RL, multiple agents interact with the environment in parallel and share information to accelerate learning \citep{bai2019provably, zhang2020almost}. We consider a setting with $M$ parallel agents, where each agent interacts with an independent copy of the same episodic MDP. Within each episode, the $M$ agents act synchronously without communication. Information exchange and policy updates are permitted only after all agents have completed the episode.

A \emph{concurrent round} is defined as the time period during which the $M$ agents simultaneously complete one episode and communicate to update their policies.
The performance of a concurrent algorithm is measured by the number of concurrent rounds required to learn an $\epsilon$-optimal policy.
\begin{algorithm}[ht]
	\caption{Concurrent LSVI-UCB++}
	\begin{algorithmic}\label{alg2}
		\STATE{\textbf{Initialize:} Regularization parameter $\lambda=1/H^2$, failure probability $\delta \in (0,1)$, confidence radii $\beta, \bar\beta,\tilde{\beta}$ as specified in \Cref{regret}.}
		\FOR{concurrent rounds $k=1,2,3,\dots$}
        \STATE{ All agents follow the same policy $\pi^{k}$ determined by the current value estimation (Line 21 of \Cref{algorithm1}).}
		\FOR{$i=1,2,3,\dots,M$}
		\STATE{Collect the trajectory and feed it to LSVI-UCB++.}
		\IF{an update is triggered (Line 8 of \Cref{algorithm1})}
		\STATE{Update the value estimations (Lines 9--10 of \Cref{algorithm1}).}
		\STATE{\textbf{break}}
		\ENDIF
		\ENDFOR
		\ENDFOR
	\end{algorithmic}
\end{algorithm}

\Cref{alg2} presents the concurrent version of LSVI-UCB++ algorithm. Following the idea of concurrent UCB-Advantage \citep{zhang2020almost}, we simulate the single-agent LSVI-UCB++ algorithm by treating the $M$ episodes collected in a single concurrent round as $M$ consecutive episodes without intermediate policy updates.
All collected trajectories are sequentially fed into the single-agent LSVI-UCB++.
Whenever an update is triggered during an episode in the single-agent algorithm (see Line~8 of \Cref{algorithm1}), the estimated value functions and the policy are updated (Lines 9--10 of Algorithm \ref{algorithm1}), and any remaining trajectories in the current round are discarded in the learning process.

We now present \Cref{concurrent-RL}, which characterizes the sample complexity of the concurrent LSVI-UCB++ algorithm. To the best of our knowledge, this result provides the first gap-dependent sample complexity bound for online MARL with linear function approximation.
\begin{theorem}\label{concurrent-RL}
Given $M$ parallel agents, any $\delta \in (0,1)$ and $\epsilon > 0$, with probability at least $1-\delta$, Concurrent LSVI-UCB++ requires at most
$$ 
   \tilde{O}\left(dH+\frac{d^2 H^4}{M\Delta_{\min} \delta\epsilon} + \frac{d^6 H^7}{M\delta\epsilon}\right)
$$
concurrent rounds to learn an $\epsilon$-optimal policy.
\end{theorem}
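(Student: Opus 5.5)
The plan is to reduce the concurrent algorithm to the single-agent LSVI-UCB++ and then use \Cref{sample}. Put all collected trajectories that are not discarded, in the order they are fed in, into one sequence. This sequence is exactly a run of \Cref{algorithm1}: inside a concurrent round every agent follows the same policy $\pi^k$, and in the single-agent algorithm the policy changes only when the determinant condition in Line 8 fires. So the episodes of one round that come before the triggering episode (or all $M$ of them, if nothing triggers) are consecutive single-agent episodes with no policy change. The triggering episode itself is kept as well, since the update uses data up to it. The discarded trajectories are never seen by the learner, so they do not enter the filtration, and the concentration events behind \Cref{regret} still hold for the simulated run.

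The first step is to count the rounds that end early. Such a round ends when some $h'$ has $\det(\bSigma_{k,h'})\ge 2\det(\bSigma_{k_{\text{last}},h'})$. Each $\bSigma_{k,h}$ starts at $\lambda\Ib_d$ with $\lambda=1/H^2$, and its increments satisfy $\bar\sigma^{-2}\|\bphi\|^2\le 1/H$. Hence $\log\det\bSigma_{K,h}\le d\log(\lambda+K/(dH))$, and the determinant can double at most $O(d\log(1+KH/d))$ times per step. Over $H$ steps this gives at most $N_{\text{sw}}=\tilde O(dH)$ early-terminating rounds. This is the low-switching property that the paper already relies on.

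Every other round contributes $M$ episodes to the simulated run. Let $K_0=\tilde O\big(d^2H^4/(\dmin\delta\epsilon)+d^6H^7/(\delta\epsilon)\big)$ be the episode count from \Cref{sample}. Once the simulated run has $K_0$ episodes, with probability at least $1-\delta$ the mixture $\hat\pi=\sum_k\pi^k/K_0$ over the simulated episodes is $\epsilon$-optimal. Reaching that point takes at most $N_{\text{sw}}+\lceil K_0/M\rceil$ rounds, which is the stated $\tilde O\big(dH+d^2H^4/(M\dmin\delta\epsilon)+d^6H^7/(M\delta\epsilon)\big)$. The extra $K$ inside the logarithm of $N_{\text{sw}}$ is $K_0$ and is absorbed by $\tilde O$.

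The main obstacle is checking that \Cref{sample} really applies to the simulated sequence. That corollary is derived from the expected-regret bound through Markov's inequality, hence the $1/\delta$. So I must make sure \Cref{regret} holds for a run whose length $K_0$ is fixed in advance and whose episodes were generated in parallel. This works because each kept trajectory is drawn from the MDP under a policy measurable with respect to the previously fed data, since the policy is frozen within the round. The martingale structure used in the proof of \Cref{regret} is therefore unchanged. Choosing $\delta$ in \Cref{regret} as a function of $K_0$ only affects the logarithmic factors.
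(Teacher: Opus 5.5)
Your proposal is correct and follows essentially the same route as the paper: reduce the concurrent run to a single-agent LSVI-UCB++ run, take the episode count $K_\epsilon$ from \Cref{sample}, and bound the number of rounds by the number of policy switches plus $\lceil K_\epsilon/M\rceil$. The only differences are that you derive the $\tilde O(dH)$ switching bound yourself from determinant doubling (using $\bar\sigma_{k,h}^2\ge H$ and $\lambda=1/H^2$) rather than citing Theorem~5.1 of \citet{he2023nearly}, and that you spell out why the kept trajectories form a valid single-agent run, which the paper assumes without comment.
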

Theorem~\ref{concurrent-RL} implies that concurrent LSVI-UCB++ achieves a linear speedup in the number of agents $M$ when
\[
M = \tilde{O}\left(\min\left\{\frac{dH^3}{\dmin \delta\epsilon},\, \frac{d^5 H^6}{\delta\epsilon}\right\}\right).
\]
In particular, when the target accuracy $\epsilon$ is sufficiently small, the first term $\tilde{O}(dH)$ becomes negligible, and the algorithm enjoys an asymptotically linear speedup in $M$.
The proof of \Cref{concurrent-RL} is deferred to \Cref{concurrent}.

Compared with existing worst-case sample complexities for MARL algorithms in the tabular setting \citep{bai2019provably,zhang2020almost} or with linear function approximation \citep{dubey2021provably, min2023cooperative,hsu2024randomized}, our gap-dependent sample complexity improves the dependence on the accuracy parameter $\epsilon$ from $\tilde{O}(1/\epsilon^2)$ to $\tilde{O}(1/\epsilon)$.

\section{\texorpdfstring{Proof Sketch of \Cref{regret}}{Proof Sketch of Theorem 4.1}}
\label{regretsketch}
In this section, we present the key techniques underlying the proof of \cref{regret}. We begin with Lemma \ref{expectedrmain}, which relates the expected regret to the sum of suboptimality gaps.

\begin{lemma}
\label{expectedrmain}
For any learning algorithm with $K$ episodes and $T = HK$ steps, $\mathbb{E} \left[ \textnormal{Regret}(T) \right]$ is upper bounded as
\begin{align*}
\mathbb{E} \left[ \textnormal{Regret}(T) \right] 
\leq \mathbb{E}\left(\sum_{k=1}^K\sum_{h=1}^{H} \Delta_h(s_h^k, a_h^k)\right).
\end{align*}
\end{lemma}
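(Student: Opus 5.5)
The plan is to use the standard performance-difference decomposition of the per-episode regret into suboptimality gaps along the trajectory actually generated by the algorithm.

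Fix an episode $k$. Conditioned on the history $\mathcal{F}_{k-1}$ before episode $k$ (which determines $\pi^k$ and the adversarially chosen $s_1^k$), we show that
\begin{align*}
\left(V_1^\star - V_1^{\pi^k}\right)(s_1^k) = \mathbb{E}\left[\sum_{h=1}^H \Delta_h(s_h^k,a_h^k)\,\middle|\, \mathcal{F}_{k-1}\right].
\end{align*}
Summing over $k$ and taking total expectation then gives the claim, in fact with equality, so the inequality holds.

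To prove the per-episode identity, write, for any state $s_h$ with $a_h = \pi_h^k(s_h)$ (or $a_h\sim\pi_h^k(s_h)$ if the policy is randomized), the telescoping relation
\begin{align*}
V_h^\star(s_h) - V_h^{\pi^k}(s_h) &= \big(V_h^\star(s_h) - Q_h^\star(s_h,a_h)\big) + \big(Q_h^\star(s_h,a_h) - Q_h^{\pi^k}(s_h,a_h)\big)\\
&= \Delta_h(s_h,a_h) + \mathbb{P}_{s_h,a_h,h}\big(V_{h+1}^\star - V_{h+1}^{\pi^k}\big).
\end{align*}
The first equality is Definition~\ref{def_sub}, together with $V_h^{\pi^k}(s_h) = \mathbb{E}_{a_h\sim\pi_h^k(s_h)}[Q_h^{\pi^k}(s_h,a_h)]$. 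The second uses the Bellman equations \eqref{eq_Bellman}: the reward terms cancel in $Q_h^\star - Q_h^{\pi^k}$. Taking the expectation over $a_h$ and $s_{h+1}\sim\mathbb{P}_h(\cdot\mid s_h,a_h)$, i.e., along the trajectory generated by $\pi^k$ in episode $k$, and iterating from $h=1$ to $H$ with $V_{H+1}^\star=V_{H+1}^{\pi^k}=0$, yields the identity by induction (or equivalently by the tower property).

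The only point needing care is the measurability and conditioning bookkeeping. Within episode $k$, the policy $\pi^k$ is fixed given $\mathcal{F}_{k-1}$; in \Cref{algorithm1} the policy is computed before the episode's actions are taken. Moreover, the trajectory $(s_h^k,a_h^k)$ is generated exactly by $\pi^k$ and the true transition kernel, so the conditional law of the trajectory matches the one used in the telescoping. Since all quantities are bounded in $[0,H]$, the expectations are finite and the tower property applies. This bookkeeping is the main, though mild, obstacle; no property specific to linear MDPs is needed.
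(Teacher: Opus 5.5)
Your proposal is correct and follows essentially the same route as the paper. The paper also writes $(V_1^\star - V_1^{\pi^k})(s_1^k) = \Delta_1(s_1^k,a_1^k) + \mathbb{E}[(V_2^\star - V_2^{\pi^k})(s_2^k)]$ via the Bellman equations, iterates along the trajectory of $\pi^k$, then sums over $k$ and takes expectations to get equality. Your explicit conditioning on $\mathcal{F}_{k-1}$ is just a more careful version of the same bookkeeping.
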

The proof is provided in Appendix \ref{regretproof}, which follows directly from the Bellman Equation \eqref{eq_Bellman}. Therefore, we focus on controlling the summation of suboptimality gaps.

Let $N = \lceil H / \Delta_{\min} \rceil$. Similar to \citet{he2021logarithmic}, we partition the
interval $[\Delta_{\min}, H]$ into dyadic intervals of the form $\mathcal{I}_n = [2^{n-1}\Delta_{\min}, 2^{n}\Delta_{\min})$ for $1\leq n < N$ and $\mathcal{I}_N = [2^{N-1}\Delta_{\min},H]$. Then we have
\begin{align}
    \sum_{k=1}^K\sum _{h=1}^{H} \Delta_h(s_h^k, a_h^k) 
    &= \sum_{k=1}^K\sum_{h=1}^{H} \Delta_h(s_h^k, a_h^k) \sum_{n=1}^N\mathbb{I}[\Delta_h(s_h^k,a_h^k) \in \mathcal{I}_n] \notag\\
    &\leq \sum_{h=1}^{H}\sum_{n=1}^N 2^n\dmin \times K'(h,n-1), \label{regretsumgap}
\end{align}
where for any step $h \in [H]$ and $ 0\leq n \leq N$, we define
\begin{align}
\label{gapnum}
K'(h,n) = \sum_{k=1}^K   \mathbb{I}\big[\big(Q_{k,h}-\qvalue_h^{\pi^k}\big)(s_h^k,a_h^k) \geq 2^n\dmin\big].
\end{align}
The inequality \eqref{regretsumgap} follows from
\begin{align*}
\Delta_h(s_h^k,a_h^k) \leq Q_{k,h}(s_h^k,a_h^k)-\qvalue_h^{\pi^k}(s_h^k,a_h^k),
\end{align*}
which holds because $Q_h^\star(s_h^k,a_h^k) \geq \qvalue_h^{\pi^k}(s_h^k,a_h^k)$ and the optimism property (Lemma \ref{lemma:optimistic}) ensures $Q_{k,h}(s_h^k,a_h^k) = V_{k,h}(s_h^k) \ge V_h^\star(s_h^k)$
with high probability.

Therefore, to bound the expected regret, it suffices to bound $K'(h,n)$ for any $(h,n)\in[H]\times[N]$. 

Let $k_0(h,n)=0$, and for $i \in [K'(h,n)]$, define $k_i(h,n)$ as the smallest episode
index satisfying 
\begin{align}
    k_i(h,n)=\min \Big\{k: k>k_{i-1}(h,n),Q_{k,h}(s_h^k,a_h^k)-\qvalue_h^{\pi^k}(s_h^k,a_h^k) \geq 2^n\dmin\Big\}.\label{eq:ti}
\end{align}
When there is no ambiguity, we use $K'$ and $k_i$ as shorthand for $K'(h,n)$ and $k_i(h,n)$. The following lemma provides an upper bound on $K'(h,n)$.
\begin{lemma}[Informal]\label{lemma:gap-numbermain}
For any $\delta \in (0,1)$, let $\iota_2 = \log(1+ dHNK/\delta)$. With high probability, for any $(h,n) \in [H] \times [N]$, we have
\begin{align}
     K'(h,n) \leq  O\left(\frac{d^2H^3\iota_2^3 }{4^n\dmin^2} + \frac{d^6H^6\iota_2^2 }{2^n\dmin}\right).\nonumber
\end{align}
\end{lemma}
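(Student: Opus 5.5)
The plan is to sandwich the quantity $S:=\sum_{i=1}^{K'}(Q_{k_i,h}-Q_h^{\pi^{k_i}})(s_h^{k_i},a_h^{k_i})$ between two bounds. By the definition \eqref{eq:ti}, $S \geq K'\,2^n\dmin$. For the upper bound I will unroll the optimistic estimate along the trajectories $k_1,\dots,k_{K'}$ and show
\[
S \leq O\Big(\sqrt{d^2H^3\iota_2^3\,K'} + d^6H^6\iota_2^2\Big).
\]
Comparing the two bounds then gives $K'2^n\dmin \lesssim \sqrt{d^2H^3\iota_2^3K'} + d^6H^6\iota_2^2$. Solving this quadratic inequality in $\sqrt{K'}$ yields exactly $K'\lesssim d^2H^3\iota_2^3/(4^n\dmin^2) + d^6H^6\iota_2^2/(2^n\dmin)$. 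A union bound over $(h,n)\in[H]\times[N]$ costs only the factor $N$ already inside $\iota_2$.

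The recursion works as follows. On the high-probability event of \citet{he2023nearly} (optimism, pessimism, concentration with radii $\beta,\bar\beta,\tilde\beta$), we have for each $k$ and $h'\ge h$
\[
(Q_{k,h'}-Q^{\pi^k}_{h'})(s_{h'}^k,a_{h'}^k) \le 2\beta\|\bphi(s_{h'}^k,a_{h'}^k)\|_{\bSigma_{k,h'}^{-1}} + \PP_{s,a,h'}(V_{k,h'+1}-V^{\pi^k}_{h'+1}),
\]
up to the correction from the lazy update rule. That correction is handled by the determinant doubling condition: we may replace $\bSigma_{k_{\text{last}},h'}$ by $\bSigma_{k,h'}$ at a constant-factor cost, and the number of switches is $O(dH\log)$. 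Writing the expectation term as the realized next-step term plus a martingale difference, and using $V_{k,h'+1}(s_{h'+1}^k)=Q_{k,h'+1}(s_{h'+1}^k,a_{h'+1}^k)$, we telescope from $h$ to $H$ along each selected episode $k_i$. This gives $S \le \sum_i\sum_{h'\ge h}2\beta\|\bphi\|_{\bSigma^{-1}_{k_i,h'}} + \text{martingale}$. The martingale term is controlled by Freedman's inequality. Its conditional variances are $\VV(V_{k,h'+1}-V^{\pi^k}_{h'+1})$, whose sum is at most $H$ times the sum of the gaps themselves, so that term is absorbed by a self-bounding argument at cost $\sqrt{H^2 K'\iota}+H\iota$.

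The bonus sum is the main obstacle. The matrices $\bSigma_{k_i,h'}$ contain all episodes, not only the subsequence $\{k_i\}$, so the standard elliptical-potential argument must be applied to a surrogate. Following the strategy indicated for Lemma \ref{sumbonusmain}, I will define $\tilde\bSigma_{i,h'}=\lambda\Ib+\sum_{j<i}\bar\sigma_{k_j,h'}^{-2}\bphi\bphi^\top$ over the subsequence. This matrix satisfies $\tilde\bSigma_{i,h'}\preceq\bSigma_{k_i,h'}$ and has the one-step recursion $\tilde\bSigma_{i+1}=\tilde\bSigma_i+\bar\sigma_{k_i}^{-2}\bphi\bphi^\top$. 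Applying Cauchy--Schwarz with the weights $\bar\sigma$ then gives
\[
\sum_i \beta\|\bphi\|_{\bSigma^{-1}} \le \beta\sqrt{\textstyle\sum_i\bar\sigma_{k_i,h'}^2}\,\sqrt{d\iota},
\]
where the part of $\bar\sigma^2$ coming from $2d^3H^2\|\bphi\|_{\bSigma^{-1}}$ contributes only a lower-order $d^{O(1)}H^{O(1)}$ term.

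It remains to bound $\sum_{i,h'}\sigma_{k_i,h'}^2$, which is where the analogue of Lemma \ref{LEMMA: TOTAL-ESTIMATE-VARIANCEmain} enters. The term $\bar\VV V_{k,h'+1}$ is close to $\VV V^{\star}_{h'+1}$ up to $E+D$. The law of total variance bounds $\sum_{h'}\VV V^{\pi^k}_{h'+1}$ by $O(H^2)$ per episode, which contributes $O(H^2K')$ after summing over $i$. The terms $E$ and $D$ are dominated by $d^3H^2$ times the estimation gaps $V_{k}-\check V_k$ plus bonuses. I will bound these gaps by the same recursion across steps (Lemmas \ref{lemma:transition}--\ref{lemma:transition1}), which yields a lower-order quantity of the form $d^{O(1)}H^{O(1)}\sqrt{K'}$. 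This gives $\sum\sigma^2\lesssim H^2K'+\text{lower order}$. Combined with $\beta=\tilde O(\sqrt d)$ and the $H$ steps handled via Cauchy--Schwarz, the bonus sum becomes $\sqrt{d\cdot d\iota\cdot H^3K'}$, which is the target leading term. The constant-order leftovers accumulate into the $d^6H^6\iota_2^2$ term.
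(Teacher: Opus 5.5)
Your proposal is correct and follows the paper's proof essentially step for step: the sandwich $2^n\dmin K'\le S$, the lazy-update telescoping with the determinant-doubling constant, the surrogate matrix over the subsequence $\{k_i\}$ for the partial bonus sum, and a self-bounding partial-variance bound driven by the cross-step recursions of Lemmas \ref{lemma:transition} and \ref{lemma:transition1}, followed by solving the quadratic in $\sqrt{K'}$. The deviations are minor but worth stating:
\begin{itemize}
\item Freedman is unnecessary, since Azuma's $O(\sqrt{H^3K'\iota_2})$ is already dominated by $d\sqrt{H^3\iota_2^3K'}$.
\item Anchoring $\bar{\mathbb{V}}V_{k,h'+1}$ at $\mathbb{V}V^\star_{h'+1}$ requires an extra, unstated bridge $\mathbb{V}V^\star_{h'+1}-\mathbb{V}V^{\pi^k}_{h'+1}\le 2H\,\mathbb{P}(V_{k,h'+1}-V^{\pi^k}_{h'+1})$, controlled by Lemma \ref{lemma:transition}, before the law of total variance applies. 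The paper instead passes from $\mathbb{V}V_{k,h'+1}$ to $\mathbb{V}V^{\pi^k}_{h'+1}$ directly in its term $I_4$.
\item Any $\sqrt{K'}$ leftovers in $\sum\sigma^2$ must be absorbed by AM--GM into $H^2K'$ plus a constant. Otherwise a $(K')^{1/4}$ term survives and spoils the final form.
\end{itemize}
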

Combining Lemma~\ref{lemma:gap-numbermain} with \Cref{regretsumgap}, we immediately obtain the expected regret upper bound in \Cref{regret}. Next, we explain the key ideas behind the proof of Lemma~\ref{lemma:gap-numbermain}.
The formal statement is given in Lemma~\ref{lemma:gap-number}, with the complete proof provided thereafter.

\subsection{\texorpdfstring{Proof Sketch of Lemma \ref{lemma:gap-numbermain}}{Proof Sketch of Lemma 5.2}}
To bound $K'(h,n)$ for any $(h,n)\in[H]\times[N]$, we analyze the partial sum
$\sum_{i=1}^{K'}(\qvalue_{k_i,h}(s_h^{k_i},a_h^{k_i})-\qvalue_h^{\pi^{k_i}}(s_h^{k_i},a_h^{k_i}))$, and combine its upper and lower bounds to construct an inequality involving $K'$. By the definition of $k_i(h,n)$ in \Cref{eq:ti}, we immediately obtain the lower bound
\begin{align}
    \sum_{i=1}^{K'}\big(\qvalue_{k_i,h}-\qvalue_h^{\pi^{k_i}}\big)(s_h^{k_i},a_h^{k_i})
   \ge 2^n\dmin K'.\label{eq:lower-Regretmain}
\end{align}
Similar to Equation (E.4) in \citet{he2023nearly}, by the optimism property of $Q_{k,h}(s,a)$ and the Bellman Equation \eqref{eq_Bellman}, we obtain the following upper bound:
\begin{align}
    \left(\qvalue_{k,h}-\qvalue^{\pi^k}_{h}\right)(s_h^k,a_h^k) &\leq \left(\qvalue_{k,h+1}-\qvalue^{\pi^k}_{h+1}\right)(s_{h+1}^k,a_{h+1}^k) +\left(\PP_{s_h^k,a_h^k,h}-\mathbbm{1}_{s_{h+1}^k}\right)\left(\vvalue_{k,h+1}-\vvalue_{h+1}^{\pi^k}\right) \notag\\
   & \quad +4\min\left\{\beta\sqrt{\bphi(s_h^k,a_h^k)^{\top}\bSigma_{k,h}^{-1}\bphi(s_h^k,a_h^k)},H\right\}.\label{eq:errorumain}
\end{align}
The details are provided in \Cref{eq:tele}. Summing \Cref{eq:errorumain} over $h\le h'\le H$ and all $k_i$ gives
\begin{align}
    \sum_{i=1}^{K'}\left(\qvalue_{k_i,h}(s_h^{k_i},a_h^{k_i})-\qvalue_h^{\pi^{k_i}}(s_h^{k_i},a_h^{k_i})\right)  
     &\leq \sum_{i=1}^{K'}\sum_{h'=h}^{H}\Big(\PP_{s_{h'}^{k_i},a_{h'}^{k_i},h'}-\mathbbm{1}_{s_{h'+1}^{k_i}}\Big)\Big(\vvalue_{k_i,h'+1} -\vvalue_{h'+1}^{\pi^{k_i}}\Big)  \notag\\
     &\quad+\sum_{i=1}^{K'}\sum_{h'=h}^{H}4\min\left\{\beta \sqrt{\bphi(s_{h'}^{k_i},a_{h'}^{k_i})^{\top}\bSigma_{k_i,h'}^{-1}\bphi(s_{h'}^{k_i},a_{h'}^{k_i})},H\right\}. \label{eq:upper-Regretmain}
\end{align}
The first term on the RHS of \Cref{eq:upper-Regretmain} is a martingale difference sequence and can be bounded via the Azuma--Hoeffding inequality by $O(\sqrt{H^3\iota_2 K'})$ with high probability. The challenge is bounding the second term, the partial sum of bonuses. We further establish the following result.
\begin{lemma}
\label{sumbonusmain}
        Let $\iota = \log(1+K/(d\lambda))$. For any $h \in [H], n \in [N]$ and parameters $\beta'\ge 1$ and $C\ge 1$, we have
    \begin{align*}
&\sum_{i=1}^{K'}\min\left\{\beta'\sqrt{\bphi(s_h^{k_i},a_h^{k_i})^{\top}\bSigma_{{k_i},h}^{-1}\bphi(s_h^{k_i},a_h^{k_i})},C\right\} \leq 4d^3H^3C\iota+10\beta' d^4H^2\iota+ 2\beta'\sqrt{d \iota\sum_{i=1}^{K'}(\sigma_{{k_i},h}^2 + H)}.
    \end{align*}
\end{lemma}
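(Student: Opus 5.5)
The plan is to replace the matrix $\bSigma_{k_i,h}$, which contains data from all episodes, by a surrogate matrix built only from the subsequence $k_1<k_2<\dots<k_{K'}$. Fix $(h,n)$ and write $\bphi_i:=\bphi(s_h^{k_i},a_h^{k_i})$, $\bar\sigma_i:=\bar\sigma_{k_i,h}$, $\sigma_i:=\sigma_{k_i,h}$. Define
\begin{align*}
\bLambda_i := \lambda \Ib_d + \sum_{j=1}^{i-1}\bar\sigma_j^{-2}\bphi_j\bphi_j^{\top}.
\end{align*}
Every summand of $\bSigma_{k_i,h}$ is PSD, and $\bSigma_{k_i,h}$ contains all episodes $k<k_i$, in particular every $k_j$ with $j<i$. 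Hence $\bSigma_{k_i,h}\succeq \bLambda_i$, and so $\|\bphi_i\|_{\bSigma_{k_i,h}^{-1}}\le\|\bphi_i\|_{\bLambda_i^{-1}}$. The surrogate satisfies the one-step recursion $\bLambda_{i+1}=\bLambda_i+\xb_i\xb_i^{\top}$ with $\xb_i:=\bphi_i/\bar\sigma_i$. Since $\|\xb_i\|_2\le 1/\sqrt{H}$, the elliptical potential lemma applies and gives
\begin{align*}
\sum_{i=1}^{K'}\min\{1,\|\xb_i\|^2_{\bLambda_i^{-1}}\}\le 2d\log\bigl(1+K'/(d\lambda)\bigr)\le 2d\iota .
\end{align*}
Consequently the number of indices with $\|\xb_i\|_{\bLambda_i^{-1}}>1$ is at most $2d\iota$.

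Next I split the indices according to which term attains the maximum in the definition \eqref{defbarsigma} of $\bar\sigma_i^2$. Throughout, I keep the true matrix $\bSigma_{k_i,h}$ inside $\bar\sigma_i$, because that is how the algorithm defines it, and pass to $\bLambda_i$ only when using the potential lemma.

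\emph{Case A: $\bar\sigma_i^2=\max\{\sigma_i^2,H\}$.} Here $\bar\sigma_i^2\le\sigma_i^2+H$.
On indices with $\|\xb_i\|_{\bLambda_i^{-1}}\le 1$, the bonus is at most $\beta'\bar\sigma_i\|\xb_i\|_{\bLambda_i^{-1}}$. Cauchy--Schwarz together with the potential bound then gives the contribution $2\beta'\sqrt{d\iota\sum_i(\sigma_i^2+H)}$, after a constant adjustment.
On indices with $\|\xb_i\|_{\bLambda_i^{-1}}>1$, I bound each term by $C$. This contributes at most $2d\iota C$, which is absorbed into $4d^3H^3C\iota$.

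\emph{Case B: $\bar\sigma_i^2=2d^3H^2 w_i$ with $w_i:=\|\bphi_i\|_{\bSigma_{k_i,h}^{-1}}$.} Then
\begin{align*}
\|\xb_i\|^2_{\bSigma_{k_i,h}^{-1}}=\frac{w_i^2}{\bar\sigma_i^2}=\frac{w_i}{2d^3H^2},
\qquad\text{so}\qquad
w_i= 2d^3H^2\|\xb_i\|^2_{\bSigma_{k_i,h}^{-1}}\le 2d^3H^2\|\xb_i\|^2_{\bLambda_i^{-1}}.
\end{align*}
If $\|\xb_i\|_{\bLambda_i^{-1}}\le1$, the bonus satisfies $\min\{\beta'w_i,C\}\le 2\beta'd^3H^2\min\{1,\|\xb_i\|^2_{\bLambda_i^{-1}}\}$. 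Summing gives at most $4\beta'd^4H^2\iota$, which fits within the $10\beta'd^4H^2\iota$ term.
If $\|\xb_i\|_{\bLambda_i^{-1}}>1$, I again pay $C$ per index, for a total of at most $2dC\iota$. This is absorbed (generously) into $4d^3H^3C\iota$.

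Adding the two cases yields the claimed bound. The constants leave slack for a cruder treatment, for instance using $\min\{\beta'w_i,C\}\le C\cdot 2d^3H^2\min\{1,\|\xb_i\|^2\}/H$-type bounds if the cleaner split fails at some boundary.

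The main obstacle I expect is the first step: the bonus is measured in $\bSigma_{k_i,h}$, which is not a sequential function of the subsequence. The domination $\bSigma_{k_i,h}\succeq\bLambda_i$ is exactly what restores a one-step recursive structure. One further subtlety is that $\bar\sigma_i$ is defined via $\bSigma_{k_i,h}$, not $\bLambda_i$. Case B therefore has to go through the identity $w_i=2d^3H^2\|\xb_i\|^2_{\bSigma^{-1}}$ before dominating by $\bLambda_i$, rather than substituting $\bLambda_i$ inside $\bar\sigma_i$.
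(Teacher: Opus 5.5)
Your argument is correct, and its central device is the same as the paper's: the surrogate $\Lambda_i$ (the paper's $\bSigma'_i$) is built from the subsequence $k_1<\dots<k_{K'}$ and dominated by $\bSigma_{k_i,h}$, which restores a one-step recursion. Where you differ is in the bookkeeping around it.

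The paper splits $\min\{\beta' w_i,C\}\le\beta'\min\{w_i,1\}+C\,\mathbb{I}[w_i\ge 1]$, with $w_i$ measured in the true matrix $\bSigma_{k_i,h}$. It bounds the first part by citing Lemma~\ref{lemma:keysum:temp} for the surrogate sequence. It counts the indices with $w_i\ge 1$ through a second surrogate $\bSigma''$ on that sub-subsequence, combined with the uniform bound $\bar\sigma_{k,h}^2\le 4d^3H^3$, which relies on the truncations built into $\sigma_{k,h}^2$. That is where the $d^3H^3$ factor in front of $C$ comes from.

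You instead split on the normalized quantity $\|\xb_i\|_{\Lambda_i^{-1}}>1$. A single application of the elliptical potential lemma then both controls the good indices and shows there are at most $2d\iota$ bad ones. This gives a $C$-term of order $dC\iota$, needs no upper bound on $\bar\sigma_{k,h}$, and comfortably meets the stated $4d^3H^3C\iota$. By contrast, the paper's own count, if the factor $2d$ from the log-determinant bound in Lemma~\ref{Lemma:abba} is kept, is of order $d^4H^3\iota$ rather than the displayed $4d^3H^3\iota$.

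Your explicit case split on which term attains $\bar\sigma_i^2$ also makes precise a point the paper glosses over. Lemma~\ref{lemma:keysum:temp} assumes $\bar\sigma$ is defined through the same matrix being updated, while here it is defined through $\bSigma_{k_i,h}$ rather than the surrogate. Your identity $w_i=2d^3H^2\|\xb_i\|^2_{\bSigma_{k_i,h}^{-1}}\le 2d^3H^2\|\xb_i\|^2_{\Lambda_i^{-1}}$ shows this mismatch goes in the harmless direction.

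The paper's route is shorter by citation; yours is self-contained and slightly sharper. The fallback in your closing remark is not needed, since the clean split works at every index.
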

The worst-case regret analysis in \citet{he2023nearly} only requires bounding the total summation of the bonuses
$\sum_{k=1}^{K}\beta\sqrt{\bphi(s_h^{k},a_h^{k})^{\top}\bSigma_{{k},h}^{-1}\bphi(s_h^{k},a_h^{k})}$,
where $\bSigma_{{k},h}$ admits a one-step recursive update according to line~24 in \Cref{algorithm1}:
$\bSigma_{k+1,h}=\bSigma_{k,h}+\bar\sigma_{k,h}^{-2}\bphi(s_h^{k},a_h^{k})\bphi(s_h^{k},a_h^{k})^{\top}$.
This recursive structure enables standard techniques.
However, in Lemma \ref{sumbonusmain}, we need to control partial summations of the bonuses.
In this case, the matrices $\bSigma_{k_{i+1},h}$ and $\bSigma_{{k_i},h}$ no longer admit the one-step recursive relationship, which prevents us from directly applying standard arguments.

To address this challenge, for any $h \in [H]$, we introduce a \textbf{surrogate matrix} with $\bSigma'_{1} = \lambda \Ib_{d}$ and
\begin{equation}
\label{surro}
    \bSigma'_{i+1}=\bSigma'_{i}+\bar\sigma_{k_i,h}^{-2}\bphi(s_h^{k_i},a_h^{k_i})\bphi(s_h^{k_i},a_h^{k_i})^{\top},
\end{equation}
which satisfies $\bSigma'_i\preceq\bSigma_{k_i,h}$.

As a result, for each $k_i$, we can upper bound the bonus
$\sqrt{\bphi(s_h^{k_i},a_h^{k_i})^{\top}\bSigma_{{k_i},h}^{-1}\bphi(s_h^{k_i},a_h^{k_i})}$
by
$\sqrt{\bphi(s_h^{k_i},a_h^{k_i})^{\top}(\bSigma'_i)^{-1}\bphi(s_h^{k_i},a_h^{k_i})}$.
Consequently, the partial sum of the bonuses can be bounded by the total sum of
$\sqrt{\bphi(s_h^{k_i},a_h^{k_i})^{\top}(\bSigma'_i)^{-1}\bphi(s_h^{k_i},a_h^{k_i})}$
with respect to $i \in [K']$.
Since $\{\bSigma'_i\}_{i}$ admits a one-step recursive update as shown in \Cref{surro}, the total sum with respect to $i$ can be bounded using standard arguments. Details are provided in the proof of Lemma~\ref{sumbonus}.

We further upper bound the partial sum of the estimated variances $\sigma_{k,h}^{2}$ over $k = k_i$ in the following lemma.
\begin{lemma}[Informal]
\label{LEMMA: TOTAL-ESTIMATE-VARIANCEmain}
With high probability, the partial sum of the estimated variance is bounded as:
\begin{align*}
    \sum\nolimits_{i=1}^{K'}\sum\nolimits_{h=1}^{H}\sigma_{k_i,h}^{2}\leq O\left(H^2K'+d^{10}H^{11}\iota_2^3\right).
\end{align*}
\end{lemma}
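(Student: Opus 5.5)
We bound $\sigma_{k,h}^2$ term by term using its definition \eqref{eq:variance}: $\sigma_{k,h}^2=\bar{\VV}_{s_h^k,a_h^k,h}V_{k,h+1}+E_{k,h}+D_{k,h}+H$. The last term contributes $H^2K'$ directly. For the first term, we replace the estimated variance by the true variance $\VV_{s_h^k,a_h^k,h}V_{k,h+1}$; the difference is controlled by $E_{k,h}$ on the high-probability event of \citet{he2023nearly}. We then split
\[
\VV V_{k,h+1}\le 2\VV V^{\pi^k}_{h+1}+2\VV\bigl(V_{k,h+1}-V^{\pi^k}_{h+1}\bigr)\le 2\VV V^{\pi^k}_{h+1}+2H\,\PP_{s_h^k,a_h^k,h}\bigl(V_{k,h+1}-V^{\pi^k}_{h+1}\bigr).
\]
For each fixed episode $k_i$, the sum over $h$ of $\VV_{s_h^{k_i},a_h^{k_i},h}V_{h+1}^{\pi^{k_i}}$ is a law-of-total-variance quantity. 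Its expectation is at most $H^2$. We sum over the (stopping-time-selected) episodes $k_i$ and apply Freedman's inequality, which gives $O(H^2K'+H^3\iota_2)$.

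The remaining pieces are all bonus- or error-type sums over the subsequence $\{k_i\}$.

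\textbf{Bonus terms.} The terms $E_{k_i,h}$ and the $\bar\beta$ part of $D_{k_i,h}$ are truncated bonuses. We apply Lemma~\ref{sumbonusmain} with $\beta'\in\{\tilde\beta,2H\bar\beta,8d^3H^2\bar\beta\}$ and $C\in\{H^2,d^3H^3\}$. This yields bounds of the form
\[
\mathrm{poly}(d,H)\,\iota+\beta'\sqrt{d\iota\sum_i(\sigma^2_{k_i,h}+H)}.
\]

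\textbf{Error terms.} The $D_{k,h}$ term, $4d^3H^2(\hat\wb-\check\wb)^\top\bphi$, is at most $4d^3H^2\,\PP(V_{k,h+1}-\check V_{k,h+1})$ plus bonuses. Both this and the $2H\,\PP(V_{k,h+1}-V^{\pi^k}_{h+1})$ term are therefore controlled by partial sums of $\PP_{s_h^{k_i},a_h^{k_i},h}(V_{k_i,h+1}-\check V_{k_i,h+1})$.

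This is the step requiring the recursive relation, Lemmas~\ref{lemma:transition} and~\ref{lemma:transition1}. Write $X_{h}:=\sum_i (V_{k_i,h}-\check V_{k_i,h})(s_{h}^{k_i})$. Mimicking \eqref{eq:errorumain} for the gap between the optimistic and pessimistic estimates, we get
\[
X_h\le X_{h+1}+\text{martingale}+\sum_i O\bigl(\min\{\bar\beta\|\bphi\|_{\bSigma^{-1}_{k_i,h}},H\}\bigr).
\]
Unrolling over $h$ and using Lemma~\ref{sumbonusmain} again gives
\[
\sum_h X_h\lesssim H\Bigl(\sqrt{H^3K'\iota_2}+d^4H^3\bar\beta\iota+\bar\beta\sqrt{dH\iota\sum_{i,h}(\sigma^2_{k_i,h}+H)}\Bigr).
\]
The conversion from $\PP(\cdot)$ to values at the realized next states is another martingale term handled by Azuma--Hoeffding.

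\textbf{Self-bounding step.} Collecting everything, with $S:=\sum_{i,h}\sigma^2_{k_i,h}$, we obtain an inequality of the form
\[
S\le O(H^2K')+A\sqrt{K'}+B\sqrt{S+HK'}+C_0,
\]
where the coefficients are
\[
A=\tilde O(d^3H^{4.5}),\qquad B=\tilde O(d^3H^2\bar\beta\sqrt{dH})=\tilde O(d^{5}H^{3.5}),\qquad C_0=\mathrm{poly}(d,H)\iota_2^3 .
\]
Using $ab\le a^2/2+b^2/2$, we absorb $B\sqrt S$ into $S/2+B^2/2$ and bound $A\sqrt{K'}\le H^2K'+A^2/H^2$. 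Solving then gives $S\le O(H^2K'+d^{10}H^{11}\iota_2^3)$.

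The main obstacle is the $D_{k,h}$ term. Its large prefactor $d^3H^2$ multiplies the partial sum of $V_{k_i}-\check V_{k_i}$, so this sum must grow only as $\sqrt{K'}$ (not linearly) along the subsequence, and no spurious $K'$-linear term with a $\mathrm{poly}(d)$ coefficient may appear. This is where the surrogate-matrix bound (Lemma~\ref{sumbonusmain}) and the cross-step recursion must be applied carefully. The truncation at $d^3H^3$ and the $\bar\sigma$ floor $2d^3H^2\|\bphi\|$ together produce the $d^{10}H^{11}$ lower-order term.
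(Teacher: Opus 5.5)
Your overall route is the paper's. You bound the $H$ term directly, correct $\bar{\VV}$ to $\VV$ via $E_{k,h}$, control the total variance with a Freedman/Bernstein event (the paper's $\cE_5$), handle the truncated bonuses with Lemma~\ref{sumbonusmain}, treat the $D_{k,h}$ term via $\cE_1$ plus the optimistic--pessimistic recursion (Lemma~\ref{lemma:transition1}), and finish with self-bounding. Your split $\VV V_{k,h+1}\le 2\VV V^{\pi^k}_{h+1}+2H\,\PP(V_{k,h+1}-V^{\pi^k}_{h+1})$ is a harmless variant of how the paper bounds $\VV V_{k,h+1}-\VV V^{\pi^k}_{h+1}$.

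\textbf{The step that fails.} You assert that $2H\,\PP_{s_h^k,a_h^k,h}(V_{k,h+1}-V^{\pi^k}_{h+1})$ is ``controlled by'' partial sums of $\PP_{s_h^k,a_h^k,h}(V_{k,h+1}-\check V_{k,h+1})$. That requires $\check V_{k,h+1}\le V^{\pi^k}_{h+1}$, which is false in general. Both lie below $V^\star_{h+1}$, but $\pi^k$ is greedy for the optimistic $Q_{k,\cdot}$, not the pessimistic one. For example, take step $H$ with two actions. Action $a_1$ (reward $1$) is well explored, so $\check Q_{k,H}(s,a_1)\approx 1$. Action $a_2$ (reward $0$) is poorly explored, so its bonus gives $Q_{k,H}(s,a_2)>Q_{k,H}(s,a_1)$. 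Then $\check V_{k,H}(s)\approx 1>0=V^{\pi^k}_H(s)$.

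\textbf{The fix.} Run the telescoping \eqref{eq:errorumain} directly on $V_{k,h}-V^{\pi^k}_h$ along $\{k_i\}$, convert $\PP$ to realized next states with Azuma--Hoeffding, and apply Lemma~\ref{sumbonusmain}. This is exactly Lemma~\ref{lemma:transition}, which you cite but never use. It yields a bound of the same form (with $\beta$ in place of $\bar\beta$), so your conclusion is unaffected.

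\textbf{A minor bookkeeping slip.} Your unrolled bound carries the coefficient $H\bar\beta\sqrt{dH\iota}$ on $\sqrt{S}$. Multiplying by $4d^3H^2$ gives $B=\tilde O(d^3H^3\bar\beta\sqrt{dH})=\tilde O(d^5H^{4.5})$, not $\tilde O(d^5H^{3.5})$. The resulting $B^2=\tilde O(d^{10}H^{9})$ still fits inside $d^{10}H^{11}\iota_2^3$.
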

The formal statement is provided in Lemma \ref{LEMMA: TOTAL-ESTIMATE-VARIANCE}, and its complete proof is provided subsequently. Here, we briefly outline the main ideas of the proof. According to the definition in \Cref{eq:variance}, we need to handle components that include $\bar{\VV}_{s_h^k,a_h^k,h}\vvalue_{k,h+1}$, $(\hat{\wb}_{k,h}^{\top}-\check{\wb}_{k,h}^{\top})\bphi(s_h^k,a_h^k)$, the bonus $\sqrt{\bphi(s_h^k,a_h^k)^{\top}\bSigma_{k,h}^{-1}\bphi(s_h^k,a_h^k)}$, and constants that depend only on $d$ and $H$. Since the constants can be bounded easily, and the bonus can be controlled by Lemma \ref{sumbonusmain} when taking partial sum, it suffices to control the first two components. The first component can be approximated by $\VV_{s_h^k,a_h^k,h}\vvalue_{h+1}^{\pi^k}$, where the approximation error can be decomposed into two parts: $\bar{\VV}_{s_h^k,a_h^k,h}\vvalue_{k,h+1} - \VV_{s_h^k,a_h^k,h}\vvalue_{k,h+1} \leq E_{k,h}$ and $\VV_{s_h^k,a_h^k,h}\vvalue_{k,h+1} - \VV_{s_h^k,a_h^k,h}\vvalue_{h+1}^{\pi^k}$.  
The details of bounding $\VV_{s_h^k,a_h^k,h}\vvalue_{h+1}^{\pi^k}$ and the two parts above are given in Lemma \ref{LEMMA: TOTAL-ESTIMATE-VARIANCE} in Appendix \ref{regretproof}. This controls the first component.

For the second component, we approximate it by the term $\PP_{s_h^{k},a_h^{k},h}(\vvalue_{k,h+1}-\check{\vvalue}_{k,h+1})$, where the approximation error can be bounded by the bonus at step $h$ due to the optimistic and pessimistic properties of the two $Q$-estimates, $Q_{k,h}$ and $\check{Q}_{k,h}$. We further establish a recursive structure across steps to bound the partial sum of $\PP_{s_h^{k},a_h^{k},h}(\vvalue_{k,h+1}-\check{\vvalue}_{k,h+1})$ over $k = k_i$ in Lemma \ref{lemma:transition1} in Appendix \ref{property}. This controls the second component and completes the proof of Lemma \ref{LEMMA: TOTAL-ESTIMATE-VARIANCEmain}.

Using Lemma \ref{LEMMA: TOTAL-ESTIMATE-VARIANCEmain}, the partial sum of bonuses is bounded by Lemma \ref{sumbonusmain}. Together with the upper bound for the first term on the RHS of \Cref{eq:upper-Regretmain}, $ \sum_{i=1}^{K'}(\qvalue_{k_i,h}-\qvalue_h^{\pi^{k_i}})(s_h^{k_i},a_h^{k_i})$ is bounded by $O(d^6H^6\iota_2^2 + d\sqrt{H^3\iota_2^3K'}).$ The details are provided in \Cref{eq:I_1,eq:I_2,eq:Upper} in Appendix \ref{regretproof}. Combining this upper bound with the lower bound in \Cref{eq:lower-Regretmain}, we obtain the following inequality
\begin{align*}
    2^n\dmin K'(h,n)&\leq O\Big(d^6H^6\iota_2^2 + d\sqrt{H^3\iota_2^3K'(h,n)}\Big).
\end{align*}
Solving for $K'(h,n)$ completes the proof of Lemma \ref{lemma:gap-numbermain}.
\section{Conclusion}
In this paper, we establish the first gap-dependent regret upper bound for a nearly minimax-optimal algorithm with linear function approximation, showing that LSVI-UCB++ achieves an improved regret upper bound with reduced dependence on the feature dimension and horizon length. Moreover, leveraging the low policy-switching property of LSVI-UCB++, we further develop a concurrent variant and establish the first gap-dependent sample complexity for online MARL with linear function approximation, demonstrating a linear speedup with respect to the number of agents.
\bibliography{main}
\bibliographystyle{arxiv}

\newpage
\appendix
\onecolumn

In the appendix, Appendix \ref{keylemma} collects several auxiliary lemmas that facilitate the proof. Appendix \ref{probabilityevent} establishes a set of high-probability events. Appendix \ref{property} summarizes key properties of the value function estimates in LSVI-UCB++. The proof of \Cref{regret} is provided in Appendix \ref{regretproof}, while the proofs of Corollary \ref{sample} and \Cref{concurrent-RL} are given in \Cref{sampleproof} and \Cref{concurrent}, respectively.

\section{Auxiliary Lemmas}
\label{keylemma}
In this section, we introduce several auxiliary lemmas that will be used to support our proof.
\begin{lemma}
\label{Hoeffding}
\textnormal{(Azuma-Hoeffding Inequality).} Suppose {$\left\{X_k\right\}_{k=0}^\infty$} is a martingale and $|X_k-X_{k-1}|\leq c_k$, $\forall k\in\mathbb{N}_+$, almost surely. Then for any $N \in \mathbb{N}_+$ and $\epsilon >0$, it holds that:
$$\mathbb{P}\left(|X_N-X_{0}|\geq \epsilon\right) \leq 2\exp \left(-\frac{\epsilon^2}{2\sum_{k=1}^{N}c_k^2}\right).$$
\end{lemma}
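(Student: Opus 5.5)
The statement immediately above is the Azuma--Hoeffding inequality (Lemma \ref{Hoeffding}). I will prove it by the standard exponential-moment (Chernoff) method, applied to the martingale increments $D_k := X_k - X_{k-1}$. By the martingale property, $\mathbb{E}[D_k \mid \mathcal{F}_{k-1}] = 0$, where $\mathcal{F}_{k-1}$ is the natural filtration. By assumption, $|D_k| \le c_k$ almost surely.

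\textbf{Step 1: conditional Hoeffding lemma.} For a random variable $D$ with $\mathbb{E}[D\mid\mathcal{G}]=0$ and $|D|\le c$, I will show that for every $t\in\mathbb{R}$,
\begin{align*}
\mathbb{E}[e^{tD}\mid \mathcal{G}] \le e^{t^2c^2/2}.
\end{align*}
By convexity of $x\mapsto e^{tx}$ on $[-c,c]$, we have $e^{tx}\le \frac{c-x}{2c}e^{-tc}+\frac{c+x}{2c}e^{tc}$. Taking conditional expectations and using the zero mean gives the bound $\cosh(tc)$. Comparing Taylor series termwise, $(2j)!\ge 2^j j!$, so $\cosh(tc)\le e^{t^2c^2/2}$.

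\textbf{Step 2: tower property.} Write $S_N = X_N - X_0 = \sum_{k=1}^N D_k$. Conditioning on $\mathcal{F}_{N-1}$ and applying Step 1,
\begin{align*}
\mathbb{E}[e^{tS_N}] = \mathbb{E}\big[e^{tS_{N-1}}\,\mathbb{E}[e^{tD_N}\mid\mathcal{F}_{N-1}]\big] \le e^{t^2c_N^2/2}\,\mathbb{E}[e^{tS_{N-1}}].
\end{align*}
Inducting down to $k=1$ yields $\mathbb{E}[e^{tS_N}]\le \exp\big(t^2\sum_{k=1}^N c_k^2/2\big)$.

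\textbf{Step 3: Markov's inequality and optimization.} For $t>0$, Markov's inequality gives
\begin{align*}
\mathbb{P}(S_N\ge\epsilon)\le \exp\Big(-t\epsilon+\tfrac{t^2}{2}\textstyle\sum_{k=1}^N c_k^2\Big).
\end{align*}
Choosing $t=\epsilon/\sum_{k=1}^N c_k^2$ makes the right-hand side $\exp\big(-\epsilon^2/(2\sum_{k=1}^N c_k^2)\big)$. Applying the same argument to the martingale $-X_k$ bounds $\mathbb{P}(S_N\le-\epsilon)$ by the same quantity. A union bound over the two tails gives the factor $2$ in the statement.

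The only mildly delicate step is Step 1, specifically justifying the convexity bound conditionally. This is handled pointwise, since the chord inequality holds for every realization, so it survives conditional expectation without extra assumptions. The remaining steps are routine.
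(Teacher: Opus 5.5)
Your proof is correct. It is the standard Chernoff-method derivation: a conditional Hoeffding lemma via the chord bound and $\cosh(y)\le e^{y^2/2}$, then the tower property, then Markov's inequality with $t=\epsilon/\sum_{k=1}^N c_k^2$, and finally a union bound over the two tails. The paper gives no proof of its own and simply quotes this lemma as a classical auxiliary result.

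The only thing to add is the degenerate case $c_k=0$, which is immediate since the corresponding increments then vanish almost surely. In that case the chord step divides by $2c$, and if all $c_k=0$ the choice of $t$ is undefined.
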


\begin{lemma}[Lemma 12,  \citealt{abbasi2011improved}]\label{lemma:det}
    Suppose $\Ab, \Bb\in \RR^{d \times d}$ are two positive definite matrices satisfying that $\Ab \succeq \Bb$, then for any $\xb \in \RR^d$, we have
    $$\|\xb\|_{\Ab} \leq \|\xb\|_{\Bb}\cdot \sqrt{\det(\Ab)/\det(\Bb)}.$$
\end{lemma}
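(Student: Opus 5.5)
The plan is to reduce to the case $\Bb = \Ib_d$ by a congruence transformation, and then compare the largest eigenvalue with the determinant.

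First I would use that $\Bb$ is positive definite, so it has a symmetric positive definite square root $\Bb^{1/2}$ with inverse $\Bb^{-1/2}$. Set $\Cb := \Bb^{-1/2}\Ab\Bb^{-1/2}$ and $\yb := \Bb^{1/2}\xb$. The three facts to record are:
\begin{itemize}
\item $\|\xb\|_{\Bb}^2 = \xb^\top \Bb \xb = \|\yb\|_2^2$;
\item $\|\xb\|_{\Ab}^2 = \xb^\top\Ab\xb = \yb^\top \Cb \yb$;
\item $\det(\Cb) = \det(\Ab)/\det(\Bb)$, by multiplicativity of the determinant.
\end{itemize}
Moreover, $\Ab \succeq \Bb$ means $\zb^\top(\Ab-\Bb)\zb \ge 0$ for all $\zb$. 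Substituting $\zb = \Bb^{-1/2}\ub$ gives $\Cb \succeq \Ib_d$. Hence $\Cb$ is symmetric with all eigenvalues satisfying $\lambda_1 \ge \dots \ge \lambda_d \ge 1$.

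Next, by the Rayleigh quotient bound, $\yb^\top\Cb\yb \le \lambda_1 \|\yb\|_2^2$. Since every eigenvalue is at least $1$,
\begin{align*}
\lambda_1 \le \prod_{j=1}^d \lambda_j = \det(\Cb) = \det(\Ab)/\det(\Bb).
\end{align*}
Combining these gives $\|\xb\|_{\Ab}^2 \le \|\xb\|_{\Bb}^2 \cdot \det(\Ab)/\det(\Bb)$. Taking square roots yields the claim.

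There is no real obstacle. The only step needing care is verifying $\Cb \succeq \Ib_d$ from $\Ab \succeq \Bb$ via the congruence, since that is what justifies bounding the largest eigenvalue by the product of all eigenvalues.
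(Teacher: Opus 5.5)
Your proof is correct and matches the standard argument of \citet{abbasi2011improved}, which the paper cites without reproducing. That argument whitens by $\Bb^{-1/2}$, notes $\Bb^{-1/2}\Ab\Bb^{-1/2}\succeq\Ib_d$, and bounds the largest eigenvalue by the product of all eigenvalues, $\det(\Ab)/\det(\Bb)$; the original phrases this as $1+\lambda_{\max}\bigl(\Bb^{-1/2}(\Ab-\Bb)\Bb^{-1/2}\bigr)\le\prod_i(1+\lambda_i)$, which is the same computation.
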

    
\begin{lemma}[Lemma 11,  \citealt{abbasi2011improved}]\label{Lemma:abba}
        Let $\{\xb_k\}_{k=1}^{K}$ be a sequence of vectors in $\RR^d$, matrix $\bSigma_0$ a $d \times d$ positive definite matrix and define $\bSigma_k=\bSigma_0+\sum_{i=1}^{k} \xb_i\xb_i^{\top}$, then we have
        \begin{align}
            \sum_{i=1}^{k} \min\left\{1,\xb_i^{\top} \bSigma_{i-1}^{-1} \xb_i\right\}\leq 2 \log \left(\frac{\det{\bSigma_k}}{\det{\bSigma_0}}\right).\notag
        \end{align}
        In addition, if $\|\xb_i\|_2\leq L$ holds for all $i\in [K]$, then 
        \begin{align}
            \sum_{i=1}^{k} \min\left\{1,\xb_i^{\top} \bSigma_{i-1}^{-1} \xb_i\right\}\leq 2 \log \left(\frac{\det{\bSigma_k}}{\det{\bSigma_0}}\right)\leq 2\left(d\log\left((\text{trace}(\bSigma_0)+kL^2)/d\right)-\log \det \bSigma_0\right).\notag
        \end{align}
    \end{lemma}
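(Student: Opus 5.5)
The statement to prove is Lemma 11 of \citet{abbasi2011improved}, the elliptical potential lemma. The plan is the standard determinant-telescoping argument.

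First, we establish the one-step determinant identity. Since $\bSigma_i = \bSigma_{i-1} + \xb_i\xb_i^\top$ and $\bSigma_{i-1}\succ 0$, the matrix determinant lemma gives
\begin{align*}
\det(\bSigma_i) = \det(\bSigma_{i-1})\det\big(\Ib + \bSigma_{i-1}^{-1/2}\xb_i\xb_i^\top\bSigma_{i-1}^{-1/2}\big) = \det(\bSigma_{i-1})\big(1+\xb_i^\top\bSigma_{i-1}^{-1}\xb_i\big).
\end{align*}
Taking logarithms and telescoping over $i=1,\dots,k$ yields
\begin{align*}
\log\frac{\det \bSigma_k}{\det\bSigma_0}=\sum_{i=1}^k \log\big(1+\xb_i^\top\bSigma_{i-1}^{-1}\xb_i\big).
\end{align*}

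Second, we use the scalar inequality $\min\{1,u\}\le 2\log(1+u)$ for $u\ge 0$. To verify it, take two cases. If $u\le 1$, concavity of $\log(1+\cdot)$ on $[0,1]$ gives $\log(1+u)\ge u\log 2\ge u/2$. If $u>1$, then $2\log(1+u)\ge 2\log 2>1$. Applying this termwise with $u=\xb_i^\top\bSigma_{i-1}^{-1}\xb_i$ and summing gives the first inequality.

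Third, for the second inequality, we bound $\log\det\bSigma_k$ by AM--GM on the eigenvalues of the positive definite matrix $\bSigma_k$:
\begin{align*}
\det\bSigma_k\le \big(\mathrm{trace}(\bSigma_k)/d\big)^d .
\end{align*}
By linearity of the trace, $\mathrm{trace}(\bSigma_k)=\mathrm{trace}(\bSigma_0)+\sum_{i\le k}\|\xb_i\|_2^2\le \mathrm{trace}(\bSigma_0)+kL^2$. Therefore $\log\det\bSigma_k\le d\log\big((\mathrm{trace}(\bSigma_0)+kL^2)/d\big)$, and subtracting $\log\det\bSigma_0$ gives the claim.

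None of these steps is a real obstacle. The only points needing care are the scalar inequality, where the constant $2$ must be checked in both cases (done above), and the invertibility of $\bSigma_{i-1}$, which holds because $\bSigma_0$ is positive definite and each update adds a positive semidefinite term.
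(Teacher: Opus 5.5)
Your proof is correct, and it is the standard argument: telescoping via the matrix determinant lemma, the scalar bound $\min\{1,u\}\le 2\log(1+u)$ (which you check correctly in both cases), and the AM--GM determinant--trace inequality. The paper gives no proof of its own; it cites Lemma 11 of \citet{abbasi2011improved}, and that lemma is proved along these same lines.
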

    
\begin{lemma}[Lemma 4.4, \citealt{zhou2022computationally}]\label{lemma:keysum:temp}
        Let $\{\sigma_k, \hat\beta_k\}_{k \geq 1}$ be a sequence of non-negative numbers, $\alpha, \gamma>0$, $\{\ab_k\}_{k \geq 1} \subset \RR^d$ and $\|\ab_k\|_2 \leq A$. Let $\{\bar\sigma_k\}_{k \geq 1}$ and $\{\hat\bSigma_k\}_{k \geq 1}$ be (recursively) defined as follows: $\hat\bSigma_1 = \lambda\Ib_d$,\ 
        \begin{align}
            \forall k \geq 1,\ \bar\sigma_k = \max\{\sigma_k, \alpha, \gamma\|\ab_k\|_{\hat\bSigma_k^{-1}}^{1/2}\},\ \hat\bSigma_{k+1} = \hat\bSigma_k + \ab_k\ab_k^\top/\bar\sigma_k^2.\notag
        \end{align}
        Let $\iota = \log(1+KA^2/(d\lambda\alpha^2))$. Then we have
        \begin{align}
            \sum_{k=1}^K\min\left\{1, \|\ab_k\|_{\hat\bSigma_k^{-1}}\right\} &\leq 2d \iota +2\gamma^2d\iota+ 2\sqrt{d \iota}\sqrt{\sum_{k=1}^K(\sigma_k^2 + \alpha^2)}\notag.
        \end{align}
    \end{lemma}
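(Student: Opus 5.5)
The plan is to reduce everything to the standard elliptical potential lemma (Lemma~\ref{Lemma:abba}) applied to the normalized vectors $\xb_k := \ab_k/\bar\sigma_k$. By construction $\hat\bSigma_{k+1} = \hat\bSigma_k + \xb_k\xb_k^\top$ with $\hat\bSigma_1=\lambda\Ib_d$, and $\|\xb_k\|_2 \le A/\alpha$ because $\bar\sigma_k\ge\alpha$. Lemma~\ref{Lemma:abba} therefore gives
\begin{align*}
\sum_{k=1}^K \min\bigl\{1,\|\xb_k\|_{\hat\bSigma_k^{-1}}^2\bigr\} \le 2d\log\bigl(1+KA^2/(d\lambda\alpha^2)\bigr) = 2d\iota.
\end{align*}
The link to the target quantity is the identity $\|\ab_k\|_{\hat\bSigma_k^{-1}} = \bar\sigma_k\|\xb_k\|_{\hat\bSigma_k^{-1}}$. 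I then split $[K]$ into three disjoint index sets and bound the sum over each separately.

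\textbf{First set:} $\mathcal{K}_1=\{k: \|\xb_k\|_{\hat\bSigma_k^{-1}}\ge 1\}$. Each term $\min\{1,\|\ab_k\|_{\hat\bSigma_k^{-1}}\}$ is at most $1$. For every $k\in\mathcal{K}_1$ we have $\min\{1,\|\xb_k\|^2_{\hat\bSigma_k^{-1}}\}=1$, so the potential bound gives $|\mathcal{K}_1|\le 2d\iota$. This set contributes at most $2d\iota$.

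\textbf{Second set:} indices outside $\mathcal{K}_1$ with $\bar\sigma_k=\gamma\|\ab_k\|^{1/2}_{\hat\bSigma_k^{-1}}$. Squaring gives $\bar\sigma_k^2=\gamma^2\|\ab_k\|_{\hat\bSigma_k^{-1}}=\gamma^2\bar\sigma_k\|\xb_k\|_{\hat\bSigma_k^{-1}}$, hence $\bar\sigma_k=\gamma^2\|\xb_k\|_{\hat\bSigma_k^{-1}}$. Substituting back,
\begin{align*}
\|\ab_k\|_{\hat\bSigma_k^{-1}}=\gamma^2\|\xb_k\|^2_{\hat\bSigma_k^{-1}}=\gamma^2\min\bigl\{1,\|\xb_k\|^2_{\hat\bSigma_k^{-1}}\bigr\},
\end{align*}
where the last equality uses $k\notin\mathcal{K}_1$. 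Summing and using the potential bound, this set contributes at most $2\gamma^2 d\iota$. This self-referential step is the one point that needs care. The weight $\gamma\|\ab_k\|^{1/2}$ is chosen exactly so that the bonus becomes quadratic in the normalized norm, and I expect this to be the main (though short) obstacle to get right.

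\textbf{Third set:} the remaining indices, where $\|\xb_k\|_{\hat\bSigma_k^{-1}}<1$ and $\bar\sigma_k=\max\{\sigma_k,\alpha\}$, so that $\bar\sigma_k^2\le\sigma_k^2+\alpha^2$. Bound the summand by $\bar\sigma_k\|\xb_k\|_{\hat\bSigma_k^{-1}}$ and apply Cauchy--Schwarz:
\begin{align*}
\sum_k \bar\sigma_k\|\xb_k\|_{\hat\bSigma_k^{-1}} \le \sqrt{\sum_k(\sigma_k^2+\alpha^2)}\sqrt{\sum_k\min\bigl\{1,\|\xb_k\|^2_{\hat\bSigma_k^{-1}}\bigr\}} \le \sqrt{2d\iota}\sqrt{\sum_{k=1}^K(\sigma_k^2+\alpha^2)}.
\end{align*}
This is at most $2\sqrt{d\iota}\sqrt{\sum_{k=1}^K(\sigma_k^2+\alpha^2)}$.

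Adding the three contributions yields the claimed bound $2d\iota+2\gamma^2d\iota+2\sqrt{d\iota}\sqrt{\sum_{k=1}^K(\sigma_k^2+\alpha^2)}$.
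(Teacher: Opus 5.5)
Your proof is correct and takes essentially the same route as the source: the paper does not prove this lemma itself but cites \citet{zhou2022computationally}, whose argument is this same reduction to the elliptical potential lemma via $\xb_k=\ab_k/\bar\sigma_k$. That argument uses the same three-way split of $[K]$: indices with $\|\xb_k\|_{\hat\bSigma_k^{-1}}\ge 1$ (at most $2d\iota$ of them), indices where $\bar\sigma_k$ is attained by $\gamma\|\ab_k\|_{\hat\bSigma_k^{-1}}^{1/2}$ (so the summand equals $\gamma^2\|\xb_k\|^2_{\hat\bSigma_k^{-1}}$), and the remaining indices, handled by Cauchy--Schwarz with $\bar\sigma_k^2\le\sigma_k^2+\alpha^2$.
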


\section{Probability Events}
\label{probabilityevent}
In this section, we introduce several high-probability events for LSVI-UCB++.
\begin{lemma}
    \label{concentration}
    For LSVI-UCB++, we have the following high-probability events.
    \begin{itemize}
        \item[(a)] Define $\cE_1$ as the event that the following inequalities hold for any $(s,a,h,k)\in \sah\times [K]$ simultaneously. 
    \begin{align} 
            \left|\hat{\wb}_{k,h}^{\top}\bphi(s,a)-\PP_{s,a,h}\vvalue_{k,h+1}\right| \leq \bar{\beta} \sqrt{\bphi(s,a)^{\top}\bSigma_{k,h}^{-1}\bphi(s,a)},\notag\\
            \left|\tilde{\wb}_{k,h}^{\top}\bphi(s,a)-\PP_{s,a,h}\vvalue^2_{k,h+1}\right| \leq \tilde{\beta} \sqrt{\bphi(s,a)^{\top}\bSigma_{k,h}^{-1}\bphi(s,a)},\notag\\
            \left|\check{\wb}_{k,h}^{\top}\bphi(s,a)-\PP_{s,a,h}\check{\vvalue}_{k,h+1}\right| \leq \bar{\beta} \sqrt{\bphi(s,a)^{\top}\bSigma_{k,h}^{-1}\bphi(s,a)},\notag
    \end{align}
    where  $$\bar{\beta}=\Theta\left(H\sqrt{d\lambda} +\sqrt{d^3H^2\log^2\left(dHK/(\delta\lambda)\right)}\right),\ \tilde{\beta}= \Theta \left(H^2\sqrt{d\lambda} +\sqrt{d^3H^4\log^2\left(dHK/(\delta\lambda)\right)}\right).$$
    Then the event $\cE$ holds with probability at least $1-7\delta$.
    
    \item[(b)] Define $\cE_2$ as the event such that for any $ (s,a,h,k)\in \sah\times [K]$, the weight vector $\hat{\wb}_{k,h}$ satisfies that
    \begin{align*}
    \left|\hat{\wb}_{k,h}^{\top}\bphi(s,a)-\PP_{s,a,h}\vvalue_{k,h+1}\right| \leq \beta \sqrt{\bphi(s,a)^{\top}\bSigma_{k,h}^{-1}\bphi(s,a)},
\end{align*}
where $\beta=\Theta\left(H\sqrt{d\lambda}+\sqrt{d \log^2\left(1+dKH/(\delta\lambda)\right)}\right)$. Then the event $\cE_2$ holds with probability at least $1-8\delta$.

\item[(c)] 
With probability at least $1-\delta$, the following event holds simultaneously for any $(h,n,K') \in  [H] \times [N]\times[K] $:
\begin{align*}
    \cE_3=\left\{\sum_{i=1}^{K'} \sum_{h'=h}^{H}\left(\PP_{s^{k_i}_{h'},a^{k_i}_{h'},h'} - \mathbbm{1}_{s^{k_i}_{h'+1}}\right)\left(\vvalue_{k_i,h'+1}-\vvalue^{\pi^{k_i}}_{h'+1}\right) \leq 2\sqrt{2H^3K'\log(HNK/\delta)}\right\}.
\end{align*}

\item[(d)] With probability at least $1-\delta$, the following event holds simultaneously for any $(h,n,K') \in  [H] \times [N]\times[K] $:
\begin{align*}
    \cE_4=\left\{\sum_{i=1}^{K'} \sum_{h'=h}^{H}\left(\PP_{s^{k_i}_{h'},a^{k_i}_{h'},h'} - \mathbbm{1}_{s^{k_i}_{h'+1}}\right)\left(\vvalue_{k_i,h'+1}-\check{\vvalue}_{k_i,h'+1}\right) \leq 2\sqrt{2H^3K'\log(HNK/\delta)}\right\}.
\end{align*}

\item[(e)] With probability at least $1-\delta$, the following event holds simultaneously for any $(h,n,K') \in  [H] \times [N]\times[K] $:
\begin{align*}
    \cE_5=\left\{\sum_{i=1}^{K'}\sum_{h=1}^H \VV_{s_h^{k_i},a_h^{k_i},h} \vvalue_{{k_i},h+1}^{\pi^{k_i}}\leq 3H^2K'+3H^3\log(HNK/\delta)\right\}.
\end{align*}

    \end{itemize}
\end{lemma}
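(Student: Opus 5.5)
Parts (a) and (b) are concentration statements about the weighted ridge regression estimators $\hat{\wb}_{k,h},\tilde{\wb}_{k,h},\check{\wb}_{k,h}$. They do not involve the new subsequence $\{k_i\}$, so I would import them directly from the analysis of \citet{he2023nearly}. There they are established via a Bernstein-type self-normalized concentration inequality together with a covering argument over the (monotone, rarely updated) value-function classes. The only change is to restate them with the logarithmic factor $\log(dHK/(\delta\lambda))$ and our choice $\lambda=1/H^2$, and to collect their failure probabilities ($7\delta$ and $8\delta$) by a union bound.

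For (c) and (d), I would use Azuma--Hoeffding (Lemma \ref{Hoeffding}) on a suitably re-indexed martingale. Fix $(h,n)$ and order all step-level events lexicographically by $(k,h')$, with filtration $\mathcal{F}_{k,h'}$ generated by everything observed up to $s_{h'}^k$. The key observation is that whether $k$ is one of the $k_i(h,n)$ is determined by $(Q_{k,h}-Q_h^{\pi^k})(s_h^k,a_h^k)$. Here $Q_{k,h}$ is fixed before episode $k$, $\pi^k$ is fixed, and $a_h^k$ is a function of $s_h^k$, so this indicator is $\mathcal{F}_{k,h}$-measurable. Hence for every $h'\ge h$ the term $(\PP_{s_{h'}^k,a_{h'}^k,h'}-\mathbbm{1}_{s_{h'+1}^k})(V_{k,h'+1}-V^{\pi^k}_{h'+1})$, multiplied by this indicator, is a martingale difference bounded by $2H$, since both $V_{k,h'+1}$ and $V^{\pi^k}_{h'+1}$ lie in $[0,H]$.

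Re-index by $j=(i,h')$, i.e., only along the selected episodes. The $i$-th selected episode is a stopping time, so the partial sums over the first $K'$ selected episodes form a martingale with at most $K'H$ increments of size at most $2H$. Azuma then gives $2\sqrt{2H^3K'\log(\cdot/\delta)}$ for each fixed $(h,n,K')$. A union bound over $(h,n,K')\in[H]\times[N]\times[K]$ produces the $\log(HNK/\delta)$ factor. Part (d) is identical, since $\check V_{k,h'+1}\in[0,H]$.

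For (e), I would use the standard law-of-total-variance telescoping. Write $\VV_{s,a,h'}V^{\pi}_{h'+1}=[\PP (V^\pi_{h'+1})^2-(V^\pi_{h'+1})^2(s_{h'+1})]+[(V^\pi_{h'+1})^2(s_{h'+1})-(\PP V^\pi_{h'+1})^2]$. Since $\PP V^\pi_{h'+1}=V^\pi_{h'}(s_{h'})-r_{h'}$ along the policy's own actions, we have $(V^\pi_{h'})^2-(\PP V^\pi_{h'+1})^2\le 2Hr_{h'}\le 2H$. The second brackets therefore telescope to at most $2H^2$ per episode, giving $2H^2K'$ in total. The first brackets form a martingale with increments bounded by $H^2$, so Azuma gives $H^2\sqrt{2K'H\log(\cdot)}\le H^2K'+H^3\log(\cdot)$ by AM--GM. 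Union-bounding as before yields $3H^2K'+3H^3\log(HNK/\delta)$.

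The main obstacle is exactly this last martingale step. The sum in (e) runs over all $h'\in[H]$, whereas the selection indicator for $k_i(h,n)$ is only revealed at step $h$, so the increments with $h'<h$ are not adapted to the filtration along the selected subsequence. I would handle $h'\ge h$ exactly as in (c). For the prefix $h'<h$, I would first try to bound the episode-level quantity $\sum_{h'<h}\VV_{s_{h'}^k,a_{h'}^k,h'}V^{\pi^k}_{h'+1}$ without selection, using a Freedman-type bound that holds uniformly over all subsets. Failing that, I would redefine the filtration at the episode level and bound each selected episode's prefix via the deterministic telescoping together with a conditional bound $\EE[\cdot\mid\mathcal{F}_{k,1}]\le H^2$. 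This is where I expect the most care is needed, and possibly a slightly larger constant or logarithmic factor.
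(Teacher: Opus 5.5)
\paragraph{Verdict.} There is a genuine gap in part (e). Parts (a)--(d) match the paper's proof.

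\paragraph{Parts (a)--(d).} Parts (a) and (b) are imported from He et al.\ (2023). Parts (c) and (d) are Azuma--Hoeffding with increments bounded by $2H$, followed by a union bound over $(h,n,K')$. Your re-indexing along the selected steps, using that the selection indicator is $\mathcal{F}_{k,h}$-measurable, is what actually makes the paper's one-line argument rigorous. Plain Azuma with deterministic $c_k$ over all episodes would give $\sqrt{H^3K}$, not $\sqrt{H^3K'}$.

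\paragraph{The gap in (e).} The gap is exactly where you flag it, and it is not a matter of constants or logarithms. For $h'<h$, the increment $\PP_{s_{h'}^k,a_{h'}^k,h'}(V^{\pi^k}_{h'+1})^2-(V^{\pi^k}_{h'+1}(s_{h'+1}^k))^2$ is correlated with whether $k$ is selected, since selection depends on $s_h^k$. Neither of your fallbacks repairs this.
\begin{itemize}
\item A bound uniform over all $K'$-subsets costs $\log\binom{K}{K'}\approx K'\log(eK/K')$ inside the logarithm. Azuma then gives a deviation of order $H^{5/2}K'\sqrt{\log(eK/K')}$, and Freedman's range term is of order $H^2K'\log(eK/K')$. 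Both swamp $3H^2K'$.
\item The episode-level bound $\mathbb{E}[\sum_{h'}\VV_{s_{h'}^k,a_{h'}^k,h'}V^{\pi^k}_{h'+1}\mid\mathcal{F}_{k,1}]\le H^2$ only yields $O(H^2K)$ for the sum over selected episodes, because selection is not $\mathcal{F}_{k,1}$-measurable. Getting $O(H^2K')$ would require the prefix variance conditional on selection to be $O(H^2)$, which is false in general.
\end{itemize}

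\paragraph{Why adaptedness alone cannot give (e).} Bound (e) as written cannot follow from the law of total variance plus adaptedness of the selection up to step $h$. Take states $m,t,b$ with a single action. States $t$ and $b$ are absorbing with rewards $1$ and $0$. State $m$ has reward $1/2$ and moves to $t,b,m$ with probabilities $1/4,1/4,1/2$. Start at $s_1=m$.
\begin{itemize}
\item The values are $V_{h'+1}(t)=H-h'$, $V_{h'+1}(b)=0$ and $V_{h'+1}(m)=(H-h')/2$, so the variance at $m$ in step $h'$ is $(H-h')^2/8$.
\item The rule ``select $k$ iff $s_h^k=m$'' with $h=\lfloor H/2\rfloor$ is $\mathcal{F}_{k,h}$-measurable.
\item On selected episodes, every prefix increment of your first bracket equals $+(H-h')^2/8$. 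Each selected episode therefore contributes at least $\sum_{h'<h}(H-h')^2/8=\Omega(H^3)$.
\item Hence once $H$ exceeds a constant and $K$ is large, the left side of (e) is $\Omega(H^3K')$ and the stated bound fails.
\end{itemize}
Whether LSVI-UCB++'s specific selection rule can realize such a bias is a separate question, but nothing in either proof rules it out. The paper's own justification of (e), which cites Lemma C.5 of Jin et al.\ (2018) plus a union bound, passes over exactly this selection bias. So there is no trick in the paper that you are missing.

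\paragraph{What your argument does prove, and the repair.} Your argument cleanly proves the suffix version $\sum_{i=1}^{K'}\sum_{h'=h}^{H}\VV_{s_{h'}^{k_i},a_{h'}^{k_i},h'}V^{\pi^{k_i}}_{h'+1}\le 3H^2K'+3H^3\log(HNK/\delta)$:
\begin{itemize}
\item From $h$ to $H$, the second brackets telescope to at most $2H^2$ per episode.
\item The first brackets are adapted exactly as in (c).
\item AM--GM absorbs the Azuma term.
\end{itemize}
The suffix range $h'\ge h$ is the only one in which (c) and (d) are available. It is also the only range the bonus sum in the proof of Lemma~\ref{lemma:gap-number} actually uses. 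The natural repair is therefore to state (e), together with the downstream variance and transition lemmas, over $h'\ge h$ only.
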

\begin{proof}[Proof of Lemma \ref{concentration}]
    Parts (a) and (b) follow directly from Lemmas B.1 and B.5 in \citet{he2023nearly}, respectively. Part (e) follows from Lemma C.5 in \citet{jin2018q} by applying the Azuma-Bernstein inequality, followed by a union bound over all $(h,n,K') \in [H] \times [N] \times [K]$.
    
    Next we prove the parts (c) and (d). By Lemma \ref{Hoeffding}, with probability at least $1-\delta/(HNK)$, for any fixed $(h,n,K') \in  [H] \times [N]\times[K] $,  we have
    $$\sum_{i=1}^{K'} \sum_{h'=h}^{H}\left(\PP_{s^{k_i}_{h'},a^{k_i}_{h'},h'} - \mathbbm{1}_{s^{k_i}_{h'+1}}\right)\left(\vvalue_{k_i,h'+1}-\vvalue^{\pi^{k_i}}_{h'+1}\right) \leq 2\sqrt{2H^3K'\log(HNK/\delta)},$$
where we use the fact that $$\left(\PP_{s^{k_i}_{h'},a^{k_i}_{h'},h'} - \mathbbm{1}_{s^{k_i}_{h'+1}}\right)\left(\vvalue_{k_i,h'+1}-\vvalue^{\pi^{k_i}}_{h'+1}\right)$$ forms a martingale difference sequence bounded by $2H$. Taking a union bound for all $(h,n,K') \in  [H] \times [N]\times[K] $ finishes the proof of (c). The proof of part (d) proceeds similarly, with $\vvalue^{\pi^{k_i}}_{h'+1}$ replaced by $\check{\vvalue}_{k_i,h'+1}$.
\end{proof}

\section{Properties of Value Function Estimates}
\label{property}
Following Lemma B.4 and Lemma B.2 in \citet{he2023nearly}, we obtain the following two lemmas.
\begin{lemma}\label{lemma:optimistic}
    On the event $\cE_1 \cap \cE_2$, for any $(s,a,h,k)\in \sah\times [K]$, we have 
    $$\qvalue_{k,h}(s,a)\ge \qvalue_{h}^\star(s,a) \ge \check{\qvalue}_{k,h}(s,a),\ \vvalue_{k,h}(s)\ge \vvalue_{h}^\star(s) \ge \check{\vvalue}_{k,h}(s).$$
\end{lemma}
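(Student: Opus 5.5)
The plan is an induction on the step $h$, running backward from $h=H+1$ down to $h=1$, and nested inside an induction on the episode index $k$. The base cases are immediate. At $k=0$ the algorithm initializes $\qvalue_{0,h}=H\ge \qvalue_h^\star$ and $\check{\qvalue}_{0,h}=0\le \qvalue_h^\star$. At $h=H+1$ all value functions are zero. The $V$-statements will follow from the $Q$-statements: since $\vvalue_{k,h}(s)=\max_a\qvalue_{k,h}(s,a)\ge \max_a \qvalue_h^\star(s,a)=\vvalue_h^\star(s)$, and the same argument works for the pessimistic side, it suffices to prove the $Q$-inequalities at $(k,h)$ assuming the $V$-inequalities at $(k,h+1)$ and all statements at episode $k-1$.

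Now fix $(k,h)$. If no update is triggered at episode $k$, then $\qvalue_{k,h}=\qvalue_{k-1,h}$ and $\check{\qvalue}_{k,h}=\check{\qvalue}_{k-1,h}$, so the claim is the induction hypothesis at $k-1$. If an update is triggered, $\qvalue_{k,h}$ is a minimum of three terms. Two of them, $\qvalue_{k-1,h}$ and $H$, dominate $\qvalue_h^\star$ by induction and because $\qvalue_h^\star\le H$. It remains to handle the regression term. On $\cE_2$ we have
\[
r_h(s,a)+\hat{\wb}_{k,h}^\top\bphi(s,a)+\beta\|\bphi(s,a)\|_{\bSigma_{k,h}^{-1}}\ge r_h(s,a)+\PP_{s,a,h}\vvalue_{k,h+1}\ge r_h(s,a)+\PP_{s,a,h}\vvalue^\star_{h+1}=\qvalue_h^\star(s,a),
\]
using the inductive hypothesis $\vvalue_{k,h+1}\ge\vvalue^\star_{h+1}$ and the monotonicity of $\PP_{s,a,h}$. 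The pessimistic side is symmetric. $\check{\qvalue}_{k,h}$ is a maximum of $\check{\qvalue}_{k-1,h}$, $0$, and a regression term. On $\cE_1$ the regression term is at most $r_h+\PP_{s,a,h}\check{\vvalue}_{k,h+1}\le r_h+\PP_{s,a,h}\vvalue^\star_{h+1}=\qvalue^\star_h$, and the other two terms are at most $\qvalue_h^\star$ trivially or by induction.

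The main obstacle is the concentration step. $\cE_1$ and $\cE_2$ are stated for the data-dependent functions $\vvalue_{k,h+1}$ and $\check{\vvalue}_{k,h+1}$. In \citet{he2023nearly}, establishing $\cE_2$ with the small radius $\beta=\tilde O(\sqrt d)$ itself relies on optimism and pessimism holding at later steps, together with the variance bounds, so the argument is intertwined. Since Lemma \ref{concentration} is taken as given here, the induction above is all that is needed. If it were not given, I would follow \citet{he2023nearly}: define the events so that they are checked at $(k,h)$ only under optimism at $(k,h+1)$, and close the induction jointly with the concentration statement.
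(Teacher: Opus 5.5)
Your proof is correct and is essentially the argument the paper relies on. The paper gives no proof of its own; it invokes Lemma B.4 of \citet{he2023nearly}, which is exactly this induction, forward over episodes and backward over steps. That proof uses $\cE_2$ for the optimistic regression term, the $\bar\beta$-bound on $\check{\wb}_{k,h}$ in $\cE_1$ for the pessimistic one, and the previous-episode estimates together with the constants $H$ and $0$ for the remaining terms in the $\min$ and $\max$. Your caveat that establishing $\cE_2$ with the small radius $\beta$ is itself intertwined with optimism in \citet{he2023nearly} is accurate. As you note, it is not needed here, because the lemma is stated conditionally on $\cE_1\cap\cE_2$.
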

\begin{lemma}\label{lemma:varaince}
      On the event $\cE_1 \cap \cE_2$, for any $(h,k) \in [H] \times [K]$, the estimated variance satisfies
      \begin{align*}
          &\left|\bar{\VV}_{s_h^{k},a_h^{k},h}\vvalue_{{k},h+1}-\VV_{s_h^{k},a_h^{k},h}\vvalue_{{k},h+1}\right|\leq E_{k,h}.
      \end{align*}
\end{lemma}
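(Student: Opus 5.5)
This lemma is the variance-estimation lemma cited from \citet{he2023nearly}. The plan is to split the deviation into a second-moment error and a squared-mean error, control each with the concentration bounds on $\cE_1$, and use the truncation at $[0,H^2]$ to bound everything by $E_{k,h}$.

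Write $x=\bphi(s_h^k,a_h^k)$ and $b=\|x\|_{\bSigma_{k,h}^{-1}}$. Since $0\le \vvalue_{k,h+1}\le H$, the true quantities satisfy
\[
\PP_{s_h^k,a_h^k,h}\vvalue_{k,h+1}^2\in[0,H^2], \qquad \big(\PP_{s_h^k,a_h^k,h}\vvalue_{k,h+1}\big)^2\in[0,H^2].
\]
So the variance is
\[
\VV_{s_h^k,a_h^k,h}\vvalue_{k,h+1}=\PP\vvalue_{k,h+1}^2-(\PP\vvalue_{k,h+1})^2 .
\]
The estimate $\bar{\VV}_{s_h^k,a_h^k,h}\vvalue_{k,h+1}$ is the difference of the two truncated regression outputs. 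By the triangle inequality,
\[
\big|\bar{\VV}_{s_h^k,a_h^k,h}\vvalue_{k,h+1}-\VV_{s_h^k,a_h^k,h}\vvalue_{k,h+1}\big|
\le \big|[\tilde{\wb}_{k,h}^\top x]_{[0,H^2]}-\PP\vvalue_{k,h+1}^2\big|
+\big|[(\hat{\wb}_{k,h}^\top x)^2]_{[0,H^2]}-(\PP\vvalue_{k,h+1})^2\big| .
\]

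\textbf{First term.} Truncation onto $[0,H^2]$ is $1$-Lipschitz, and the target already lies in $[0,H^2]$. Hence the first term is at most $\min\{|\tilde{\wb}_{k,h}^\top x-\PP\vvalue^2_{k,h+1}|,H^2\}$. On $\cE_1$ (second inequality of Lemma~\ref{concentration}(a)) this is at most $\min\{\tilde\beta b,H^2\}$.

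\textbf{Second term.} The same Lipschitz argument bounds it by $\min\{|(\hat{\wb}_{k,h}^\top x)^2-(\PP\vvalue_{k,h+1})^2|,H^2\}$. I factor $u^2-v^2=(u-v)(u+v)$ with $u=\hat{\wb}_{k,h}^\top x$ and $v=\PP\vvalue_{k,h+1}$. On $\cE_1$, $|u-v|\le\bar\beta b$.

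\textbf{Main obstacle.} The difficulty is that $u$ is not a priori bounded, so $|u+v|$ is not automatically $\le 2H$. I would handle this by cases.

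If $\bar\beta b\ge H$, then $2H\bar\beta b\ge H^2$. The target bound $\min\{2H\bar\beta b,H^2\}$ then equals $H^2$, and the trivial bound $H^2$ on a difference of two numbers in $[0,H^2]$ suffices.

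If $\bar\beta b<H$, then $|u|\le v+\bar\beta b<2H$. Truncation to $[0,H^2]$ can only move $u^2$ closer to $v^2\in[0,H^2]$, so the second term is at most $|u-v|\,|u+v|$. Here $|u+v|\le 2v+\bar\beta b\le 3H$.

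This yields a factor $3H$ instead of the $2H$ in $E_{k,h}$. To recover $2H$, I would first truncate $u$ to $[0,H]$, which is harmless: the truncation only reduces $|u-v|$ since $v\in[0,H]$, and squaring is monotone on $[0,\infty)$. Concretely, replacing $u$ by $u'=[u]_{[0,H]}$ gives
\[
|u'^2-v^2|\le |u'-v|\,(u'+v)\le \bar\beta b\cdot 2H .
\]
It remains to check $[u^2]_{[0,H^2]}=u'^2$, which holds when $u\ge0$. When $u<0$, the squared value $u^2$ might exceed $v^2$. But then $|u|\le|u-v|\le \bar\beta b<H$, so $u^2\le (\bar\beta b)^2\le 2H\bar\beta b$, and the difference is still at most $2H\bar\beta b$.

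Combining the first-term bound $\min\{\tilde\beta b,H^2\}$ with the second-term bound $\min\{2H\bar\beta b,H^2\}$ gives exactly $E_{k,h}$. Event $\cE_2$ is not needed beyond its role in Lemma~\ref{lemma:optimistic}, which guarantees that $\vvalue_{k,h+1}\in[0,H]$ is the function being regressed.
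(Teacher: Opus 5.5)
Your proof is correct and is essentially the standard argument behind Lemma B.2 of \citet{he2023nearly}, which the paper cites without reproducing. That argument splits the error into a second-moment part and a squared-mean part, uses that truncation is $1$-Lipschitz with the true targets already in $[0,H^2]$, and applies the two $\cE_1$ confidence bounds. Your case analysis for $\hat{\wb}_{k,h}^\top\bphi(s_h^k,a_h^k)<0$ is a worthwhile extra: the paper writes the squared-mean estimate as $[(\hat{\wb}_{k,h}^\top\bphi(s_h^k,a_h^k))^2]_{[0,H^2]}$, squaring before truncating rather than first truncating $\hat{\wb}_{k,h}^\top\bphi(s_h^k,a_h^k)$ to $[0,H]$, so the factor $2H$ is not automatic and your argument is what secures it.
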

The following lemma provides a bound on the partial sum of the bonuses.
\begin{lemma}[Restatement of Lemma \ref{sumbonusmain}]
\label{sumbonus}
        Let $\iota = \log(1+K/(d\lambda))= \log(1+TH/d)$. For any $h \in [H], n \in [N]$ and parameters $\beta'\ge 1$ and $C\ge 1$, the partial sum of bonuses can be bounded as
    \begin{align*}
\sum_{i=1}^{K'}\min\left\{\beta'\sqrt{\bphi(s_h^{k_i},a_h^{k_i})^{\top}\bSigma_{{k_i},h}^{-1}\bphi(s_h^{k_i},a_h^{k_i})},C\right\}\leq 4d^3H^3C\iota +10\beta' d^4H^2\iota+ 2\beta'\sqrt{d \iota\sum_{i=1}^{K'}\left(\sigma_{{k_i},h}^2 + H\right)}.
    \end{align*}
\end{lemma}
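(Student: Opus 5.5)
We reduce the partial sum to a full sum over a recursively updated matrix sequence, namely the surrogate matrices $\bSigma'_i$ from \eqref{surro}, and then apply Lemma~\ref{lemma:keysum:temp}.

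\textbf{Step 1: surrogate matrices.} Fix $h$ and $n$, and write $\bphi_i := \bphi(s_h^{k_i},a_h^{k_i})$. Define $\bSigma'_1=\lambda\Ib_d$ and $\bSigma'_{i+1}=\bSigma'_i+\bar\sigma_{k_i,h}^{-2}\bphi_i\bphi_i^\top$. Since $k_1<k_2<\cdots$ and $\bSigma_{k_i,h}=\lambda\Ib_d+\sum_{k<k_i}\bar\sigma_{k,h}^{-2}\bphi(s_h^k,a_h^k)\bphi(s_h^k,a_h^k)^\top$, the matrix $\bSigma_{k_i,h}$ contains every summand of $\bSigma'_i$ plus further positive semidefinite terms. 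Hence $\bSigma'_i\preceq\bSigma_{k_i,h}$, which gives $\|\bphi_i\|_{\bSigma_{k_i,h}^{-1}}\le\|\bphi_i\|_{(\bSigma'_i)^{-1}}$. Each summand is therefore at most $\min\{\beta'\|\bphi_i\|_{(\bSigma'_i)^{-1}},C\}$, which is in turn at most $\beta'\min\{1,\|\bphi_i\|_{(\bSigma'_i)^{-1}}\}$ (using $C\ge1$, or simply when $C\ge\beta'$). This step also requires a case split on whether the minimum is attained at $C$.

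\textbf{Step 2: mismatch of the weights.} To apply Lemma~\ref{lemma:keysum:temp} with $\ab_i=\bphi_i$, $\hat\bSigma_i=\bSigma'_i$, $\sigma_i=\sigma_{k_i,h}$, $\alpha=\sqrt H$ and $\gamma^2=2d^3H^2$, we need $\bar\sigma_{k_i,h}^2=\max\{\sigma_{k_i,h}^2,H,\gamma^2\|\bphi_i\|_{(\bSigma'_i)^{-1}}\}$. The algorithm, however, uses $\|\bphi_i\|_{\bSigma_{k_i,h}^{-1}}$, which is smaller, so $\bar\sigma_{k_i,h}$ may fall below the value the lemma requires. This mismatch is the main obstacle.

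\textbf{Step 3: handling the mismatch.} I would split indices into two sets.

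\emph{Good indices} are those with $\bar\sigma_{k_i,h}^2\ge 2d^3H^2\|\bphi_i\|_{(\bSigma'_i)^{-1}}$, or more generally those where the required maximum holds. On these indices I would rerun the proof of Lemma~\ref{lemma:keysum:temp}, which only uses the inequality $\bar\sigma_i\ge\gamma\|\ab_i\|^{1/2}$ for the indices being summed. Cauchy--Schwarz together with the elliptical potential (Lemma~\ref{Lemma:abba}) then yields $2d\iota+2\gamma^2 d\iota+2\sqrt{d\iota\sum_i(\sigma_{k_i,h}^2+H)}$. Multiplying by $\beta'$ gives the $\beta' d^4H^2\iota$ and $2\beta'\sqrt{\cdot}$ terms.

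\emph{Bad indices} are those where $\|\bphi_i\|_{(\bSigma'_i)^{-1}}$ greatly exceeds $\|\bphi_i\|_{\bSigma_{k_i,h}^{-1}}$. Following the rare-switching idea, I would bound each such summand by $C$ and count these indices through determinant doubling. Lemma~\ref{lemma:det} shows that a large norm ratio forces $\det\bSigma'_{i+1}/\det\bSigma'_i$ to be bounded below by a constant factor, or alternatively forces $\|\bphi_i\|_{(\bSigma'_i)^{-1}}\ge 1/(d^3H^2)$ so that $\bar\sigma$ cannot control the term. Since $\log\det$ grows by at most $d\iota$ in total, and the weights $\bar\sigma^{-2}\ge$ scale like $1/(d^3H^3)$ in the worst case, the number of bad indices is at most of order $d^3H^3\iota$. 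This yields the $4d^3H^3C\iota$ term.

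I expect the delicate part to be choosing the threshold between good and bad indices so that these counts come out as $O(d^3H^3\iota)$. My first attempt would be the criterion $\|\bphi_i\|_{(\bSigma'_i)^{-1}}\bar\sigma_{k_i,h}^{-1}\ge 1$ combined with $\min\{1,x^2\}$ summation: the number of such indices is at most $2\log\det$ ratio, which is $O(d\iota)$, and then the factor $d^3H^3$ arises when converting unweighted counts through $\bar\sigma^2\le d^3H^3$.
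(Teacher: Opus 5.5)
\textbf{There is a genuine gap.} It sits in a different place from where you locate the difficulty.

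\textbf{The Step 2 ``mismatch'' is not an obstacle.} Your claim in Step 3 that the proof of Lemma~\ref{lemma:keysum:temp} ``only uses $\bar\sigma_i\ge\gamma\|\ab_i\|^{1/2}$'' has the direction backwards. That proof uses $\bar\sigma_k\ge\alpha$ only to bound $\log\det$, which produces $\iota$. It counts the indices with $\|\ab_k/\bar\sigma_k\|_{\hat\bSigma_k^{-1}}\ge 1$ via the elliptical potential. On the remaining indices it uses either $\bar\sigma_k^2\le\sigma_k^2+\alpha^2$ (for Cauchy--Schwarz) or $\bar\sigma_k^2\le\gamma^2\|\ab_k\|_{\hat\bSigma_k^{-1}}$, which yields $\|\ab_k\|_{\hat\bSigma_k^{-1}}\le\gamma^2\|\ab_k/\bar\sigma_k\|^2_{\hat\bSigma_k^{-1}}$. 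These are \emph{upper} bounds on $\bar\sigma_k$.

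Since $\|\bphi_i\|_{\bSigma_{k_i,h}^{-1}}\le\|\bphi_i\|_{(\bSigma'_i)^{-1}}$, the algorithm's $\bar\sigma^2_{k_i,h}=\max\{\sigma^2_{k_i,h},H,2d^3H^2\|\bphi_i\|_{\bSigma_{k_i,h}^{-1}}\}$ satisfies exactly these bounds with $\hat\bSigma_i=\bSigma'_i$. Also $\bar\sigma^2_{k_i,h}\ge H$ supplies the $\iota$ bound. So the lemma's argument runs over \emph{all} indices with the surrogate $\bSigma'_i$. This is what the paper does (tacitly), and your ``bad indices'' need no separate treatment.

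\textbf{The truncation at $C$ is where your argument actually breaks.} The inequality $\min\{\beta'y,C\}\le\beta'\min\{1,y\}$ in Step 1 holds when $C\le\beta'$, not when $C\ge1$ or $C\ge\beta'$. In the applications one has, e.g., $C=H$ with $\beta'=\beta\approx\sqrt d$. So you need $\min\{\beta'y,C\}\le\beta'\min\{1,y\}+C\,\mathbb{I}[y\ge 1]$ together with a count of large-bonus indices.

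Your count of bad indices through Lemma~\ref{lemma:det} does not deliver this. That lemma gives $\|\bphi_i\|_{(\bSigma'_i)^{-1}}\le\|\bphi_i\|_{\bSigma_{k_i,h}^{-1}}\sqrt{\det\bSigma_{k_i,h}/\det\bSigma'_i}$. This is a ratio between the full and surrogate matrices, inflated by episodes \emph{outside} the subsequence, and it implies nothing about $\det\bSigma'_{i+1}/\det\bSigma'_i$. If the $k_i$ are sparse among episodes with the same feature, the norm ratio stays large for essentially every $i$. Charging $C$ to each such index then costs $CK'$.

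\textbf{How the paper counts the large-bonus indices.} It splits on $\|\bphi_i\|_{\bSigma_{k_i,h}^{-1}}\ge 1$, measured in the true matrix. It counts these $m$ indices with a second surrogate $\bSigma''$ built only from them. Since $\bSigma''_{t-1}\preceq\bSigma_{k_{i_t},h}$, each such index has $\|\bphi\|^2_{(\bSigma''_{t-1})^{-1}}\ge 1$. Meanwhile $\bar\sigma^2\le 4d^3H^3$, and Lemma~\ref{Lemma:abba} applied to $\bphi/\bar\sigma$ bounds the weighted sum.

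Your last paragraph contains this conversion idea, but applies it to the set $\|\bphi_i\|_{(\bSigma'_i)^{-1}}\ge\bar\sigma_{k_i,h}$. That set is already absorbed inside Lemma~\ref{lemma:keysum:temp} at cost $\beta'$ per index; it is not where $C$ enters.

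Incidentally, your arithmetic there, $O(d\iota)$ times $d^3H^3$, is what the potential argument actually yields. Lemma~\ref{Lemma:abba} contributes $2d\iota$, so the count is at most $8d^4H^3\iota$, rather than the $4d^3H^3\iota$ displayed in the statement.
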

\begin{proof}[Proof of Lemma \ref{sumbonus}]
For a fix step $h\in[H]$ and $n \in [N]$, since $\beta' \geq 1$, the summation of bonuses is bounded by
\begin{align}
\sum_{i=1}^{K'}\min\left\{\beta'\sqrt{\bphi(s_h^{k_i},a_h^{k_i})^{\top}\bSigma_{{k_i},h}^{-1}\bphi(s_h^{k_i},a_h^{k_i})},C\right\}&\leq \sum_{i=1}^{K'}\beta'\min\left\{\sqrt{\bphi(s_h^{k_i},a_h^{k_i})^{\top}\bSigma_{k_i,h}^{-1}\bphi(s_h^{k_i},a_h^{k_i})},1\right\}\notag\\
&+C\sum_{i=1}^{K'}\mathbb{I} \left[\sqrt{\bphi(s_h^{k_i},a_h^{k_i})^{\top}\bSigma_{k_i,h}^{-1}\bphi(s_h^{k_i},a_h^{k_i})} \geq 1\right]. \label{eq:sumb1}
    \end{align}
For the first term in \Cref{eq:sumb1}, for any $i \in [K']$, define
\begin{equation*}
    \bSigma'_i=\lambda \Ib_d+\sum_{j=1}^{i-1} \bar{\sigma}_{k_j,h}^{-2} \bphi(s_h^{k_j},a_h^{k_j})\bphi(s_h^{k_j},a_h^{k_j})^{\top}.
\end{equation*}
Then according to definition of $\bSigma_{k_i,h}$ in Line 23 of \Cref{algorithm1}, we have $\bSigma'_{i}\preceq\bSigma_{k_i,h}$ and thus
\begin{align} \sum_{i=1}^{K'}\beta'\min\left\{\sqrt{\bphi(s_h^{k_i},a_h^{k_i})^{\top}\bSigma_{k_i,h}^{-1}\bphi(s_h^{k_i},a_h^{k_i})},1\right\} &\leq \sum_{i=1}^{K'}\beta'\min\left\{\sqrt{\bphi(s_h^{k_i},a_h^{k_i})^{\top}(\bSigma'_{i})^{-1}\bphi(s_h^{k_i},a_h^{k_i})},1\right\} \notag\\
&\leq 10\beta' d^4H^2\iota+ 2\beta'\sqrt{d \iota\sum_{i=1}^{K'}(\sigma_{k_i,h}^2 + H)}. \label{sumb11}
\end{align}
The last inequality is by Lemma \ref{lemma:keysum:temp}. For the second term in \Cref{eq:sumb1}, let $$\{i_1,i_2,...,i_m\} = \left\{i|\sqrt{\bphi(s_h^{k_i},a_h^{k_i})^{\top}\bSigma_{k_i,h}^{-1}\bphi(s_h^{k_i},a_h^{k_i})} \geq 1\right\}$$
and for any $t \in [m]$, set $\bSigma^{''}_0 = \lambda \Ib_d$ and
\begin{equation*}
    \bSigma^{''}_t=\lambda \Ib_d+\sum_{j=1}^t \bar{\sigma}_{k_{i_j},h}^{-2} \bphi(s_h^{k_{i_j}},a_h^{k_{i_j}})\bphi(s_h^{k_{i_j}},a_h^{k_{i_j}})^{\top}.
\end{equation*}
Then it holds that $\bSigma''_{t-1}\preceq\bSigma_{k_{i_t},h}$ and thus
\begin{align}
\sum_{t=1}^m\bphi(s_h^{k_{i_t}},a_h^{k_{i_t}})^{\top}(\bSigma^{''}_{t-1})^{-1}\bphi(s_h^{k_{i_t}},a_h^{k_{i_t}})\ge \sum_{t=1}^m\bphi(s_h^{k_{i_t}},a_h^{k_{i_t}})^{\top}\bSigma_{{k_{i_t}},h}^{-1}\bphi(s_h^{k_{i_t}},a_h^{k_{i_t}})\ge m.\label{sumb121}
\end{align}
On the other hand, notice that $\bar{\sigma}_{k,h}^2 \leq 4d^3H^2/\sqrt{\lambda}$ since $\bSigma_{k,h} \succeq \lambda \Ib_d$, $\|\bphi(s_h^{k},a_h^{k})\|_2 \leq 1$ in \Cref{defbarsigma}, we have
\begin{align}
\sum_{t=1}^m\bphi(s_h^{k_{i_t}},a_h^{k_{i_t}})^{\top}(\bSigma^{''}_{t-1})^{-1}\bphi(s_h^{k_{i_t}},a_h^{k_{i_t}})&\leq \frac{4d^3H^2}{\sqrt{\lambda}} \sum_{t=1}^m \bar{\sigma}_{k_{i_t},h}^{-2}\bphi(s_h^{k_{i_t}},a_h^{k_{i_t}})^{\top}(\bSigma^{''}_{t-1})^{-1}\bphi(s_h^{k_{i_t}},a_h^{k_{i_t}}) \notag\\
&\leq 4d^3H^3 \iota.\label{sumb122}
\end{align}
The last inequality holds due to Lemma \ref{Lemma:abba}. Combining the results in \eqref{sumb121} and \eqref{sumb122}, we have $m\leq 4d^3H^3\iota$. Together with \Cref{sumb11}, and substituting back into \Cref{eq:sumb1}, this completes the proof of Lemma \ref{sumbonus}.
    \end{proof}
With the lemma established above, we can then bound the error between the optimistic estimate $V_{k,h}(s)$ and the true value function $\vvalue_{{h}+1}^{\pi^{k_i}}(s)$.
\begin{lemma}\label{lemma:transition}
    On the events $\bigcap_{i=1}^3\cE_i$, for any $h \in [H], n \in [N]$, we have the following conclusion:
\begin{align}
    &\sum_{i=1}^{K'}\sum_{h=1}^{H}\PP_{s_{h}^{k_i},a_{h}^{k_i},{h}}\left(\vvalue_{k_i,{h}+1}-\vvalue_{{h}+1}^{\pi^{k_i}}\right)\notag\\
    &\leq 16d^3H^6\iota +40\beta d^4H^5\iota+ 8H\beta\sqrt{dH \iota\sum_{h=1}^H\sum_{i=1}^{K'}\left(\sigma_{{k_i},h}^2 + H\right)}+8\sqrt{H^5K'\log(HNK/\delta)}.\notag
\end{align}
\end{lemma}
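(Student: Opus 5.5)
The plan is to bound each summand $\PP_{s_h^{k_i},a_h^{k_i},h}(\vvalue_{k_i,h+1}-\vvalue_{h+1}^{\pi^{k_i}})$ by a telescoped sum of bonuses and martingale differences along the trajectory of episode $k_i$, starting from step $h+1$. We then sum over $h$ and $i$ and control each piece with Lemma \ref{sumbonus} and event $\cE_3$.

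\textbf{Step 1 (per-step recursion).} On $\cE_1\cap\cE_2$ we have the one-step inequality \eqref{eq:errorumain}, which follows from optimism (Lemma \ref{lemma:optimistic}), the Bellman equation \eqref{eq_Bellman}, the greedy choice $a_{h}^k=\argmax_a Q_{k,h}(s_h^k,a)$ (so $V_{k,h}(s_h^k)=Q_{k,h}(s_h^k,a_h^k)$ and $V^{\pi^k}_h(s_h^k)=Q^{\pi^k}_h(s_h^k,a_h^k)$), and $\cE_2$. Write
\[
\PP_{s_h^k,a_h^k,h}(\vvalue_{k,h+1}-\vvalue^{\pi^k}_{h+1})=(\qvalue_{k,h+1}-\qvalue^{\pi^k}_{h+1})(s_{h+1}^k,a_{h+1}^k)+(\PP_{s_h^k,a_h^k,h}-\mathbbm{1}_{s_{h+1}^k})(\vvalue_{k,h+1}-\vvalue^{\pi^k}_{h+1}).
\]
Unrolling \eqref{eq:errorumain} from $h+1$ to $H$ then gives
\[
\PP_{s_h^k,a_h^k,h}(\vvalue_{k,h+1}-\vvalue^{\pi^k}_{h+1})\le\sum_{h'=h}^{H}(\PP_{s_{h'}^k,a_{h'}^k,h'}-\mathbbm{1}_{s_{h'+1}^k})(\vvalue_{k,h'+1}-\vvalue^{\pi^k}_{h'+1})+4\sum_{h'=h+1}^H\min\{\beta\|\bphi(s_{h'}^k,a_{h'}^k)\|_{\bSigma_{k,h'}^{-1}},H\}.
\]

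\textbf{Step 2 (summing over $h$ and $i$).} Summing over $h\in[H]$ and $i\in[K']$ makes each inner sum over $h'$ appear at most $H$ times. This gives two terms.

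For the martingale term, we apply $\cE_3$ for each starting step $h$ with the given $(n,K')$; this is exactly the event in part (c). Summing $H$ copies yields $2H\sqrt{2H^3K'\log(HNK/\delta)}\le 8\sqrt{H^5K'\log(HNK/\delta)}$.

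For the bonus term, we get at most $4H\sum_{h'=1}^H\sum_{i}\min\{\beta\|\bphi\|_{\bSigma_{k_i,h'}^{-1}},H\}$. For each fixed $h'$ we apply Lemma \ref{sumbonus} with $\beta'=\beta$ and $C=H$; this is legitimate because the lemma's proof only uses that $\{k_i\}$ is an increasing subsequence of episodes, so it applies at step $h'$ even though the $k_i$ were selected using step $h$. Each step contributes $4d^3H^4\iota+10\beta d^4H^2\iota+2\beta\sqrt{d\iota\sum_i(\sigma_{k_i,h'}^2+H)}$. Multiplying by $4H$ and summing over $h'$ gives $16d^3H^6\iota+40\beta d^4H^4\iota\cdot H$, which matches the claimed terms. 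For the square roots we use Cauchy--Schwarz:
\[
\sum_{h'}\sqrt{x_{h'}}\le\sqrt{H\sum_{h'}x_{h'}},
\]
which produces $8H\beta\sqrt{dH\iota\sum_{h}\sum_i(\sigma^2_{k_i,h}+H)}$.

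\textbf{Main obstacle.} The delicate point is the bookkeeping in Step 2, not any new estimate. First, the index set $\{k_i(h,n)\}$ is fixed by the outer $(h,n)$ while $h$ is reused as a summation variable, so we must check that Lemma \ref{sumbonus} and $\cE_3$ hold uniformly over the step at which the bonuses are evaluated. Second, the constants must be tracked so that the factor $H$ from the repeated inner sums and the Cauchy--Schwarz factor $\sqrt{H}$ land as stated. If $\cE_3$ is only available for the starting step matching the index set, I would instead bound the martingale sum over $h'\ge h$ as the full sum from $1$ to $H$ minus the prefix sum. Each piece is again a bounded martingale, so Azuma--Hoeffding (Lemma \ref{Hoeffding}) with a union bound gives the same $\sqrt{H^5K'}$ order.
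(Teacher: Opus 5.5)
Your route is the paper's. You unroll \eqref{eq:errorumain} from step $h+1$, apply Lemma~\ref{sumbonus} at each step with $C=H$ followed by Cauchy--Schwarz, and control the noise with $\cE_3$. The paper instead invokes \eqref{eq:005} at starting step $h+1$ and treats the step-$h$ increments as a separate martingale sum; you fold them into the sum over $h'\ge h$. Your remark that Lemma~\ref{sumbonus} only needs $\{k_i\}$ to be increasing is correct.

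The gap is the martingale step you flagged yourself. Write $h_0$ for the step that defines $\{k_i\}=\{k_i(h_0,n)\}$. Event $\cE_3$ covers only starting step $h_0$, but your Step~2 needs it for every starting step $h\in[H]$ with this fixed index set, so it is not ``exactly the event in part (c)''. The paper's proof makes the same unjustified move, and your fallback does not repair it.

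Whether episode $k$ is selected depends on $s_{h_0}^k$. So for $h'<h_0$ the selected increments $(\PP_{s_{h'}^k,a_{h'}^k,h'}-\mathbbm{1}_{s_{h'+1}^k})(\vvalue_{k,h'+1}-\vvalue^{\pi^k}_{h'+1})$ need not be conditionally mean zero. Consequently neither the full sum over $h'\in[H]$ nor the prefix sum is a martingale along the selected episodes. Concretely, take $h'+1<h_0$ and suppose one successor of $s_{h'}^k$ has large $\vvalue_{k,h'+1}-\vvalue^{\pi^k}_{h'+1}$ but reaches only states whose step-$h_0$ gap is below $2^n\dmin$. Then selection forces the other successor, and every selected increment carries a positive bias of the order of the difference between the two successors' values. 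Azuma--Hoeffding then gives no $O(\sqrt{K'})$ bound.

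Here is what does hold. For $h'\ge h_0$, the selection indicator is measurable before $s_{h'+1}^k$ is drawn. So Azuma--Hoeffding with a union bound over $(h_0,h,n,K')$ extends $\cE_3$ to every starting step $h\ge h_0$. Your full-minus-prefix decomposition is also fine there, provided both sums start at $h_0$. Steps $h<h_0$ cannot be handled this way.

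The clean repair is to prove the lemma with $\sum_{h=h_0}^H$ in place of $\sum_{h=1}^H$, on both sides including the $\sigma^2$ sum. That is all Lemma~\ref{lemma:gap-number} actually uses, since \eqref{eq:I_2} only involves $\sigma_{k_i,h'}$ for $h'\ge h_0$. Lemmas~\ref{lemma:transition1} and \ref{LEMMA: TOTAL-ESTIMATE-VARIANCE}, and events $\cE_4,\cE_5$, restrict to $h\ge h_0$ in the same way. For $\cE_5$, this uses the law of total variance conditional on $s_{h_0}$.
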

\begin{proof}[Proof of Lemma \ref{lemma:transition}]
For any $h\in[H]$ and $k\in[K]$, we have
    \begin{align}
    &\vvalue_{k,h}(s_h^k)-\vvalue_{h}^{\pi^k}(s_h^k) = \qvalue_{k,h}(s_h^k,a_h^k)-\qvalue^{\pi^k}_{h}(s_h^k,a_h^k)\notag\\
&\leq\min\left\{\hat{\wb}_{k_{\text{last}},h}^{\top}\bphi(s,a)+\beta\sqrt{\bphi(s_h^k,a_h^k)^{\top}\bSigma_{k_{\text{last}},h}^{-1}\bphi(s_h^k,a_h^k)},H\right\}-\PP_{s_h^k,a_h^k,h}\vvalue_{k,h+1} \notag\\
&\quad + \PP_{s_h^k,a_h^k,h}\left(\vvalue_{k,h+1}-\vvalue_{h+1}^{\pi^k}\right)\notag\\
   &\leq \PP_{s_h^k,a_h^k,h}\left(\vvalue_{k,h+1}-\vvalue_{h+1}^{\pi^k}\right)+2\min\left\{\beta\sqrt{\bphi(s_h^k,a_h^k)^{\top}\bSigma_{k_{\text{last}},h}^{-1}\bphi(s_h^k,a_h^k)},H\right\}
  \notag\\
  &\leq \PP_{s_h^k,a_h^k,h}\left(\vvalue_{k,h+1}-\vvalue_{h+1}^{\pi^k}\right)+4\min\left\{\beta\sqrt{\bphi(s_h^k,a_h^k)^{\top}\bSigma_{k,h}^{-1}\bphi(s_h^k,a_h^k)},H\right\}\notag\\
   &= \vvalue_{k,h+1}(s_{h+1}^{k})-\vvalue_{h+1}^{\pi^k}(s_{h+1}^{k})+\left(\PP_{s_h^k,a_h^k,h}-\mathbbm{1}_{s_{h+1}^k}\right)(\vvalue_{k,h+1}-\vvalue_{h+1}^{\pi^k})  \notag\\
   &\quad +4\min\left\{\beta\sqrt{\bphi(s_h^k,a_h^k)^{\top}\bSigma_{k,h}^{-1}\bphi(s_h^k,a_h^k)},H\right\}.\label{eq:erroru}
\end{align}
Here the first inequality holds due to the update rule of $Q_{k,h}(s_h^k,a_h^k)$ in Line 9 of \Cref{algorithm1} and Bellman Equation in \Cref{eq_Bellman}, the second inequality holds due to the event $\cE_2$ in Lemma \ref{concentration}, and the last inequality holds due to the update rule in Line 8 of \Cref{algorithm1} and Lemma \ref{lemma:det}. Summing \Cref{eq:erroru} over $h \in [H]$ and $k_i$ for all $i \in [K']$, we obtain
\begin{align}
   &\sum_{i=1}^{K'}\left(\vvalue_{k_i,h}(s_h^{k_i})-\vvalue_{h}^{\pi^{k_i}}(s_h^{k_i})\right)\leq \sum_{i=1}^{K'}\sum_{h'=h}^{H}\left(\PP_{s_{h'}^{k_i},a_{h'}^{k_i},{h'}}-\mathbbm{1}_{s_{{h'}+1}^{k_i}}\right)\left(\vvalue_{{k_i},{h'}+1}-\vvalue_{{h'}+1}^{\pi^{k_i}}\right) \notag\\
   &\quad+ \sum_{i=1}^{K'}\sum_{h'=h}^{H}4\min\left\{\beta\sqrt{\bphi(s_{h'}^{k_i},a_{h'}^{k_i})^{\top}\bSigma_{{k_i},{h'}}^{-1}\bphi(s_{h'}^{k_i},a_{h'}^{k_i})},H\right\}\notag\\
   &\leq \sum_{i=1}^{K'}\sum_{h'=h}^{H}4\min\left\{\beta\sqrt{\bphi(s_{h'}^{k_i},a_{h'}^{k_i})^{\top}\bSigma_{{k_i},{h'}}^{-1}\bphi(s_{h'}^{k_i},a_{h'}^{k_i})},H\right\}+4\sqrt{H^3K'\log(HNK/\delta)}\notag\\
   &\leq 16d^3H^5\iota +40\beta d^4H^4\iota+ 8\beta\sum_{h'=h}^H\sqrt{d \iota\sum_{i=1}^{K'}\left(\sigma_{{k_i},h'}^2 + H\right)}+4\sqrt{H^3K'\log(HNK/\delta)}\notag\\
   &\leq 16d^3H^5\iota +40\beta d^4H^4\iota+ 8\beta\sqrt{dH \iota\sum_{h=1}^H\sum_{i=1}^{K'}\left(\sigma_{{k_i},h}^2 + H\right)}+4\sqrt{H^3K'\log(HNK/\delta)},\label{eq:005}
\end{align}
where the second inequality holds due to the event $\cE_3$ in Lemma \ref{concentration}, the third inequality holds due to Lemma \ref{sumbonus} and the last inequality holds due to Cauchy-Schwartz inequality. Furthermore, we have
\begin{align*}
    &\sum_{i=1}^{K'}\sum_{h=1}^{H}\PP_{s_{h}^{k_i},a_{h}^{k_i},{h}}\left(\vvalue_{k_i,{h}+1}-\vvalue_{{h}+1}^{\pi^{k_i}}\right)\notag\\
    &=\sum_{i=1}^{K'}\sum_{h=1}^{H}\left(\vvalue_{k_i,h+1}(s_{h+1}^{k_i})-\vvalue_{h+1}^{\pi^{k_i}}(s_{h+1}^{k_i})\right)+\sum_{i=1}^{K'}\sum_{h=1}^{H}\left(\PP_{s_{h}^{k_i},a_{h}^{k_i},{h}}-\mathbbm{1}_{s_{{h}+1}^{k_i}}\right)\left(\vvalue_{{k_i},{h}+1}-\vvalue_{{h}+1}^{\pi^{k_i}}\right)\notag\\
    &\leq 16d^3H^6\iota +40\beta d^4H^5\iota+ 8H\beta\sqrt{dH \iota\sum_{h=1}^H\sum_{i=1}^{K'}\left(\sigma_{{k_i},h}^2 + H\right)}+8\sqrt{H^5K'\log(HNK/\delta)}.
\end{align*}
where the last inequality holds due to \eqref{eq:005} and the event $\cE_6$ in Lemma \ref{concentration}. Therefore, we finish the proof of Lemma \ref{lemma:transition}.
\end{proof}

In addition, for the difference between the optimistic estimate $V_{k,h}(s)$ and pessimistic estimate $\check{V}_{k,h}(s)$, we have the following lemma.
\begin{lemma}\label{lemma:transition1}
    On the events $\cE_1 \cap \cE_2 \cap \cE_4$, the difference between the optimistic value function $V_{k,h}$ and the pessimistic value function $\check{V}_{k,h}$ is upper bounded by:
\begin{align*}
    &\sum_{i=1}^{K'}\sum_{h=1}^{H}\PP_{s_{h}^{k_i},a_{h}^{k_i},{h}}\left(\vvalue_{k_i,{h}+1}-\check{\vvalue}_{k_i,h+1}\right)\notag\\
    &\leq 32d^3H^6\iota +40\left(\beta+\bar{\beta}\right) d^4H^5\iota+ 8H\left(\beta+\bar{\beta}\right)\sqrt{dH \iota\sum_{h=1}^H\sum_{i=1}^{K'}\left(\sigma_{{k_i},h}^2 + H\right)}+8\sqrt{H^5K'\log(HNK/\delta)},
\end{align*}
\end{lemma}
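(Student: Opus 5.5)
I will mirror the proof of Lemma \ref{lemma:transition}, replacing the true value $\vvalue^{\pi^k}_{h}$ by the pessimistic estimate $\check{\vvalue}_{k,h}$. The first step is a one-step recursion for the gap $\vvalue_{k,h}(s_h^k)-\check{\vvalue}_{k,h}(s_h^k)$. Since $a_h^k$ maximizes $\qvalue_{k,h}(s_h^k,\cdot)$ and $\check{\vvalue}_{k,h}(s_h^k)\ge \check{\qvalue}_{k,h}(s_h^k,a_h^k)$, we have
\[
\vvalue_{k,h}(s_h^k)-\check{\vvalue}_{k,h}(s_h^k)\le \qvalue_{k,h}(s_h^k,a_h^k)-\check{\qvalue}_{k,h}(s_h^k,a_h^k).
\]
Both estimates were last recomputed at $k_{\text{last}}$; call the recomputation episode $k'$. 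The min in Line 9 and the max in Line 10 give $\qvalue_{k,h}\le \reward_h+\hat{\wb}_{k',h}^\top\bphi+\beta\|\bphi\|_{\bSigma_{k',h}^{-1}}$ and $\check{\qvalue}_{k,h}\ge \reward_h+\check{\wb}_{k',h}^\top\bphi-\bar\beta\|\bphi\|_{\bSigma_{k',h}^{-1}}$. Events $\cE_2$ and $\cE_1$ replace $\hat{\wb}_{k',h}^\top\bphi$ by $\PP_{s,a,h}\vvalue_{k',h+1}$ and $\check{\wb}_{k',h}^\top\bphi$ by $\PP_{s,a,h}\check{\vvalue}_{k',h+1}$, at a cost of $\beta$ and $\bar\beta$ bonuses respectively. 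Because the value functions are not recomputed between $k'$ and $k$, we have $\vvalue_{k',h+1}=\vvalue_{k,h+1}$, and likewise for $\check{\vvalue}$.

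This gives a gap of at most $\PP_{s_h^k,a_h^k,h}(\vvalue_{k,h+1}-\check{\vvalue}_{k,h+1})+2(\beta+\bar\beta)\|\bphi\|_{\bSigma_{k_{\text{last}},h}^{-1}}$. The difference is also trivially at most $H$, so the bonus can be capped inside a $\min\{\cdot,H\}$. The determinant-doubling rule (Line 8) together with Lemma \ref{lemma:det} lets us pass from $\bSigma_{k_{\text{last}},h}$ to $\bSigma_{k,h}$ at the cost of a factor $\sqrt 2\le 2$. The result is
\[
(\vvalue_{k,h}-\check{\vvalue}_{k,h})(s_h^k)\le (\vvalue_{k,h+1}-\check{\vvalue}_{k,h+1})(s_{h+1}^k)+\big(\PP_{s_h^k,a_h^k,h}-\mathbbm{1}_{s_{h+1}^k}\big)(\vvalue_{k,h+1}-\check{\vvalue}_{k,h+1})+4\min\big\{(\beta+\bar\beta)\|\bphi(s_h^k,a_h^k)\|_{\bSigma_{k,h}^{-1}},H\big\}.
\]
The factor $\beta+\bar\beta$ is the only change from \eqref{eq:erroru}.

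Next I unroll this recursion from $h$ to $H$ and sum over $i\in[K']$. The martingale term is controlled by $\cE_4$, giving $O(\sqrt{H^3K'\log(HNK/\delta)})$. The bonus term is handled by Lemma \ref{sumbonus} with $\beta'=\beta+\bar\beta$ and $C=H$, applied separately for each step $h'$. This produces $4d^3H^4\iota+10(\beta+\bar\beta)d^4H^2\iota+2(\beta+\bar\beta)\sqrt{d\iota\sum_i(\sigma_{k_i,h'}^2+H)}$ per step. Summing over $h'$ and applying Cauchy--Schwarz across steps gives, for every starting step $h$, a bound on $\sum_i(\vvalue_{k_i,h}-\check{\vvalue}_{k_i,h})(s_h^{k_i})$ of the form
\[
16d^3H^5\iota+40(\beta+\bar\beta)d^4H^4\iota+8(\beta+\bar\beta)\sqrt{dH\iota\textstyle\sum_{h,i}(\sigma^2_{k_i,h}+H)}+4\sqrt{H^3K'\log(HNK/\delta)}.
\]
The constant in front of $d^3H^5\iota$ can be doubled to absorb the cruder splitting of the capped term.

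Finally, I write
\[
\PP_{s_h^{k_i},a_h^{k_i},h}(\vvalue_{k_i,h+1}-\check{\vvalue}_{k_i,h+1})=(\vvalue_{k_i,h+1}-\check{\vvalue}_{k_i,h+1})(s_{h+1}^{k_i})+\big(\PP_{s_h^{k_i},a_h^{k_i},h}-\mathbbm{1}_{s_{h+1}^{k_i}}\big)(\vvalue_{k_i,h+1}-\check{\vvalue}_{k_i,h+1})
\]
and sum over $h\in[H]$ and $i\in[K']$. The first term is at most $H$ times the per-step bound above, since it holds uniformly in the starting step. The second term is again controlled by $\cE_4$ with $h=1$. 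Together these produce the stated factors $H$ and $8\sqrt{H^5K'\log(HNK/\delta)}$.

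The main obstacle is the first step: justifying that both $\qvalue_{k,h}$ and $\check{\qvalue}_{k,h}$ are bounded via the same recomputation episode. This requires care with the monotone min/max and with the convention that $\vvalue_{k,h+1}$ is frozen between updates. If instead the min picks up an older $\qvalue$, I would use the inequality only at the episode of last recomputation, together with monotonicity ($\qvalue_{k,h}$ nonincreasing and $\check{\qvalue}_{k,h}$ nondecreasing in $k$). Under this reading the bound still holds.
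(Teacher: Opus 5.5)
Your proposal is correct and follows the paper's proof step for step. It uses the same one-step recursion through the last recomputation episode, invoking $\cE_1$, $\cE_2$ and the determinant-doubling rule with Lemma \ref{lemma:det}, then unrolls with $\cE_4$, Lemma \ref{sumbonus} and Cauchy--Schwarz, and ends with the same decomposition of $\PP_{s_h^{k_i},a_h^{k_i},h}(\vvalue_{k_i,h+1}-\check{\vvalue}_{k_i,h+1})$. The only difference is cosmetic: you merge the two capped bonuses into a single term $4\min\{(\beta+\bar\beta)\|\bphi(s_h^k,a_h^k)\|_{\bSigma_{k,h}^{-1}},H\}$ (moving the cap onto the bonus implicitly uses $\PP_{s_h^k,a_h^k,h}(\vvalue_{k,h+1}-\check{\vvalue}_{k,h+1})\ge 0$ from Lemma \ref{lemma:optimistic}), which gives $16d^3H^5\iota$ instead of the paper's $32d^3H^5\iota$, still within the stated bound.
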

\begin{proof}[Proof of Lemma \ref{lemma:transition1}]
For each step $h\in[H]$ and episode $k\in[K]$, we have
    \begin{align}
    &\vvalue_{k,h}(s_h^k)-\check{\vvalue}_{k,h}(s_h^k)\leq \qvalue_{k,h}(s_h^k,a_h^k)-\check{\qvalue}_{k,h}(s_h^k,a_h^k)\notag\\
&\leq \PP_{s_h^k,a_h^k,h}\left(\vvalue_{k,h+1} - \check{\vvalue}_{k,h+1} \right)+\min\left\{\check{\wb}_{k_{\text{last}},h}^{\top}\bphi(s,a)+\beta\sqrt{\bphi(s_h^k,a_h^k)^{\top}\bSigma_{k_{\text{last}},h}^{-1}\bphi(s_h^k,a_h^k)},H\right\}\notag\\
&\quad -\PP_{s_h^k,a_h^k,h}\vvalue_{k,h+1}-\max\left\{\hat{\wb}_{k_{\text{last}},h}^{\top}\bphi(s,a)-\bar{\beta}\sqrt{\bphi(s_h^k,a_h^k)^{\top}\bSigma_{k_{\text{last}},h}^{-1}\bphi(s_h^k,a_h^k)},0\right\}+\PP_{s_h^k,a_h^k,h}\check{\vvalue}_{k,h+1}\notag \notag\\
   &\leq \PP_{s_h^k,a_h^k,h}\left(\vvalue_{k,h+1} - \check{\vvalue}_{k,h+1} \right)+2\min\left\{\beta\sqrt{\bphi(s_h^k,a_h^k)^{\top}\bSigma_{k_{\text{last}},h}^{-1}\bphi(s_h^k,a_h^k)},H\right\}
  \notag\\
  &\quad + 2\min\left\{\bar{\beta}\sqrt{\bphi(s_h^k,a_h^k)^{\top}\bSigma_{k_{\text{last}},h}^{-1}\bphi(s_h^k,a_h^k)},H\right\} \notag\\
   &\leq\vvalue_{k,h+1}(s_{h+1}^{k})-\check{\vvalue}_{k,h+1}(s_{h+1}^{k})+\left(\PP_{s_h^k,a_h^k,h} - \mathbbm{1}_{s_{h+1}^k}\right)\left(\vvalue_{k,h+1} - \check{\vvalue}_{k,h+1} \right) \notag\\
&\quad+4\min\left\{\beta\sqrt{\bphi(s_h^k,a_h^k)^{\top}\bSigma_{k,h}^{-1}\bphi(s_h^k,a_h^k)},H\right\}+4\min\left\{\bar{\beta}\sqrt{\bphi(s_h^k,a_h^k)^{\top}\bSigma_{k,h}^{-1}\bphi(s_h^k,a_h^k)},H\right\},\label{eq:009}
\end{align}
where the first inequality holds due to the fact that $\check{V}_{k,h}(s_h^k)=\max_a \check{Q}_{k,h}(s_h^k,a)\ge \check{Q}_{k,h}(s_h^k,a_h^k) $, the second inequality holds due to the the update rule of value function $Q_{k,h}(s_h^k,a_h^k)$ and $\check{Q}_{k,h}(s_h^k,a_h^k)$ in Lines 9--10 of \Cref{algorithm1}, the third inequality holds due to the event $\cE_1$ and $\cE_2$ in Lemma \ref{concentration}, and the last inequality holds due to the updating rule in Line 8 of \Cref{algorithm1} and Lemma \ref{lemma:det}. Summing \Cref{eq:009} over $h \in [H]$ and $k_i$ for all $i \in [K']$, we obtain,
\begin{align}
   &\sum_{i=1}^{K'}\left(\vvalue_{k_i,h}(s_h^{k_i})-\check{\vvalue}_{k_i,h}(s_h^{k_i})\right)\notag\notag\\
   &\leq \sum_{i=1}^{K'}\sum_{h'=h}^{H}\left(\PP_{s_{h'}^{k_i},a_{h'}^{k_i},{h'}}-\mathbbm{1}_{s_{{h'}+1}^{k_i}}\right)\left(\vvalue_{{k_i},{h'}+1}-\check{\vvalue}_{k_i,h'+1}\right) \notag\\
   &\quad + \sum_{i=1}^{K'}\sum_{h'=h}^{H}4\min\left\{\beta\sqrt{\bphi(s_{h'}^{k_i},a_{h'}^{k_i})^{\top}\bSigma_{{k_i},{h'}}^{-1}\bphi(s_{h'}^{k_i},a_{h'}^{k_i})},H\right\}\notag\\
   &\quad + \sum_{i=1}^{K'}\sum_{h'=h}^{H}4\min\left\{\bar{\beta}\sqrt{\bphi(s_{h'}^{k_i},a_{h'}^{k_i})^{\top}\bSigma_{{k_i},{h'}}^{-1}\bphi(s_{h'}^{k_i},a_{h'}^{k_i})},H\right\}\notag\\
   &\leq \sum_{i=1}^{K'}\sum_{h'=h}^{H}4\min\left\{\beta\sqrt{\bphi(s_{h'}^{k_i},a_{h'}^{k_i})^{\top}\bSigma_{{k_i},{h'}}^{-1}\bphi(s_{h'}^{k_i},a_{h'}^{k_i})},H\right\}\notag\\
   &\quad +\sum_{i=1}^{K'}\sum_{h'=h}^{H}4\min\left\{\bar{\beta}\sqrt{\bphi(s_{h'}^{k_i},a_{h'}^{k_i})^{\top}\bSigma_{{k_i},{h'}}^{-1}\bphi(s_{h'}^{k_i},a_{h'}^{k_i})},H\right\}+4\sqrt{H^3K'\log(HNK/\delta)}\notag\\
   &\leq 32d^3H^5\iota +40\left(\beta+\bar{\beta}\right) d^4H^4\iota+ 8\left(\beta+\bar{\beta}\right)\sum_{h'=h}^H\sqrt{d \iota\sum_{i=1}^{K'}\left(\sigma_{{k_i},h'}^2 + H\right)}+4\sqrt{H^3K'\log(HNK/\delta)}\notag\\
   &\leq 32d^3H^5\iota +40\left(\beta+\bar{\beta}\right) d^4H^4\iota+ 8\left(\beta+\bar{\beta}\right)\sqrt{dH \iota\sum_{h=1}^H\sum_{i=1}^{K'}\left(\sigma_{{k_i},h}^2 + H\right)}\notag\\
   &\quad +4\sqrt{H^3K'\log(HNK/\delta)},\label{eq:010}
\end{align}
where the second inequality holds due to the event $\cE_4$ in Lemma \ref{concentration}, the third inequality holds due to Lemma \ref{sumbonus} and the last inequality holds due to Cauchy-Schwartz inequality. Furthermore, we have
\begin{align*}
    &\sum_{i=1}^{K'}\sum_{h=1}^{H}\PP_{s_{h}^{k_i},a_{h}^{k_i},{h}}\left(\vvalue_{k_i,{h}+1}-\check{\vvalue}_{k_i,h+1}\right)\notag\\
    &=\sum_{i=1}^{K'}\sum_{h=1}^{H}\left(\vvalue_{k_i,h+1}(s_{h+1}^{k_i})-\check{\vvalue}_{k_i,h+1}(s_{h+1}^{k_i})\right)+\sum_{i=1}^{K'}\sum_{h=1}^{H}\left(\PP_{s_{h}^{k_i},a_{h}^{k_i},{h}}-\mathbbm{1}_{s_{{h}+1}^{k_i}}\right)\left(\vvalue_{{k_i},{h}+1}-\check{\vvalue}_{k_i,h+1}\right)\notag\\
    &\leq 32d^3H^6\iota +40\left(\beta+\bar{\beta}\right) d^4H^5\iota+ 8H\left(\beta+\bar{\beta}\right)\sqrt{dH \iota\sum_{h=1}^H\sum_{i=1}^{K'}\left(\sigma_{{k_i},h}^2 + H\right)}+8\sqrt{H^5K'\log(HNK/\delta)},
\end{align*}
where the first inequality holds due to \Cref{eq:010} and the event $\mathcal{E}_4$ in Lemma \ref{concentration}. Therefore, we finish the proof.
\end{proof}

\section{\texorpdfstring{Proof of \Cref{regret}}{Proof of Theorem 4.1}}
\label{regretproof}
We first prove Lemma \ref{expectedrmain} that connects expected regret with cumulative sum of suboptimality gaps.
\begin{proof}[Proof of Lemma \ref{expectedrmain}]
\begin{align*}
&( V_1^\star - V_1^{\pi^k} ) ( s_1^k ) \\
&= V_1^\star( s_1^k ) - Q_1^\star( s_1^k, a_1^k ) + \left( Q_1^\star - Q_1^{\pi^k} \right)( s_1^k, a_1^k ) \\
&= \Delta_1( s_1^k, a_1^k ) + \mathbb{E}\left[ \left( V_2^\star - V_2^{\pi^k} \right)( s_2^k ) \mid s_2^k \sim P_1(\cdot \mid s_1^k, a_1^k) \right] \\
& = \mathbb{E}\left[ \Delta_1( s_1^k, a_1^k ) + \Delta_2( s_2^k, a_2^k )  \mid s_2^k \sim P_1(\cdot \mid s_1^k, a_1^k) \right]  + \mathbb{E}\left[\Big( Q_2^\star - Q_2^{\pi^k} \Big)( s_2^k, a_2^k ) \mid s_2^k \sim P_1(\cdot \mid s_1^k, a_1^k) \right]\\
&= \cdots = \mathbb{E} \left[ \sum_{h=1}^H \Delta_h\left( s_h^k, a_h^k \right) \Bigg| s_{h+1}^k \sim P_h(\cdot \mid s_h^k, a_h^k), \ h \in [H-1] \right].
\end{align*}
Here, the second equation holds due to \Cref{eq_Bellman}. Therefore, we can get another expression of expected regret:
\[
\mathbb{E}\left(\textnormal{Regret}(T)\right) = \mathbb{E}\left[\sum_{k=1}^K\left( V_1^\star - V_1^{\pi^k} \right) ( s_1^k )\right] = \mathbb{E} \left[\sum_{k=1}^K\sum_{h=1}^{H} \Delta_h(s_h^k, a_h^k)\right].
\]
\end{proof}
We then prove the Lemma \ref{LEMMA: TOTAL-ESTIMATE-VARIANCEmain} that bounds the partial sum of estimated variance.
\begin{lemma}[Formal Statement of Lemma \ref{LEMMA: TOTAL-ESTIMATE-VARIANCEmain}]
\label{LEMMA: TOTAL-ESTIMATE-VARIANCE}
On the event $\bigcap_{i=1}^5\cE_i$, the partial sum of the estimated variance is upper bounded as follows:
\begin{align}
    \sum_{i=1}^{K'}\sum_{h=1}^{H}\sigma_{k_i,h}^{2}\leq O\left(H^2K'+d^{10}H^{11}\log^{3}(1+dHNK/\delta)\right).\notag
\end{align}
\end{lemma}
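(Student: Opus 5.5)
The plan is to decompose $\sigma_{k_i,h}^2$ according to \eqref{eq:variance}, bound each piece by quantities already controlled in terms of $S:=\sum_{i=1}^{K'}\sum_{h=1}^H\sigma_{k_i,h}^2$, and then solve the resulting self-bounding inequality for $S$. Write
\[
\sigma_{k,h}^2=\bar{\VV}_{s_h^k,a_h^k,h}V_{k,h+1}+E_{k,h}+D_{k,h}+H .
\]
The constant $H$ contributes $H^2K'$.

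\textbf{Variance term.} By Lemma \ref{lemma:varaince}, $\bar{\VV}V_{k,h+1}\le \VV V_{k,h+1}+E_{k,h}$. I then split
\[
\VV V_{k,h+1}\le \VV V^{\pi^k}_{h+1}+\big|\VV V_{k,h+1}-\VV V^{\pi^k}_{h+1}\big|.
\]
Since both functions lie in $[0,H]$, the difference is at most $4H\,\PP_{s_h^k,a_h^k,h}(V_{k,h+1}-V^{\pi^k}_{h+1})$; this uses $V_{k,h+1}\ge V^\star_{h+1}\ge V^{\pi^k}_{h+1}$ from Lemma \ref{lemma:optimistic}. The partial sum of $\VV V^{\pi^k}_{h+1}$ is bounded by event $\cE_5$, giving $O(H^2K'+H^3\log)$. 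The partial sum of $\PP(V_{k,h+1}-V^{\pi^k}_{h+1})$ is exactly Lemma \ref{lemma:transition}.

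\textbf{$E$ and $D$ terms.} $E_{k,h}$ is at most $(\tilde\beta+2H\bar\beta)\min\{\|\bphi\|_{\bSigma^{-1}_{k,h}},H^2\}$-type terms. Their partial sums over $k_i$ are handled by Lemma \ref{sumbonus} with $\beta'=\tilde\beta+2H\bar\beta=\tilde O(d^{1.5}H^2)$ and $C=H^2$, giving lower-order terms plus $\beta'\sqrt{d\iota(S+H^2K')}$.

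For $D_{k,h}$, on $\cE_1$ we have
\[
(\hat\wb_{k,h}-\check\wb_{k,h})^\top\bphi\le \PP(V_{k,h+1}-\check V_{k,h+1})+2\bar\beta\|\bphi\|_{\bSigma^{-1}_{k,h}},
\]
so $D_{k,h}\le 4d^3H^2\,\PP(V_{k,h+1}-\check V_{k,h+1})$ plus a truncated bonus of size $\min\{16d^3H^2\bar\beta\|\bphi\|,d^3H^3\}$. The first piece is summed with Lemma \ref{lemma:transition1}, and the bonus again with Lemma \ref{sumbonus}.

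\textbf{Solving for $S$.} Collecting terms gives an inequality of the form
\[
S\le O(H^2K')+A+B\sqrt{S+H^2K'}+c\sqrt{H^5K'\log},
\]
where $A$ collects polynomial terms such as $d^3H^2\cdot 40(\beta+\bar\beta)d^4H^5\iota$ and $d^3H^2\cdot 32d^3H^6\iota$. The coefficient $B$ is dominated by $d^3H^2\cdot 8H(\beta+\bar\beta)\sqrt{dH\iota}=\tilde O(d^5H^{4.5})$. The cross term $\sqrt{H^5K'}\cdot d^3H^2$ is absorbed via AM–GM as $H^2K'+O(d^6H^7\log)$. Then $x\le a+b\sqrt{x}$ implies $x\le 2a+b^2$, so
\[
S\le O(H^2K')+A+B^2,
\]
with $B^2=\tilde O(d^{10}H^{9}\iota^3)$. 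The constant $d^3H^2\cdot(\beta+\bar\beta)d^4H^5\iota$ is roughly $d^{8.5}H^8$, so everything fits under $d^{10}H^{11}\log^3$ once $\beta,\bar\beta$ are plugged in with $\lambda=1/H^2$.

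The main obstacle is bookkeeping in $D_{k,h}$. Its $4d^3H^2$ multiplier inflates the coefficient of $\sqrt{S}$ coming from Lemma \ref{lemma:transition1}, and one must check that the $\sqrt{K'}$ martingale pieces can be absorbed into $H^2K'$ rather than creating a worse $K'$ dependence. I would handle this with AM–GM as above, and also use the truncation $\min\{\cdot,d^3H^3\}$ as a crude cap where needed.
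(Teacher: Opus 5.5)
Your proposal is correct and follows essentially the same route as the paper. You use the same decomposition of $\sigma_{k_i,h}^2$ and bound each piece the same way: $\bar{\VV}-\VV$ via Lemma \ref{lemma:varaince}, the $E$ terms and the bonus part of $D$ via Lemma \ref{sumbonus}, the $\PP(V_{k,h+1}-\check V_{k,h+1})$ and $\PP(V_{k,h+1}-V^{\pi^k}_{h+1})$ partial sums via Lemmas \ref{lemma:transition1} and \ref{lemma:transition}, and the $\VV V^{\pi^k}_{h+1}$ sum via $\cE_5$, before solving the same self-bounding inequality. The only differences are cosmetic: you use a two-sided $4H$ bound on the variance difference where the paper uses the one-sided $2H$ bound, and you carry out explicitly the AM--GM absorption of the $\sqrt{K'}$ terms that the paper leaves implicit.
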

\begin{proof}[Proof of Lemma \ref{LEMMA: TOTAL-ESTIMATE-VARIANCE}]
    According to the definition of $\sigma_{k,h}$ in \Cref{eq:variance}, we have 
    \begin{align}
        \sum_{i=1}^{K'}\sum_{h=1}^{H}\sigma_{k_i,h}^{2} &=\sum_{i=1}^{K'}\sum_{h=1}^{H}\left(\bar{\VV}_{s_h^{k_i},a_h^{k_i},h}\vvalue_{{k_i},h+1}+E_{k_i,h}+D_{k_i,h}+H\right)\notag\\
    &=H^2K'+\underbrace{\sum_{i=1}^{K'}\sum_{h=1}^{H} \left(\bar{\VV}_{s_h^{k_i},a_h^{k_i},h}\vvalue_{{k_i},h+1}-\VV_{s_h^{k_i},a_h^{k_i},h}\vvalue_{{k_i},h+1}\right)}_{I_1}+\underbrace{\sum_{i=1}^{K'}\sum_{h=1}^{H}E_{k_i,h}}_{I_2}+\underbrace{\sum_{i=1}^{K'}\sum_{h=1}^{H}D_{k_i,h}}_{I_3}\notag\\
    &\quad +\underbrace{\sum_{i=1}^{K'}\sum_{h=1}^{H} \left(\VV_{s_h^{k_i},a_h^{k_i},h}\vvalue_{{k_i},h+1}-\VV_{s_h^{k_i},a_h^{k_i},h}\vvalue^{\pi^{k_i}}_{h+1}\right)}_{I_4}+\underbrace{\sum_{i=1}^{K'}\sum_{h=1}^{H} \VV_{s_h^{k_i},a_h^{k_i},h}\vvalue^{\pi^{k_i}}_{h+1}}_{I_5}.\label{eq:011}
    \end{align}
For the term $I_1$, according to Lemma \ref{lemma:varaince}, it is upper bounded by:
\begin{align}
    I_1=\sum_{i=1}^{K'}\sum_{h=1}^{H} \left(\bar{\VV}_{s_h^{k_i},a_h^{k_i},h}\vvalue_{{k_i},h+1}-\VV_{s_h^{k_i},a_h^{k_i},h}\vvalue_{{k_i},h+1}\right)\leq \sum_{i=1}^{K'}\sum_{h=1}^{H}E_{k_i,h}= I_2.\label{eq:012}
\end{align}
For the term $I_2$, it is upper bounded by
\begin{align}
    I_2&=\sum_{i=1}^{K'}\sum_{h=1}^{H}\bigg(\min \left\{\tilde{\beta}\sqrt{\bphi(s_h^{k_i},a_h^{k_i})^{\top}\bSigma_{k_i,h}^{-1}\bphi(s_h^{k_i},a_h^{k_i})},H^2\right\} \notag\\
    &\quad +\min \left\{2H\bar{\beta}\sqrt{\bphi(s_h^{k_i},a_h^{k_i})^{\top}\bSigma_{k_i,h}^{-1}\bphi(s_h^{k_i},a_h^{k_i})},H^2\right\}\bigg)\notag\\
    &\leq 8d^3H^6\iota +\left(10\tilde{\beta}+20\bar{\beta}\right) d^4H^4\iota+ \left(2\tilde{\beta}+4\bar{\beta}\right)\sqrt{dH \iota\sum_{i=1}^{K'}\sum_{h=1}^{H}(\sigma_{k_i,h}^2 + H)},\label{eq:013}
\end{align}
where the inequality holds due to Lemma \ref{sumbonus} and the Cauchy-Schwartz inequality.

For the term $I_3$, it is upper bounded by
\begin{align}
    &I_3
    =\sum_{i=1}^{K'}\sum_{h=1}^{H}\min\left\{4d^3H^2\left(\hat{\wb}_{k_i,h}^{\top}\bphi(s,a)-\check{\wb}_{k_i,h}^{\top}\bphi(s,a)+2\bar{\beta}\sqrt{\bphi(s_h^{k_i},a_h^{k_i})^{\top}\bSigma_{k_i,h}^{-1}\bphi(s_h^{k_i},a_h^{k_i})}\right),d^3H^3\right\}\notag\\
    &\leq \sum_{i=1}^{K'}\sum_{h=1}^{H} \min\left\{4d^3H^2\left(\PP_{s_h^{k_i},a_h^{k_i},h}\left(\vvalue_{k_i,h+1}-\check{\vvalue}_{k_i,h+1}\right)+4\bar{\beta}\sqrt{\bphi(s_h^{k_i},a_h^{k_i})^{\top}\bSigma_{k_i,h}^{-1}\bphi(s_h^{k_i},a_h^{k_i})}\right),d^3H^3\right\}\notag\\
    &\leq  \sum_{i=1}^{K'}\sum_{h=1}^{H} 4d^3H^2\PP_{s_h^{k_i},a_h^{k_i},h}\left(\vvalue_{k_i,h+1}-\check{\vvalue}_{k_i,h+1}\right) \notag\\
    &\quad +\sum_{i=1}^{K'}\sum_{h=1}^{H} \min\left\{16d^3H^2\bar{\beta}\sqrt{\bphi(s_h^{k_i},a_h^{k_i})^{\top}\bSigma_{k_i,h}^{-1}\bphi(s_h^{k_i},a_h^{k_i})},d^3H^3\right\}\notag\\
    &\leq  \sum_{i=1}^{K'}\sum_{h=1}^{H} 4d^3H^2\PP_{s_h^{k_i},a_h^{k_i},h}\left(\vvalue_{k_i,h+1}-\check{\vvalue}_{k_i,h+1}\right) + 64d^6H^7\iota +160\bar{\beta} d^7H^5\iota \notag\\
    &\quad + 32d^3H^2\bar{\beta}\sqrt{dH \iota\sum_{i=1}^{K'}\sum_{h=1}^{H}\left(\sigma_{k_i,h}^2 + H\right)}\notag\\
    &\leq 192d^6H^8\iota +320(\beta+\bar{\beta}) d^7H^7\iota+ 64d^3H^3(\beta+\bar{\beta})\sqrt{d H\iota\sum_{i=1}^{K'}\sum_{h=1}^{H}(\sigma_{k_i,h}^2 + H)}\notag\\
    &\quad +32d^3\sqrt{H^9K'\log(HNK/\delta)} \label{eq:014}
\end{align}
where the first inequality holds due to the event $\cE_1$ in Lemma \ref{concentration}, the second inequality holds due to the fact that $V_{k,h+1}(s) \ge V_{h+1}^\star(s)\ge \check{V}_{k,h+1}(s) $ by Lemma \ref{lemma:optimistic}, the third inequality holds due to Lemma \ref{sumbonus}  and the last inequality is because Lemma \ref{lemma:transition1}.

For the term $I_4$, it is upper bounded by
\begin{align}
    I_4
    &=\sum_{i=1}^{K'}\sum_{h=1}^{H} \left(\PP_{s_h^{k_i},a_h^{k_i},h}\left(\vvalue_{{k_i},h+1}\right)^2-\left(\PP_{s_h^{k_i},a_h^{k_i},h}\vvalue_{{k_i},h+1}\right)^2-\PP_{s_h^{k_i},a_h^{k_i},h}\left(\vvalue^{\pi^{k_i}}_{h+1}\right)^2+\left(\PP_{s_h^{k_i},a_h^{k_i},h}\vvalue^{\pi^{k_i}}_{h+1}\right)^2\right)\notag\\
    &\leq \sum_{i=1}^{K'}\sum_{h=1}^{H}\left(\PP_{s_h^{k_i},a_h^{k_i},h}\left(\vvalue_{{k_i},h+1}\right)^2-\PP_{s_h^{k_i},a_h^{k_i},h}\left(\vvalue^{\pi^{k_i}}_{h+1}\right)^2\right)\notag\\
    &\leq 2H \sum_{i=1}^{K'}\sum_{h=1}^{H}\left(\PP_{s_h^{k_i},a_h^{k_i},h}\vvalue_{{k_i},h+1}-\PP_{s_h^{k_i},a_h^{k_i},h}\vvalue^{\pi^{k_i}}_{h+1}\right)\notag\\
    &\leq 32d^3H^{7}\iota +80\beta d^4H^6\iota+ 16H^2\beta\sqrt{d H\iota\sum_{i=1}^{K'}\sum_{h=1}^{H}\left(\sigma_{k_i,h}^2 + H\right)}+16\sqrt{H^7K'\log(HNK/\delta)},\label{eq:015}
\end{align}
where the first inequality is because of the fact that $\vvalue^{\pi^k}_{h+1}\leq V_{h+1}^\star\leq \vvalue_{k,h+1}(s')$ by Lemma \ref{lemma:optimistic}, the second inequality holds due to $0\leq \vvalue_{k,h+1}(s'),\vvalue^{\pi^k}_{h+1}(s') \leq H$ and the last inequality holds due to Lemma \ref{lemma:transition}.

By  the event $\cE_5$ in Lemma \ref{concentration}, for the term $I_5$, we have
\begin{align}
    I_5 \leq 3H^2K'+3H^3\log(HNK/\delta).\label{eq:016}
\end{align}
Substituting the results in \eqref{eq:012}, \eqref{eq:013}, \eqref{eq:014}, \eqref{eq:015} and \eqref{eq:016} into \eqref{eq:011}, we have 
\begin{align*}
\sum_{i=1}^{K'}\sum_{h=1}^{H}\sigma_{k_i,h}^2 &\leq 4H^2K'+ 240d^6H^{8}\iota +360(\beta+\tilde{\beta}+\bar{\beta}) d^7H^7\iota + 48d^3\sqrt{H^9K'\log(HNK/\delta)} \notag\\
&\quad + 80d^3H^4(\beta+\tilde{\beta}+\bar{\beta})\sqrt{dK'\iota} + 80d^3H^3(\beta+\tilde{\beta}+\bar{\beta})\sqrt{dH \iota\sum_{i=1}^{K'}\sum_{h=1}^{H}\sigma_{k_i,h}^2}.
\end{align*}
Therefore, using the fact that $x\leq a\sqrt{x}+b$ implies $x\leq a^2+2b$ and $\lambda=1/H^2$, we can solve the inequality and complete the proof.
\end{proof}
With the lemma established above, we can now prove Lemma \ref{lemma:gap-numbermain}.
\begin{lemma}[Formal Statement of Lemma \ref{lemma:gap-numbermain}]
\label{lemma:gap-number}
Let $\iota_2 = \log(1+ dHNK/\delta)$. On the event $\bigcap_{i=1}^5\cE_i$, for any $h\in[H]$ and $n\in N$, we have
\begin{align}
    \sum_{k=1}^K  \mathbb{I}\left[Q_{k,h}(s_h^k,a_h^k)-\qvalue_h^{\pi^k}(s_h^k,a_h^k)  \ge 2^n\Delta_{\min}\right] = K'(h,n) \leq  O\left(\frac{d^2H^3\iota_2^3 }{4^n\dmin^2} + \frac{d^6H^6\iota_2^2 }{2^n\dmin}\right).\nonumber
\end{align}
\end{lemma}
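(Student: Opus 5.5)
We fix $(h,n)\in[H]\times[N]$, write $K'=K'(h,n)$ and $k_i=k_i(h,n)$, and work on $\bigcap_{i=1}^5\cE_i$. The argument sandwiches the partial sum $S:=\sum_{i=1}^{K'}(\qvalue_{k_i,h}-\qvalue_h^{\pi^{k_i}})(s_h^{k_i},a_h^{k_i})$ between a lower and an upper bound, then solves the resulting inequality for $K'$.

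\textbf{Lower bound.} By the definition of $k_i$ in \eqref{eq:ti}, each summand is at least $2^n\dmin$. Hence $S\ge 2^n\dmin K'$, which is \eqref{eq:lower-Regretmain}.

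\textbf{Upper bound.} I would use the one-step recursion \eqref{eq:erroru}, established in the proof of Lemma~\ref{lemma:transition}. Since $\qvalue_{k,h}(s_h^k,a_h^k)=\vvalue_{k,h}(s_h^k)$ and $\qvalue_h^{\pi^k}(s_h^k,a_h^k)=\vvalue_h^{\pi^k}(s_h^k)$, telescoping over $h'=h,\dots,H$ gives \eqref{eq:upper-Regretmain}. This produces two terms.
\begin{itemize}
\item The martingale term is at most $2\sqrt{2H^3K'\log(HNK/\delta)}$ by $\cE_3$.
\item The bonus term is handled by Lemma~\ref{sumbonus} with $\beta'=4\beta$ and $C=4H$, applied at each $h'$. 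Cauchy--Schwarz across $h'$ then bounds it by
\[
O\Big(d^3H^5\iota+\beta d^4H^4\iota+\beta\sqrt{dH\iota\textstyle\sum_{i=1}^{K'}\sum_{h'=1}^H(\sigma_{k_i,h'}^2+H)}\Big).
\]
\end{itemize}
This is exactly \eqref{eq:005}. Next I substitute Lemma~\ref{LEMMA: TOTAL-ESTIMATE-VARIANCE}, which gives $\sum_{i}\sum_{h'}(\sigma_{k_i,h'}^2+H)\le O(H^2K'+d^{10}H^{11}\iota_2^3)$. With $\lambda=1/H^2$ we have $\beta=O(\sqrt d\,\iota_2)$, so the square-root term splits into
\[
O\big(d\sqrt{H^3\iota_2^3K'}\big)+O\big(\sqrt d\,\iota_2\sqrt{dH\iota_2\cdot d^{10}H^{11}\iota_2^3}\big),
\]
and the second piece is a $K'$-free polynomial in $d,H$. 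After collecting terms, the goal is
\[
S\le O\big(d^6H^6\iota_2^2+d\sqrt{H^3\iota_2^3K'}\big).
\]

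\textbf{Main obstacle.} The delicate step is this bookkeeping: keeping the $K'$-independent remainder at order $d^6H^6\iota_2^2$, and not letting the $\sqrt{K'}$ coefficient pick up extra powers of $d$ or $H$. Both $\bar\beta$ and $\tilde\beta$ contain $d^{3/2}$ factors, and they enter the final bound only through Lemma~\ref{LEMMA: TOTAL-ESTIMATE-VARIANCE}. The $\sqrt{K'}$ coefficient therefore comes only from $\beta\sqrt{dH\iota\cdot H^2K'}$ and the Azuma term. The lower-order pieces, such as $\sqrt{d^{12}H^{12}\iota_2^{5}}$, must be absorbed into $d^6H^6$ times logarithms. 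This absorption needs care with the powers of $\iota_2$ (using $\iota\le\iota_2$), and I would check it term by term against the explicit constants in Lemma~\ref{LEMMA: TOTAL-ESTIMATE-VARIANCE}.

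\textbf{Solving for $K'$.} Combining the two bounds gives $2^n\dmin K'\le a\sqrt{K'}+b$, with $a=O(d\sqrt{H^3\iota_2^3})$ and $b=O(d^6H^6\iota_2^2)$. The elementary fact that $x\le a\sqrt x+b$ implies $x\le a^2+2b$, applied after dividing by $2^n\dmin$, yields
\[
K'\le \frac{a^2}{4^n\dmin^2}+\frac{2b}{2^n\dmin}=O\Big(\frac{d^2H^3\iota_2^3}{4^n\dmin^2}+\frac{d^6H^6\iota_2^2}{2^n\dmin}\Big),
\]
which is the claimed bound.
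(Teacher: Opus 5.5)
Your proposal follows the paper's proof essentially step for step. The paper uses the same lower bound $S\ge 2^n\dmin K'$ from the definition of $k_i$, and the same telescoped recursion \eqref{eq:tele}. It likewise controls the martingale term by $\mathcal{E}_3$ and the bonuses by Lemma~\ref{sumbonus}, Cauchy--Schwarz over $h'$, and Lemma~\ref{LEMMA: TOTAL-ESTIMATE-VARIANCE}, and it finishes with $x\le a\sqrt{x}+b\Rightarrow x\le a^2+2b$.

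The one correction concerns the bookkeeping you flagged: the absorption does not work at the stated power of $\iota_2$. Take $\beta=O(\sqrt d\,\iota_2)$ and $\iota=O(\iota_2)$. Then the $K'$-free piece of the bonus term is $\beta\sqrt{dH\iota\cdot d^{10}H^{11}\iota_2^3}=O\big(\sqrt{d^{12}H^{12}\iota_2^{6}}\big)=O(d^6H^6\iota_2^3)$. Your $\iota_2^5$ under the root should be $\iota_2^6$, because $\beta^2$ contributes $\iota_2^2$. This cannot be absorbed into $d^6H^6\iota_2^2$.

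The paper's \eqref{eq:I_2} makes the same claim of $d^6H^6\iota_2^2$ without justification. What this route actually gives is
\[
K'(h,n)=O\Big(\frac{d^2H^3\iota_2^3}{4^n\dmin^2}+\frac{d^6H^6\iota_2^3}{2^n\dmin}\Big),
\]
so the second term is off by a single logarithmic factor. The dominant first term and the structure of the argument are unaffected.
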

\begin{proof}[Proof of Lemma \ref{lemma:gap-number}]
\noindent 
For any fixed $h \in [H]$ and $n \in [N]$, by definition of $k_i$ \Cref{eq:ti} and $K'$ in \Cref{gapnum}, we first note that
\begin{align}
    \sum_{i=1}^{K'}\left(\qvalue_{k_i,h}(s_h^{k_i},a_h^{k_i})-\qvalue_h^{\pi^{k_i}}(s_h^{k_i},a_h^{k_i})\right)
    & \geq  2^n\dmin K'.\label{eq:lower-Regret}
\end{align}
On the other hand, we upper bound $$\sum_{i=1}^{K'}\left(\qvalue_{k_i,h}(s_h^{k_i},a_h^{k_i})-\qvalue_h^{\pi^{k_i}}(s_h^{k_i},a_h^{k_i})\right)$$ as follows. By \Cref{eq:erroru} and the relationships $\vvalue_{k,h'+1}(s_{h'+1}^k) = \qvalue_{k,h'+1}(s_{h'+1}^k,a_{h'+1}^k)$ and $\vvalue_{h'+1}^{\pi^k}(s_{h'+1}^k) = \qvalue_{h'+1}^{\pi^k}(s_{h'+1}^k,a_{h'+1}^k)$, we have
\begin{align}
    \qvalue_{k,h'}&(s_{h'}^k,a_{h'}^k)-\qvalue_{h'}^{\pi^k}(s_{h'}^k,a_{h'}^k)
    \leq \qvalue_{h'+1}^k(s_{h'+1}^k,a_{h'+1}^k)-\qvalue_{h'+1}^{\pi^k}(s_{h'+1}^k,a_{h'+1}^k) \notag\\
 & +\left(\PP_{s_{h'}^k,a_{h'}^k,h'}-\mathbbm{1}_{s_{h'+1}^k}\right)\left(\vvalue_{k,h'+1} -\vvalue_{h'+1}^{\pi^{k}}\right) +4\min\left\{\beta \sqrt{\bphi(s_{h'}^k,a_{h'}^k)^{\top}\bSigma_{k,h'}^{-1}\bphi(s_{h'}^k,a_{h'}^k)},H\right\}\label{eq:tele}.
\end{align}
Taking summation for \Cref{eq:tele} over $h\leq h'\leq H$ and all $k_i$, we have
\begin{align}
    \sum_{i=1}^{K'}\left(\qvalue_{k_i,h}(s_h^{k_i},a_h^{k_i})-\qvalue_h^{\pi^{k_i}}(s_h^{k_i},a_h^{k_i})\right)
     &\leq \sum_{i=1}^{K'}\sum_{h'=h}^{H}\left(\PP_{s_{h'}^{k_i},a_{h'}^{k_i},h'}-\mathbbm{1}_{s_{h'+1}^{k_i}}\right)\left(\vvalue_{k_i,h'+1} -\vvalue_{h'+1}^{\pi^{k_i}}\right) \notag\\
     &+ \sum_{i=1}^{K'}\sum_{h'=h}^{H}4\min\left\{\beta \sqrt{\bphi(s_{h'}^{k_i},a_{h'}^{k_i})^{\top}\bSigma_{k_i,h'}^{-1}\bphi(s_{h'}^{k_i},a_{h'}^{k_i})},H\right\}.\label{eq:upper-Regret}
\end{align}
For the first term on the right hand side of \Cref{eq:upper-Regret}, by the event $\mathcal{E}_3$ in Lemma \ref{concentration}, we have
\begin{equation}
\label{eq:I_1}
    \sum_{i=1}^{K'} \sum_{h'=h}^{H}\left(\PP_{s^{k_i}_{h'},a^{k_i}_{h'},h'} - \mathbbm{1}_{s^{k_i}_{h'+1}}\right)\left(\vvalue_{k_i,h'+1}-\vvalue^{\pi^{k_i}}_{h'+1}\right) \leq O\left(\sqrt{H^3\iota_2K'}\right),
\end{equation}
For the second term on the right hand side of \Cref{eq:upper-Regret}, by Lemma \ref{sumbonus}, we have
\begin{align}
    &\sum_{i=1}^{K'}\sum_{h'=h}^{H}4\min\left\{\beta \sqrt{\bphi(s_{h'}^{k_i},a_{h'}^{k_i})^{\top}\bSigma_{k_i,h'}^{-1}\bphi(s_{h'}^{k_i},a_{h'}^{k_i})},H\right\}\notag\\
    &\leq 16d^3H^5\iota +40\beta d^4H^3\iota+ 8\beta\sum_{h'=h}^{H}\sqrt{d \iota\sum_{i=1}^{K'}\left(\sigma_{{k_i},h'}^2 + H\right)}\notag\\
    & \leq 16d^3H^5\iota +40\beta d^4H^3\iota+ 8\beta\sqrt{dH \iota\sum_{i=1}^{K'}\sum_{h=1}^{H}\left(\sigma_{{k_i},h}^2 + H\right)} \notag\\
    &\leq O\left(d^6H^6\iota_2^2 + d\sqrt{H^3\iota_2^3K'}\right),\label{eq:I_2}
\end{align}
where the second inequality is because Cauchy-Schwartz inequality and the last inequality holds due to Lemma \ref{LEMMA: TOTAL-ESTIMATE-VARIANCE}. 

Substituting \Cref{eq:I_1} and \Cref{eq:I_2} into \Cref{eq:upper-Regret}, we have
\begin{align}
     &\sum_{i=1}^{K'}\left(\qvalue_{k_i,h}(s_h^{k_i},a_h^{k_i})-\qvalue_h^{\pi^{k_i}}(s_h^{k_i},a_h^{k_i})\right)\leq  O\left(d^6H^6\iota_2^2 + d\sqrt{H^3\iota_2^3K'}\right).\label{eq:Upper}
\end{align}
By now, we have obtained both the lower and upper bounds for $\sum_{i=1}^{K'}\big(\qvalue_{k_i,h}(s_h^{k_i},a_h^{k_i})-\qvalue_h^{\pi^{k_i}}(s_h^{k_i},a_h^{k_i})\big)$ from \eqref{eq:lower-Regret} and \eqref{eq:Upper}. Finally, combining \eqref{eq:lower-Regret} and \eqref{eq:Upper}, we can derive the following constraint on $K'$:
\begin{align}
    2^n\dmin K'&\leq O\left(d^6H^6\iota_2^2 + d\sqrt{H^3\iota_2^3K'}\right).\label{eq:ppp}
\end{align}
Solving for $K'$ from \Cref{eq:ppp} completes the proof.
\end{proof}
Back to \Cref{regret}, on the event $\bigcap_{i=1}^5\cE_i$, we have
\begin{align}
    \sum_{k=1}^K\sum_{h=1}^{H} \Delta_h(s_h^k, a_h^k) &= \sum_{k=1}^K\sum_{h=1}^{H} \Delta_h(s_h^k, a_h^k) \times \sum_{n=1}^N\mathbb{I}\left[\Delta_h(s_h^k,a_h^k) \in \mathcal{I}_n\right] \notag\\
    &\leq \sum_{k=1}^K\sum_{h=1}^{H}\sum_{n=1}^N 2^n\dmin \times \mathbb{I}\left[\Delta_h(s_h^k,a_h^k) \in \mathcal{I}_n\right] \notag\\
    &\leq \sum_{h=1}^{H}\sum_{n=1}^N 2^n\dmin \times \sum_{k=1}^K\mathbb{I}\left[\vvalue_h^\star(s_h^k)-\qvalue_h^{\pi^k}(s_h^k,a_h^k) \geq 2^{n-1}\dmin\right] \notag\\
    &\leq O\left(\frac{d^2H^4\iota_2^3 }{\dmin} +d^6H^7\iota_2^2 \right). \label{uppersum}
\end{align}
Let $\delta = 1/ 18T$, then the event $\cE:=\bigcap_{i=1}^5\cE_i$ holds with probability at least $1-1/T$. Therefore, it holds that
\begin{align*}
    \mathbb{E}\left(\textnormal{Regret}(T)\right)
    &\leq  \mathbb{E} \left[\sum_{k=1}^K\sum_{h=1}^{H} \Delta_h(s_h^k, a_h^k) \bigg| \mathcal{E}\right]\mathbb{P}(\mathcal{E})  + \mathbb{E} \left[\sum_{k=1}^K\sum_{h=1}^{H} \Delta_h(s_h^k, a_h^k)\bigg|  \mathcal{E}^c\right]\mathbb{P}(\mathcal{E}^c) \nonumber\\
    &\leq O\left(\frac{d^2H^4\iota_2^3 }{\dmin} +d^6H^7\iota_2^2 \right) + \frac{1}{T} \cdot HT \nonumber\\
    & = O\left(\frac{d^2H^4 }{\dmin}\log^3\left(1+dHNK\right) +d^6H^7\log^2\left(1+dHNK\right) \right) \\
    & \leq O\left(\frac{d^2H^4 }{\dmin}\log^3\left(1+\frac{dHK}{\dmin}\right) +d^6H^7\log^2\left(1+\frac{dHK}{\dmin}\right) \right).
\end{align*}
Then we finish the proof of \Cref{regret}.

\section{\texorpdfstring{Proof of Corollary \ref{sample}}{Proof of Corollary 4.2}}
\label{sampleproof}
Following the Lemma 6.1 in \citet{he2021logarithmic}, we have the following conclusion.
\begin{lemma}\label{lemma: sumgap}
For any MDP and any $\delta \in (0,1)$, with probability at least $1-\delta$,  the following event holds
\begin{align*}
    \mathcal{E}_6 = \left\{\sum_{k=1}^{K}\left(V_1^\star(s_1^{k}) - V_1^{\pi^{k}}(s_1^{k})\right)\leq 2\sum_{k=1}^K\sum_{h=1}^H\Delta_h(s_h^k,a_h^k)+\frac{16H^2\log(HK/\delta)}{3}+2.\right\}
\end{align*}
\end{lemma}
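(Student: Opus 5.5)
The plan is to split the per-episode regret into gaps plus a martingale term, and then control that martingale with Freedman's inequality using the regret itself as the variance proxy. For a fixed episode $k$, write $\delta_h^k := (V_h^\star - V_h^{\pi^k})(s_h^k) \in [0,H]$. Since $a_h^k = \pi_h^k(s_h^k)$ is deterministic given $s_h^k$, we have $V_h^{\pi^k}(s_h^k)=Q_h^{\pi^k}(s_h^k,a_h^k)$. The Bellman equation \eqref{eq_Bellman} then gives
\begin{align*}
\delta_h^k = \Delta_h(s_h^k,a_h^k) + \mathbb{P}_{s_h^k,a_h^k,h}(V_{h+1}^\star - V_{h+1}^{\pi^k}).
\end{align*}
Writing $\mathbb{P}_{s_h^k,a_h^k,h}(\cdot) = \delta_{h+1}^k + \xi_h^k$ with $\xi_h^k := (\mathbb{P}_{s_h^k,a_h^k,h}-\mathbbm{1}_{s_{h+1}^k})(V_{h+1}^\star - V_{h+1}^{\pi^k})$ and unrolling over $h$ yields
\begin{align*}
\sum_{k=1}^K \delta_1^k = \sum_{k=1}^K\sum_{h=1}^H \Delta_h(s_h^k,a_h^k) + \sum_{k=1}^K\sum_{h=1}^H \xi_h^k .
\end{align*}
Ordered by $(k,h)$ lexicographically, $\{\xi_h^k\}$ is a martingale difference sequence with respect to the natural filtration, and $|\xi_h^k|\le H$. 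The policy $\pi^k$ is determined before episode $k$, so $V^{\pi^k}$ is measurable at the start of the episode.

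To control the martingale, we apply Freedman's inequality rather than Azuma–Hoeffding, because we want only an additive $H^2\log$ term, not a $\sqrt{K}$ term. The conditional variance satisfies
\begin{align*}
\mathbb{V}_{s_h^k,a_h^k,h}(V_{h+1}^\star - V_{h+1}^{\pi^k}) \le \mathbb{P}_{s_h^k,a_h^k,h}(V_{h+1}^\star - V_{h+1}^{\pi^k})^2 \le H\,\mathbb{P}_{s_h^k,a_h^k,h}(V_{h+1}^\star - V_{h+1}^{\pi^k}) \le H\,\delta_h^k ,
\end{align*}
where the last step uses $\Delta_h\ge 0$. Summing over $h$ gives $\sum_h \delta_h^k \le H\delta_1^k$, since $\delta_h^k$ is bounded by $\delta_1^k$ only in expectation; to avoid that issue, the cleaner route is to keep the variance proxy as $W:=H\sum_{k,h}\mathbb{P}_{s_h^k,a_h^k,h}(V_{h+1}^\star - V_{h+1}^{\pi^k})$. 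Freedman's inequality with a peeling/union bound over dyadic values of $W$ (about $\log(HK)$ levels, which accounts for the $\log(HK/\delta)$ and the additive constant $2$) gives, with probability $1-\delta$,
\begin{align*}
\sum_{k,h}\xi_h^k \le \sqrt{2W\log(HK/\delta)} + \tfrac{2H}{3}\log(HK/\delta) + 1 .
\end{align*}

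The key step, and the main obstacle, is to relate $W$ back to the quantity being bounded. Note that $\mathbb{P}_{s_h^k,a_h^k,h}(V_{h+1}^\star-V_{h+1}^{\pi^k}) = \delta_h^k - \Delta_h(s_h^k,a_h^k)\le \delta_h^k$. Define $X:=\sum_{k,h}\delta_h^k$; the same telescoping started at every $h$ shows $X \le H\sum_{k,h}\Delta_h + \sum_{k,h}(\text{suffix martingale sums})$. It is simpler to apply the argument to $X$ directly. Summing the decomposition for every starting $h$ gives
\begin{align*}
X = \sum_{k,h}(h)\Delta_h(s_h^k,a_h^k) + \sum_{k,h} h\,\xi_h^k ,
\end{align*}
whose martingale has increments bounded by $H^2$ and variance at most $H^3\cdot$ (stuff) $\lesssim H^2\cdot H X$. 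Solving the resulting self-bounding inequality $x\le a+\sqrt{bx}+c$ then yields $x\le 2a+b+2c$. Applying this first to $X$ and then to $\sum_k\delta_1^k$, using $W\le H X$, and using AM-GM on $\sqrt{2W\log}$ to split it as $\tfrac12(\text{regret-type term}) + O(H^2\log)$, should give $\sum_k\delta_1^k \le 2\sum_{k,h}\Delta_h + \tfrac{16}{3}H^2\log(HK/\delta)+2$. The constant $16/3$ is what I expect from Freedman's $2/3$ range term combined with the doubling. If tracking $X$ gives an extra factor of $H$, the fallback is to follow Lemma 6.1 of \citet{he2021logarithmic} exactly: bound the variance by $H\sum_{k,h}\mathbb{P}(\cdot)$ directly in terms of $\sum_k\delta_1^k$ plus gaps, which is the route the stated constants indicate.
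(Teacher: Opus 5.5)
Your two-stage self-bounding argument does close, but only up to constants, and it takes a detour that the cited proof avoids. The paper gives no proof of its own; it invokes Lemma 6.1 of \citet{he2021logarithmic}, and the stated constants $16/3$ and $+2$ match an episode-level argument.

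The obstacle you identified is real: $H\sum_{k,h}\mathbb{P}_{s_h^k,a_h^k,h}(V_{h+1}^\star - V_{h+1}^{\pi^k})$ is not pathwise controlled by $H^2\sum_k\delta_1^k$. It disappears if you group the $H$ step increments of each episode into one. Indeed, $\sum_h \xi_h^k = \delta_1^k - Y_k$ with $Y_k := \sum_{h=1}^H \Delta_h(s_h^k,a_h^k)$. Since $\mathbb{E}[Y_k \mid \mathcal{F}_{k-1}] = \delta_1^k$ (the identity behind Lemma~\ref{expectedrmain}) and $0 \le Y_k \le H^2$, the episode-level increment has conditional variance at most $\mathbb{E}[Y_k^2 \mid \mathcal{F}_{k-1}] \le H^2 \delta_1^k$. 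So the variance proxy is exactly $H^2 R$ with $R = \sum_k \delta_1^k$. Now peel over $R \in (2^{j}, 2^{j+1}]$ for $j \le \lceil \log(HK) \rceil$; the case $R \le 2$ is the ``$+2$''. Freedman then gives $R - \sum_k Y_k \le \sqrt{4H^2 R L} + \tfrac{2}{3} H^2 L$, and $\sqrt{4H^2RL} \le R/2 + 2H^2 L$ yields $R \le 2\sum_k Y_k + \tfrac{16}{3} H^2 L$ in a single step. This version also works for randomized $\pi^k$. Your decomposition instead uses $V_h^{\pi^k}(s_h^k) = Q_h^{\pi^k}(s_h^k,a_h^k)$, which is harmless here only because LSVI-UCB++ acts greedily.

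What your route does not justify is the specific constants in $\mathcal{E}_6$. Use the same peeled Freedman bound. Stage one gives $X \le 2H\sum_{k,h}\Delta_h + 4H^3 L + O(H^2 L)$. The $\Theta(H^4 L)$ part of $W \le H(X - \sum_{k,h}\Delta_h)$ then adds roughly another $4H^2 L$ after the square root. This comes on top of the $2H^2 L$ from the AM--GM split, so the coefficient of $H^2 L$ ends up larger than $16/3$; optimizing the splits still leaves it above $5.7$. You also pay a second union bound and $O(\log(H^5K))$ peeling levels, which must be absorbed into $\log(HK/\delta)$.

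For the paper this is immaterial, since $\mathcal{E}_6$ is only used inside $O(\cdot)$. Still, your sentence asserting the constant $16/3$ is not supported. Your stated fallback would bound $H\sum_{k,h}\mathbb{P}(\cdot)$ directly by $\sum_k \delta_1^k$ plus gaps. That is not what \citet{he2021logarithmic} do, and it is exactly the pathwise step you correctly noted fails. The fix is the per-episode grouping above, not a different bound on the step-level variance.
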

Set $\delta \leftarrow \delta/38$, then the event $\bigcap_{i=1}^6 \cE_i$ holds with probability at least $1-\delta/2$. On the event $\bigcap_{i=1}^6 \cE_i$, we have
\begin{align}
\label{sumv}
    \sum_{k=1}^{K}\left(V_1^\star(s_1^{k}) - V_1^{\pi^{k}}(s_1^{k})\right)&\leq 2\sum_{k=1}^K\sum_{h=1}^H\Delta_h(s_h^k,a_h^k)+\frac{16H^2\log(38HK/\delta)}{3}+2 \notag\\
    &\leq O\left(\frac{d^2H^4\iota_2^3 }{\dmin} +d^6H^7\iota_2^2 \right),
\end{align}
where the first inequality holds due to $\cE_6$ in Lemma \ref{lemma: sumgap} and the last inequality holds due to \Cref{uppersum}. Similar to \citet{bai2019provably}, now we define a stochastic policy $\hat{\pi}$ as
\begin{equation*}
  \hat{\pi} = \frac{1}{K}\sum_{k=1}^K \pi^k.
\end{equation*}
Here, $\pi^k$ is the policy executed in episode $k$ of LSVI-UCB++. By definition, with probability at least $1-\delta/2$ we have
\begin{equation*}
  \mathbb{E}\left[V_1^\star(s_1) - V_1^{\hat{\pi}}(s_1)\right] =
  \frac{1}{K}\sum_{k=1}^K \left(V_1^\star(s_1) - V_1^{\pi^k}(s_1) \right) \le O\left(\frac{d^2H^4\iota_2^3 }{\dmin K} +\frac{d^6H^7\iota_2^2}{K} \right).
\end{equation*}
Furthermore, by the Markov inequality, we have with probability at least $1-\delta/2$ that
\begin{equation*}
  V_1^\star(s_1) - V_1^{\hat{\pi}}(s_1) \leq \mathbb{E}\left[V_1^\star(s_1) - V_1^{\hat{\pi}}(s_1)\right]/\delta \leq  O\left(
    \frac{d^2H^4\iota_2^3 }{\dmin K\delta} +\frac{d^6H^7\iota_2^2}{K\delta}\right).
\end{equation*}
The last inequality holds with probability at least $1-\delta$ due to \Cref{sumv}. Then taking 
$$K = O\left(\frac{d^2 H^4}{\Delta_{\min} \delta\epsilon} \log^3 \left(\frac{d H}{\Delta_{\min} \delta\epsilon}\right) + \frac{d^6 H^7}{\delta\epsilon} \log^2 \left(\frac{d H}{\Delta_{\min} \delta\epsilon}\right) \right)$$ bounds the $V_1^\star(x_1) - V_1^{\hat{\pi}}(x_1)$ by $\epsilon$.

\section{\texorpdfstring{Proof of Theorem \ref{concurrent-RL}}{Proof of Theorem 4.3}}
\label{concurrent}
To learn an $\epsilon$-optimal policy, by Corollary \ref{sample}, with probability at least $1-\delta$, the total number of episodes collected from the concurrent LSVI-UCB++ with $M$ agents is at most 
\begin{equation}
\label{upperk}
    K_\epsilon \leq O\left(\frac{d^2 H^4}{\Delta_{\min} \delta\epsilon} \log^3 \left(\frac{d H}{\Delta_{\min} \delta\epsilon}\right) + \frac{d^6 H^7}{\delta\epsilon} \log^2 \left(\frac{d H}{\Delta_{\min} \delta\epsilon}\right) \right)
\end{equation}
Suppose that during these $K_\epsilon$ episodes, the policy is switched $N_\epsilon$ times. Let $e_t$ denote the number of episodes executed between the $(t-1)$-th and $t$-th policy switches. Then the number of concurrent rounds required for this segment is $\lceil e_t / M \rceil$. Let $R$ denote the total number of concurrent rounds. Then we have
\begin{equation}
\label{upperr}
  R = \sum_{t=1}^{N_\epsilon}
  \left\lceil{\frac{e_t}{M}} \right \rceil \le \sum_{t=1}^{N_\epsilon} \left(1 +
    \frac{e_t}{M}\right) \le N_\epsilon + \frac{K_\epsilon}{M}.
\end{equation}
By Theorem~5.1 of \citet{he2023nearly}, in $K_\epsilon$ episodes, the single-agent LSVI-UCB++ algorithm performs at most
\begin{equation}
\label{upperN}
  N_\epsilon \leq O\left( dH\log(1+KH^2)\right)
\end{equation}
policy switches. Plugging \Cref{upperk,upperN} into \Cref{upperr}, we obtain
\begin{equation*}
  R \le O\left(dH\log(1+KH^2)+\frac{d^2 H^4}{M\Delta_{\min} \delta\epsilon} \log^3 \left(\frac{d H}{\Delta_{\min} \delta\epsilon}\right) + \frac{d^6 H^7}{M\delta\epsilon} \log^2 \left(\frac{d H}{\Delta_{\min} \delta\epsilon}\right) \right),
\end{equation*}
which completes the proof.

\end{document}